\newtheorem{proposition}{Proposition}
\DeclareMathOperator*{\argmin}{\arg\!\min}
\DeclareMathOperator*{\argmax}{\arg\!\max}
\newcommand*{\R}{\mathbb{R}}
\newcommand*{\E}{\mathbb{E}}
\newcommand*{\Z}{\mathcal{Z}}
\newcommand*{\X}{\mathcal{X}}
\newcommand*{\bm}[1]{\mathbf{#1}}
\newcommand*{\N}{\mathcal{N}}
\newcommand*{\HH}{\mathcal{H}}
\newcommand*{\I}{\mathbb{I}}
\newcommand*{\C}{\mathcal{C}}
\newcommand*{\M}{\mathcal{M}}
\newcommand{\inner}[2]{\langle#1,#2\rangle}
\newcommand{\tangent}[2]{\mathcal{T}_{#1}{#2}}
\newcommand{\T}{\intercal}
\newcommand{\logmap}[2]{\text{Log}_{#1}({#2})}
\newcommand{\expmap}[2]{\text{Exp}_{#1}({#2})}
\newcommand{\vectorize}[1]{\text{vec}[#1]}
\newcommand{\parder}[2]{\frac{\partial #1}{\partial #2}}
\icmltitlerunning{A prior-based approximate latent Riemannian metric}
\begin{document}

\twocolumn[
\icmltitle{A prior-based approximate latent Riemannian metric}



\icmlsetsymbol{equal}{*}

\begin{icmlauthorlist}
\icmlauthor{Georgios Arvanitidis}{mpi}
\icmlauthor{Bogdan Georgiev}{fraun}
\icmlauthor{Bernhard Sch{\"o}lkopf}{mpi}
\end{icmlauthorlist}

\icmlaffiliation{mpi}{Max Planck Institute for Intelligent Systems, T{\"u}bingen, Germany}
\icmlaffiliation{fraun}{Fraunhofer IAIS, ML2R, Sankt Augustin, Germany}

\icmlcorrespondingauthor{Georgios Arvanitidis}{gear@tuebingen.mpg.de}

\icmlkeywords{Riemannian manifolds, latent space, generative models, prior learning}

\vskip 0.3in
]



\printAffiliationsAndNotice{}  

\begin{abstract}
    Stochastic generative models enable us to capture the geometric structure of a data manifold lying in a high dimensional space through a Riemannian metric in the latent space. However, its practical use is rather limited mainly due to inevitable complexity. In this work we propose a surrogate conformal Riemannian metric in the latent space of a generative model that is simple, efficient and robust. This metric is based on a learnable prior that we propose to learn using a basic energy-based model. We theoretically analyze the behavior of the proposed metric and show that it is sensible to use in practice. We demonstrate experimentally the efficiency and robustness, as well as the behavior of the new approximate metric. Also, we show the applicability of the proposed methodology for data analysis in the life sciences.

\end{abstract}

\section{Introduction}
\label{sec:intro}

The \emph{manifold hypothesis} states that in a high dimensional space the data has a low dimensional nonlinear geometric structure. One way to compute distances that respect this structure is by using discrete shortest paths on neighborhood graphs \citep{tenenbaum:science:2000}. Nevertheless, this strategy does not allow to perform continuous analysis, as for example Riemannian statistics \citep{pennec:jmiv:2006}. Hence, methods based on latent variable models have been developed that enables us to compute continuous shortest paths.

Generative models provide a way to estimate the probability density of the given data lying in an ambient space $\X$. While most of the models utilize a latent space $\Z$, the Variational Auto-Encoder (VAE) also learns a low dimensional representation of the data \citep{rezende:icml:2014, kingma:iclr:2014}. Unfortunately, using straight lines to compute distances in the latent space is misleading, and in addition, is not identifiable \citep{hauberg:only:2018}.


\begin{figure}[t]
	\centering
	\includegraphics[width=0.8\columnwidth]{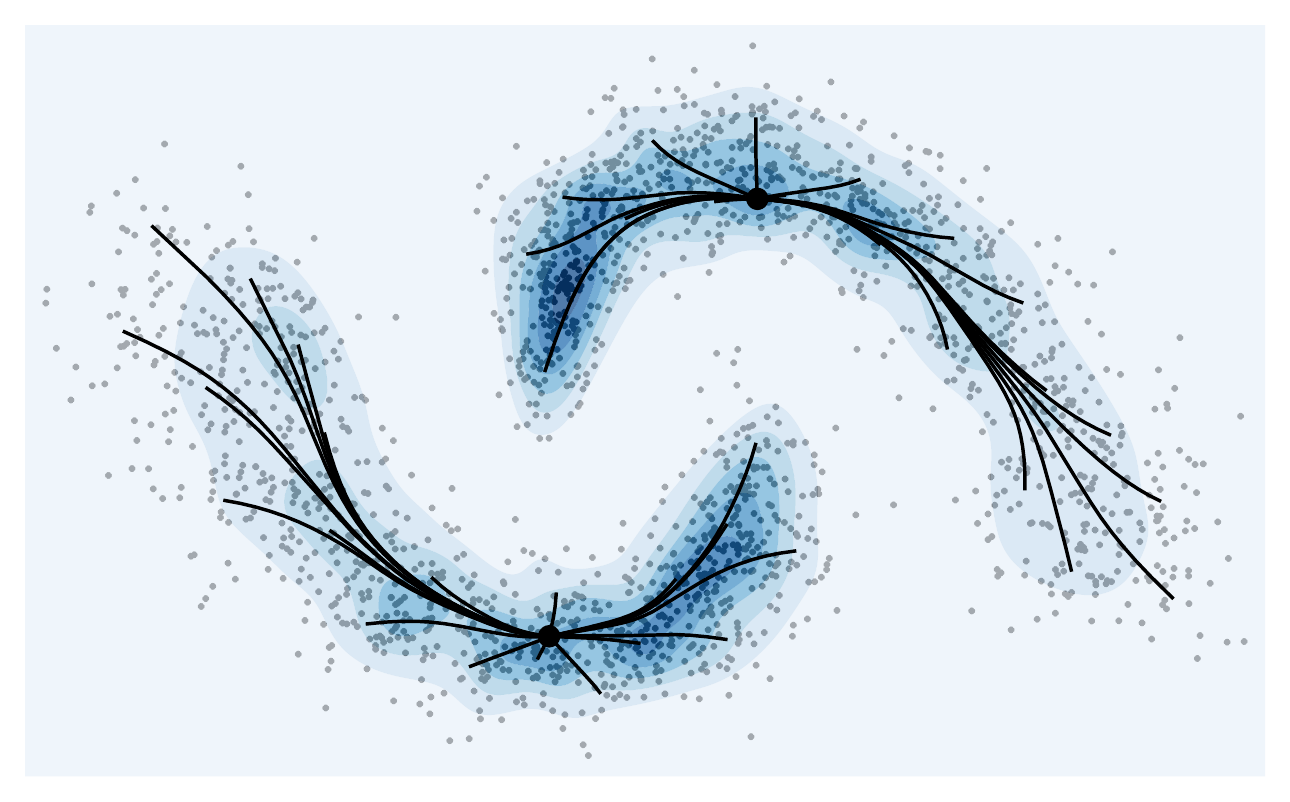}
	\caption{We propose a conformal Riemannian metric that is related to the learnable prior of a latent variable model. Intuitively, shortest paths ({\protect\raisebox{2pt}{\tikz{\draw[black,solid,line width=1pt](0,0) -- (2mm,0);}}}) prefer regions in latent space with high density.}
\end{figure}

One solution is to compute shortest paths in $\Z$ using a Riemannian metric that is induced by the generator \citep{tosi:uai:2014, arvanitidis:iclr:2018}. This gives a natural and identifiable distance measure, since it is actually computed directly on the data manifold in $\X$. However, we need to estimate meaningfully the generator's uncertainty to properly capture the geometry in $\Z$. While this approach allows us to compute continuous and principled distances respecting the data manifold, it is not particularly efficient and robust. Specifically, one issue is that the metric is based on the generator's Jacobian and its derivative, which is typically expensive and complex. Also, the uncertainty of the generative process is modeled meaningfully using kernel methods \citep{arvanitidis:iclr:2018} which limits further the robustness of the metric. Thus, for practical purposes it is sensible to search for a useful approximate Riemannian metric.


As regards VAEs, several improvements have been proposed and we are interested in learnable priors. Usually, in a VAE a simple prior over $\Z$ is chosen, as the unit Gaussian. This is not flexible and expressive enough in order to capture the structure of the data representations, which potentially might be complex and multimodal. Therefore, learnable priors have been proposed that adapt to the distribution of the latent representations \citep{tomczak:aistats:2018}.

{In this work} we propose a methodology to approximate the induced Riemannian metric in $\Z$ with a locally conformally flat surrogate metric that is based on a learnable prior. In particular, we first propose to utilize a basic energy-based model as a learnable prior in a VAE. Then, we define a conformal Riemannian metric in $\Z$ that is inverse proportional to the prior. This constitutes a robust metric that is also highly efficient both in computational speed, as well as in modeling capabilities. Furthermore, we study theoretically when the proposed metric is a sensible approximation to the Riemannian metric that is induced by the generator. In the experiments, we compare the behavior of the two metrics, and we also show potential applications in life sciences.


\section{Some basics of Riemannian geometry}
\label{sec:geometry}

We consider Riemannian manifolds \citep{lee:2018, docarmo:1992}, which are smooth spaces where one can compute lengths between points. An intuitive way to think of a $d$-dimensional smooth manifold $\M$ is as an embedded smooth $d$-dimensional hypersurface in a higher dimensional Euclidean ambient space $\X=\R^D$, which locally is homeomorphic to a $d$-dimensional Euclidean space. In this perspective, one considers the tangent space $\tangent{\bm{x}}{\M}$ at a point $\bm{x}\in\M$ as a $d$-dimensional vector space in $\X$ that is tangential to the hypersurface at the point $\bm{x}$. Technically, a manifold is covered by a collection of \emph{charts} (local parametrizations), and for simplicity we assume that a ``sufficiently large'' \emph{global chart} exists (a global parametrization of the hypersurface). We denote this global chart by the mapping $h:{\HH}\subseteq\R^d \rightarrow \M\subset\X$. By definition ${h}(\cdot)$ is a diffeomorphism onto its image and we say that ${\HH}$ are the intrinsic coordinates of the manifold.

A \textit{Riemannian metric} is a positive definite matrix that changes smoothly throughout the space. A smooth manifold $\M$ together with a Riemannian metric constitutes a Riemannian manifold. Let an embedded $\M\subset\X$ and a Riemannian metric ${\bm{M}_\X}:\X\rightarrow \R^{D\times D}_{+}$, which defines a local inner product at each tangent space $\tangent{\bm{x}}{\M}$ between two tangent vectors $\bm{u},\bm{v}\in\R^D$ at $\bm{x}\in\M$ as $\inner{\bm{u}}{\bm{v}}_{\bm{x}} = \inner{\bm{u}}{{\bm{M}_{\X}}(\bm{x})\bm{v}}$. The global parametrization ${h}(\cdot)$ allows to map a vector $\overline{\bm{v}}\in{\HH}$ to a unique tangent vector $\bm{v}\in\tangent{\bm{x}}{\M}$ using the Jacobian matrix $\bm{J}_{h}:{\HH} \rightarrow \R^{D\times d}$ as $\bm{v} = \bm{J}_{h}({\bm{z}}) \overline{\bm{v}}$. Since ${h}(\cdot)$ is smooth, a Riemannian metric ${\bm{M}_{\HH}}:{\HH} \rightarrow \R^{d\times d}_+$ is induced in ${\HH}$ as ${\bm{M}_{\HH}}({\bm{z}}) = \bm{J}_{{h}}({\bm{z}})^{\T} {\bm{M}_\X}({h}({\bm{z}}))  \bm{J}_{{h}}(\overline{\bm{z}})$. Commonly, we consider ${\bm{M}_\X}(\cdot) = \I_D$, which implies that the metric $\bm{M}_{\HH}(\cdot)$ in $\HH$ is induced by the embedding.

Essentially, the Riemannian metric shows how the distances change in an infinitesimal region. This enables us to compute the length of a curve $\gamma:[0,1]\rightarrow \M\subset\X$ as
\begin{equation}
	\label{eq:curve_length}
	\int_0^1 \sqrt{\inner{\dot{\gamma}(t)}{\dot{\gamma}(t)}_{\gamma(t)}} dt = \int_0^1 \sqrt{\inner{\dot{c}(t)}{\dot{c}(t)}_{c(t)}} dt,
\end{equation}
where $\gamma(t) = h(c(t))$ and $\dot{\gamma}(t) = \partial_t \gamma(t)= \bm{J}_h( c(t))\dot{c}(t) \in \tangent{\gamma(t)}{\M}$ represents the velocity of the curve. Hence, we can compute the length of $\gamma(t)$ in intrinsic coordinates $c(t)\in\HH$. In addition, we can find the shortest path between two points in $\HH$ by applying the Euler-Lagrange equations at the curve energy. This gives a system of second order non-linear ordinary differential equations (ODEs) $\ddot{c}(t) = F(\dot{c}(t), c(t), t)$ and for the system see App.~\ref{app:riemannian_geometry}. Intuitively, the shortest paths are pulled towards areas of $\HH$ where $\bm{M}_{\HH}(\cdot)$ is small. So, having $\HH$ together with a Riemannian metric $\bm{M}_{\HH}(\cdot)$ enables us to compute shortest paths between the corresponding points on $\M$. These curves are known as \emph{geodesics}.



\begin{figure}[t]
	\centering
	\includegraphics[width=0.57\columnwidth]{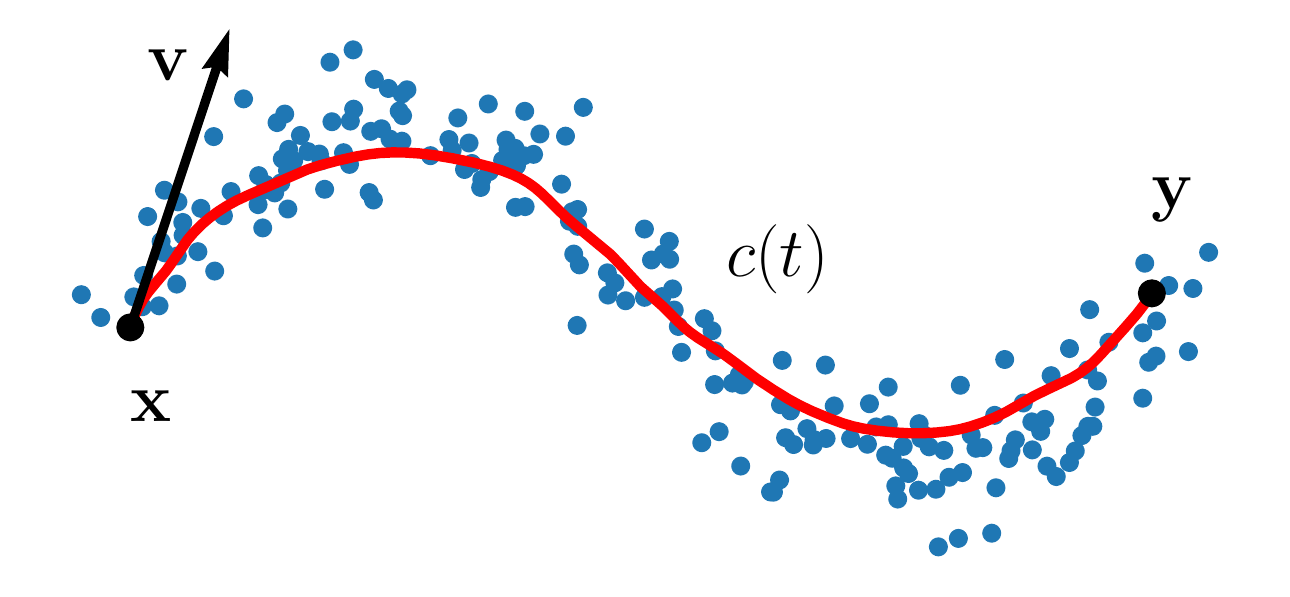}
	\caption{A shortest path $c(t)$ with a tangent vector $\bm{v}$.}
	\label{fig:geodesic_example}
\end{figure}

We say that a Riemannian metric $\bm{M}(\cdot)$ is conformal to another metric $\widetilde{\bm{M}}(\cdot)$ when for a positive smooth function $m:\HH \rightarrow \R_{>0}$ we have  $\widetilde{\bm{M}}(\bm{z}) = m(\bm{z})\cdot \bm{M}(\bm{z})$. Intuitively, the conformal metric is simply a scaling of $\bm{M}(\cdot)$, while the simplest example is to consider $\widetilde{\bm{M}}(\bm{z}) = m(\bm{z})\cdot \I_d$. Such a metric has some appealing properties such as interpretability and efficiency. More specifically, the corresponding ODE system for computing a shortest path simplifies to
\begin{equation}
	\label{eq:conformal_metric_ode}
	\ddot{c}(t) = \frac{\nabla m(c(t)) \dot{c}(t)^\intercal \dot{c}(t) - 2 \dot{c}(t)\nabla m(c(t))^\intercal \dot{c}(t)}{2~m(c(t))},
\end{equation}
where the $\nabla m:\HH\rightarrow \R^d$ is the gradient of $m(\cdot)$.  The interpretability implies that we can easily control the shortest paths behavior in $\HH$ by designing $m(\cdot)$ accordingly. 



We can do computations on $\M$ or equivalently in $\HH$ with the following operators. The \emph{logarithmic map} takes two points $\bm{x},~\bm{y}\in\M$ and returns a tangent vector $\bm{v}=\logmap{\bm{x}}{\bm{y}} \in\tangent{\bm{x}}{\M}$. The inverse operator is the \emph{exponential map} that takes $\bm{x}\in\M$ and $\bm{v}\in\tangent{\bm{x}}{\M}$ and returns a geodesic $\gamma(t) = \expmap{\bm{x}}{t\cdot\bm{v}}$ with $\gamma(1)=\bm{y}$. For an example see Fig.~\ref{fig:geodesic_example}, while for additional details on geometry see App.~\ref{app:riemannian_geometry}.



The manifold hypothesis assumes that the data lie uniformly near an embedded $\M\subset\X$. However, a global parametrization $h(\cdot)$ rarely exists and $d$ is unknown, especially, for a given set of finite and noisy observations. So, a practical method to capture the geometry of $\M$ is to learn a function $g:\Z\subseteq\R^{d'} \rightarrow\X$ to approximate the data, which is smooth but not constrained to be a diffeomorphism as $h(\cdot)$, and also, in general $\Z \neq \HH$. As before, using $\bm{J}_g(\cdot)$ we are able to compute a $\bm{M}_{\Z}(\cdot)$ with the desired meaningful behavior to be small in parts of $\Z$ that correspond to regions of $\M$ with non-zero data density. This is known as the \emph{pull-back} metric, and in practice, we use generative models to learn $g(\cdot)$. Also, as we discuss in Sec.~\ref{sec:learned_latent_riemannian_manifolds}, we need $g(\cdot)$ to be a stochastic function with meaningful uncertainty estimates in order to properly capture the geometry of the data manifold in $\Z$.

\section{Riemannian Metric Learning}
\label{sec:riemannian_metric_learning}

Apart from the pull-back metric, there is also another way to learn a Riemannian metric directly from the given data. Let assume $N$ data points $\bm{x}_{1:N}\in\X$. We can construct a parametric Riemannian metric $\bm{M}_\lambda:\X \rightarrow \R_+^{D \times D}$ with parameters $\lambda$ directly from the observations, which enables us to compute shortest paths that respect their underlying geometric structure in $\X$. Essentially, we want to pull these paths towards the regions of $\X$ with non-zero data density. This implies that the metric should be small near the data and to increase as moving further from them. In practice, this simply changes the way we measure distances in $\X$ and a conceptual example can be seen in Fig.~\ref{fig:geodesic_example}.

One approach is to consider a predefined set of metric tensors centered at some base points in $\X$, and then using a kernel to compute the Riemannain metric as a weighted sum of the predefined tensors \citep{hauberg:nips:2012}. In a similar spirit, \citet{arvanitidis:neurips:2016} used a kernel to compute the Riemannian metric as the inverse local diagonal covariance matrix of the data. Also, \citet{arvanitidis:aistats:2019} proposed a simple method to construct conformal Riemannian metrics directly from the data by multiplying a positive function in $\X$ with the Euclidean metric. For details about these metrics see App.~\ref{app:riemannian_metrics}. However, in these approaches the parameters $\lambda$ are typically fixed, and in general, it is a challenging task to find the best $\lambda$ \citep{arvanitidis:gsi:2017}.


In contrast, \citet{lebanon:uai:2003} proposed a simple methodology in order to estimate the parameters $\lambda$ of a predefined parametric Riemannian metric $\bm{M}_\lambda(\cdot)$ directly from the data. First the density function $p_\lambda(\bm{x}) \propto (\sqrt{|\bm{M}_\lambda (\bm{x})|})^{-1}$ is defined and assuming that the data are independent and identically distributed we get the likelihood
\begin{equation}
\label{eq:metric_density}
	\lambda^* = \argmax_\lambda \prod_{n=1}^N \frac{(\sqrt{|\bm{M}_\lambda (\bm{x}_n)|})^{-1}}{\int_{\X} (\sqrt{|\bm{M}_\lambda (\bm{x}')|})^{-1} d\bm{x}'},
\end{equation}
which we can optimize using maximum likelihood estimation. Intuitively, the density should be high near the given data, which directly means that the metric should be small, while the regularizer does not allow the metric to become zero. The quantity $ \sqrt{|\bm{M}_\lambda (\bm{x})|}$ is the magnification factor, which is a scaling factor for the Lebesgue $d\bm{x}$ and represents the local distrortion of the distance.


Obviously, this approach for Riemannian metric learning has some disadvantages. Most importantly, we have to define explicitly the parametric form of the metric before the training, which potentially limits its flexibility. Also, in higher dimensions it is hard to guarantee the actual behavior and usability of such an ad-hoc metric. In addition, the optimization is challenging especially in high dimensions due to the normalization constant. Therefore, this methodology is mostly limited to low dimensional spaces, where it is easy to define a metric and the data manifold is simple. However, the actual formulation motivates us to relate a density function with a conformal metric (see Sec.~\ref{sec:our_proposed_metric}).

\section{Generative Models}
\label{sec:generative_models}

An efficient way to approximate the underlying probability density function of the observations $\bm{x}_{1:N} \in \X$ is to learn a generative model. Recent advances in deep generative modeling showed a great performance in this task. In particular, there are several types of generative models such as Variational Auto-Encoders (VAEs) \citep{kingma:iclr:2014, rezende:icml:2014}, Generative Adversarial Networks (GANs) \citep{goodfellow:neurips:2014} and flow based models \citep{dinh:2016:arxiv}. In this work, we are interested in the VAE model, where a low dimensional latent space $\Z$ is utilized in order to construct an explicit density model in the ambient space $\X$. Additionally, we can get in the latent space a low dimensional representation of the data.

Specifically, we use a likelihood function $p_\theta(\bm{x} | \bm{z})$ that is typically chosen to be a Gaussian $\N(\bm{x} ~|~ \mu_\theta(\bm{z}),~\I_D \cdot \sigma^2_\theta(\bm{z}))$ or a $\text{Bernoulli}(\bm{x} ~|~ \mu_\theta(\bm{z}))$ in case the data being binary, and a prior distribution over the latent variables $p(\bm{z})$. Commonly, the prior is chosen to be a simple distribution as $\N(0, \I_d)$. The functions that parametrize the likelihood $\mu_\theta:\Z\rightarrow \X$ and $\sigma^2_\theta:\Z \rightarrow \R_{>0}^D$ are usually deep neural networks. One way to learn the parameters of these functions is by using an approximate posterior $q_\phi(\bm{z}|\bm{x}) = \N(\bm{z} ~|~ \mu_\phi(\bm{x}),~\I_d\cdot \sigma^2_\phi(\bm{x}) )$, where again $\mu_\phi:\X\rightarrow \Z$ and $\sigma^2_\phi:\X \rightarrow \R_{>0}^d$ are deep neural networks. Then, we can derive using Jensen's inequality the evidence lower bound as 
\begin{equation}
	\label{eq:vae_elbo}
	\E_{q_\phi(\bm{z} | \bm{x})}[\log p_\theta(\bm{x} | \bm{z})] - \text{KL}[q_\phi(\bm{z}| \bm{x}) || p(\bm{z})],
\end{equation}
which is a lower bound to the log-likelihood for a point $\bm{x}$. Now, we are able to optimize this objective function using the reparametrization trick $\bm{z} = \mu_\phi(\bm{x}) + \text{diag}(\varepsilon)\cdot \sigma_\phi(\bm{x})$, where $\varepsilon\sim \N(0,\I_d)$, which allows to compute stochastic gradients with low variance \citep{mohamed:jmlr:2020}.

Even if the standard VAE provides a successful way to approximate the data density, many variants have been proposed that improve the basic model in several aspects. One line of work proposes to use more flexible approximate posteriors, which potentially improve the lower bound \citep{rezende:icml:2015, titsias:aistats:2019}. Another line of work provides tighter lower bounds to the log-likelihood using importance sampling techniques \citep{burda:arxiv:2016}. Finally, some approaches suggest instead of using a simple prior for the latent variables to learn a flexible prior, which desirably adapts better to the latent representations as in an empirical Bayes setting \citep{tomczak:aistats:2018, bauer:aistats:2019}. Intuitively, the behavior of the learnable prior in $\Z$ is similar to a meaningful Riemannian metric, and thus, we focus in this type of models.

\subsection{Prior learning in Variational Auto-Encoders}

One of the first successful methodologies to learn the prior in a VAE is the VampPrior \citep{tomczak:aistats:2018}. In this approach the learnable prior is chosen to be the aggregated posterior $p(\bm{z}) \triangleq q(\bm{z}) = \int_\X q(\bm{z} | \bm{x}) p(\bm{x}) d\bm{x}$, where $p(\bm{x})$ the true density. In a standard VAE this is essentially a huge Gaussian mixture model since typically we approximate this integral using the training data $p(\bm{z}) \approx \frac{1}{N}\sum_{n=1}^N q(\bm{z} | \bm{x}_n)$. Of course, such a prior can easily overfit, so the authors proposed to use instead only $K$ learnable inducing points $\bm{x}_{1:K}$. This simple prior is empirically shown to be very effective, however, when the data dimension is high, training the inducing points is computationally expensive. Also, is hard to chose the number $K$ of the inducing points.

One variant is to learn implicitly the VampPrior using a discriminator, which does not need to set $K$ \citep{takahashi:aaai:2019}. However, with this method we can only get samples from the prior while an analytic formula does not exist. Similarly, \citet{klushyn:neurips:2019} proposed a hierarhical prior as $p(\bm{z}) = \int p(\bm{z}|\zeta)\bm(\zeta) d\zeta$, which in practice is approximated by $K$ samples from the hyper-prior $\zeta_k \sim p(\zeta)$, together with a complicated constrained optimization strategy specifically designed for this problem. Again here, an analytic formula for the prior is not easy to be derived, but only samples. 


Another set of approaches is related to the energy-based models. \citet{bauer:aistats:2019} proposed a prior where an acceptance function is trained to accept or reject samples from a base prior as the unit Gaussian. \citet{pang:arxiv:2020a} proposed to learn an energy-based model prior directly by optimizing the log-likelihood of the data requiring iterative expensive Markov Chain Monte Carlo sampling in the latent space for the prior and the true posterior. Finally, \citet{aneja:arxiv:2020} proposed an energy-based model prior trained by contrasting samples from the aggregated posterior to samples coming from a base prior, but this prior is trained post-hoc. These approaches motivate our proposed prior.


\subsection{A learnable prior for Variational Auto-Encoders}
\label{sec:our_proposed_prior}

Let a function $f_\psi:\Z \rightarrow \R$ parametrized as a deep neural network and a base distribution $p(\bm{z}) = \N(0, \I_d)$. We use as learnable prior the energy-based model \citep{lecun:energy:2006}
\begin{equation}
\label{eq:our_prior}
	\nu_\psi(\bm{z}) = \frac{\exp(f_\psi(\bm{z})) p(\bm{z})}{\C},
\end{equation}
where $\C = \int_\Z \exp(f_\psi(\bm{z})) p(\bm{z}) d\bm{z}$ is the normalization constant. Then, we plug this prior in the evidence lower bound of the VAE, so Eq.~\ref{eq:vae_elbo} now becomes
\begin{align}
\label{eq:our_new_elbo}
	\E_{q_\phi(\bm{z} | \bm{x})}&[\log p_\theta(\bm{x} | \bm{z})] - \text{KL}[q_\phi(\bm{z}| \bm{x}) || p(\bm{z})]\nonumber\\
	+ &\E_{q_\phi(\bm{z} | \bm{x})}[f_\psi(\bm{z})] - \log(\C),
\end{align}
which can be optimized using stochastic gradients as well. Obviously, the challenge in this bound is to estimate the normalization constant. However, since the dimensionality of $\Z$ is usually low this allows us to estimate the normalization constant $\C$ relying on basic Monte Carlo as $\C = \int_\Z \exp(f_\psi(\bm{z})) p(\bm{z}) d\bm{z}\approx \frac{1}{S}\sum_{s=1}^S \exp(f_\psi(\bm{z}_s))$ where $\bm{z}_s \sim p(\bm{z})$. Nevertheless, more sophisticated techniques for estimating the constant can be used.

Even if this is a rather simple prior, it comes with some desirable properties. First, the behavior is easy to interpret, as the prior increases near the latent codes of the data, while in contrast, the normalization constant tries to reduce its value in regions of $\Z$ with no codes. This implicit regularization does not allow the model to overfit the latent codes, which is directly related to the effectiveness of the integration. Also, contrastive techniques can be used in order to control even further the prior fitting i.e. far from latent codes to push the prior towards zero. Also, the KL divergence of the standard VAE still appears in the objective. This is beneficial because the encoder is still encouraged to provide a meaningful structure for the latent codes, while in a different case the representations could be placed sparsely without any structure in $\Z$ depending on the flexibility of $f_\psi(\cdot)$. 

Clearly, our proposed energy-based model prior is a rather simple choice, while being closely related to more advanced models which aim to improve generative modeling \citep{pang:arxiv:2020a, aneja:arxiv:2020}. However, to the best of our knowledge, such a prior has not be used in the standard VAE setting. Also, our main motivation for proposing this prior is not to improve the generative modeling performance, but instead to have a flexible prior that adapts to the data, which is efficient to evaluate and derivate. As we show in Sec.~\ref{sec:our_proposed_metric} this prior is the base to define a conformal metric in $\Z$, which approximates the geometry of the data manifold, while being on the same time efficient and robust.

\section{Riemannian metric via generative modeling}
\label{sec:learned_latent_riemannian_manifolds}


Instead of learning a Riemannian metric in $\X$ from data (see Sec~\ref{sec:riemannian_metric_learning}), we discuss how to learn one in the latent space $\Z$ of a generative model. Briefly, a generator $g:\Z\rightarrow \X$ induces a pull-back metric in $\Z$ (see Sec.~\ref{sec:geometry}) that essentially informs us about the local distortions of $\Z$ when mapping through $g(\cdot)$. In principle, this metric captures the geometry of the data manifold lying in $\X$. However, as we discuss in this section even if this is a theoretically rigorous approach, it comes with some practical disadvantages.

\citet{tosi:uai:2014} first proposed to capture the geometry of a data manifold by modeling the generator $g(\cdot)$ using a Gaussian Process Latent Variable Model (GP-LVM) \citep{lawrence:2005:jmlr}. In particular, the generator is taken to be a Gaussian process $g \sim \text{GP}(0, k(\bm{z}, \bm{z}'))$ and the latent codes of the data $\bm{z}_{1:N}$ are trained. Since GPs are closed under differentiation the Jacobian $\bm{J}_g(\cdot)$ is a random process, and consequently, a stochastic Riemannian metric is induced in $\Z$. This metric comes with a meaningful behavior, since it is small near the latent codes and increases when the uncertainty of $g(\cdot)$ increases. Obviously, this properly captures in $\Z$ the geometry of the data manifold. However, apart from this desired behavior this metric is not very useful due to the practical constraints that are induced from the GP.

In a similar spirit, \citet{arvanitidis:iclr:2018} derived a Riemannian metric using deep generative models. In particular, for a standard VAE the generator can be written as a stochastic function $\bm{x} = g_\theta(\bm{z}) = \mu_\theta(\bm{z}) + \text{diag}(\varepsilon)\cdot \sigma_\theta(\bm{z})$ where $\varepsilon\sim\N(0,\I_D)$. This induces a random Riemannian metric in the latent space $\Z$ for which the expectation is
\begin{equation}
	\label{eq:pullback_vae_metric}
	\bm{M}_\theta(\bm{z}) = \bm{J}_{\mu_\theta}(\bm{z})^{\intercal}\bm{J}_{\mu_\theta}(\bm{z}) + \bm{J}_{\sigma_\theta}(\bm{z})^{\intercal} \bm{J}_{\sigma_\theta}(\bm{z}).
\end{equation}
This metric can be interpreted, since when the uncertainty of $g(\cdot)$ increases, the second term of the metric becomes large, which constitutes a meaningful behavior. However, $\mu_\theta(\cdot)$ and $\sigma_\theta(\cdot)$ are usually parametrized as deep neural networks that are known to extrapolate arbitrarily. A solution proposed by \citet{arvanitidis:iclr:2018} is to use a positive Radial Basis Function (RBF) network to model the precision $\xi_\theta(\bm{z}) = (\sigma^2_\theta(\bm{z}))^{-1}$. Hence, moving further from the latent codes decreases the precision, which directly makes the second term of the expected metric Eq.~\ref{eq:pullback_vae_metric} to increase. Therefore, a stochastic generator together with meaningful uncertainty estimates enables us to properly capture the geometry of the data manifold in $\Z$ \citep{hauberg:only:2018}. Moreover, it has been theoretically shown in \citet{eklund:arxiv:2019} that this expected metric is sensible to use.

Nevertheless, even if this approach allows us to compute shortest paths in $\Z$ that respect the latent codes structure, it comes with some practical drawbacks. In particular, modeling the precision with an RBF is a reasonable choice to estimate meaningfully the uncertainty, but it based on a kernel as the Gaussian. So we need to select the number of components $K$, as well as their parameters as the bandwidth, which is in general a challenging problem. Also, in a high dimensional space $\Z$ the metric is not robust, because due to the curse of dimensionality it is hard to control the support of the kernel which causes an unstable metric (Eq.~\ref{eq:pullback_vae_metric}).

Additionally, to compute one shortest path we evaluate the corresponding ODE system several times, which involves the metric and its derivative that are based on the Jacobian of $g(\cdot)$. Clearly, for complex generators this is computationally very expensive. Also, by definition $g(\cdot)$ has to be twice differentiable, which makes hard the use of complicated architectures. Finally, the ODE system becomes highly unstable which affects negatively the performance of the solvers \citep{arvanitidis:aistats:2019}. Even if we can use solvers that are based on automatic differentiation and optimize a parametric curve by directly minimizing the energy \citep{yang:arxiv:2018}, for complex models this is still slow. Also, under this approach we cannot compute the exponential map that is typically necessary for Riemannian statistics.

Clearly, stochastic generators provide a theoretically solid methodology to properly capture in $\Z$ the geometry of the data manifold lying in $\X$. Moreover, this approach enables us to derive more informative metrics by considering the space $\X$ as a Riemannian manifold \citep{arvanitidis:arxiv:2020}. However, due to their mainly practical disadvantages, we are interested to approximate the geometry in $\Z$ using a simple, efficient and robust surrogate Riemannian metric.

\subsection{A prior-based conformal metric}
\label{sec:our_proposed_metric}

We propose a new Riemannian metric in $\Z$ that approximates the behavior of the true pull-back metric $\bm{M}_\theta(\cdot)$ (see Eq.~\ref{eq:pullback_vae_metric}), while having several advantages as regards its practicality. Let a VAE with a trainable smooth prior $\nu_\psi(\bm{z})$ for which we can evaluate easily the density function, as well as its derivative. We are motivated by \citet{lebanon:uai:2003} where a probability density is defined to be inverse proportional to the magnification factor (see Eq.~\ref{eq:metric_density}). In a similar spirit, we propose an approximation to the true $\bm{M}_\theta(\cdot)$ in $\Z$ using the following locally conformally flat Riemannian metric
\begin{equation}
	\label{eq:new_conformal_metric}
	\bm{M}_\psi(\bm{z}) = m(\bm{z}) \cdot \I_d = {(\alpha \cdot \nu_\psi(\bm{z}) + \beta)^{-\nicefrac{2}{d}}} \cdot \I_d,
\end{equation}
where $\alpha, ~\beta>0$ are scaling constants that allow to lower and upper bound the metric, respectively. This metric by definition is conformal to the Euclidean metric $\I_d$ in $\Z$, and also, the quantity $\sqrt{|\bm{M}_\psi(\bm{z})|} = (\alpha \cdot \nu_\psi(\bm{z}) + \beta)^{-1}$ is inverse proportional to the learnable prior.

Clearly, $\bm{M}_\psi(\cdot)$ has an interepretable and meaningful behavior, as in regions of $\Z$ where the density is high the metric is small, and thus, the shortest paths are pulled towards the latent codes. Intuitively, this properly captures the geometry of the high dimensional data manifold, at least in the sense that paths tend to avoid regions of $\Z$ with no latent codes. Additionally, the metric is directly learned from the data, while depending of the flexibility of $\nu_\psi(\cdot)$ it can be highly adaptive. This further implies that the metric is more robust in higher dimensional latent spaces, as in principle, does not depend on a predefined parametric form and/or a kernel. Also, as a conformal metric the corresponding ODE system simplifies (see Eq.~\ref{eq:conformal_metric_ode}). Hence, the proposed prior  Eq.~\ref{eq:our_prior} seems to be a perfect choice, since it is flexible, adaptive and efficient to evaluate, as well as to derivate.  

Of course, we can control the capacity of $f_\psi(\cdot)$ so that the prior does not overfit the latent codes. Similarly, advanced training techniques can be used to improve the fitting of the prior e.g. contrastive learning. Moreover, the prior can be easily replaced by a more sophisticated model that performs better, as long as the functional form of the density and its derivative are easy to compute.

\subsection{Theoretical analysis of the proposed metric}
\label{sec:new_metric_discussion}


Even if our proposed metric seems to be a good surrogate for the pull-back metric of \citet{tosi:uai:2014} and \citet{arvanitidis:iclr:2018}, here we analyze and compare its behavior in detail, and essentially, we are interested in the following problem. Let the smooth manifold $\Z=\I_d$ and two Riemannian metrics, the pull-back $\bm{M}_\theta(\cdot)$ and the proposed conformal $\bm{M}_\psi(\cdot)$ metric. We are interested if these two metrics result in shortest paths on the data manifold that are equivalent, which means that the corresponding curves in $\Z$ should be similar. Here, we analyze the behavior in three specific cases and we provide constructive demonstrations in the experiments. In addition, we show that under mild conditions the two metrics result in similar shortest paths.



We know that the pull-back metric  (Eq.~\ref{eq:pullback_vae_metric}) due to the second term increases in regions of the latent space where the uncertainty of $g(\cdot)$ increases. Of course, when the prior density is zero, the uncertainty is maximum, which implies that as the density decreases both metrics increase. Therefore, the behavior of the shortest paths is similar, since under both metrics they will be pulled towards the latent codes and avoid regions of $\Z$ with near zero density. Note that the two metrics are structurally different as $\bm{M}_\theta(\cdot)$ is a full matrix while $\bm{M}_\psi(\cdot)$ is simply a diagonal matrix, but practically their behavior is similar avoiding the same regions in $\Z$.

However, even if the paths follow regions with non-zero density in both cases, their specific behavior in there is hard to predict. The manifold hypothesis assumes that the data lie uniformly around $\M\subset\X$, and hence, we assume that the uncertainty of the generator is locally constant within regions of $\Z$ with latent codes. So the second term of $\bm{M}_\theta(\cdot)$ is near zero and only the first term captures the geometry. Of course, in this case the behavior of $\bm{M}_\theta(\cdot)$ is not necessarily similar to $\bm{M}_\psi(\cdot)$. For example, if the curvature of $\mu_\theta(\cdot)$ is high the pull-back increases, while the prior in the same region can be high as well, such that to ensure a uniform distribution of points around $\M$. A natural assumption thought, is that in regions of $\Z$ with uniform non-zero density the curvature of $g(\cdot)$ is small, so both metrics only locally result to approximately similar paths.

\begin{proposition}
\label{pror:straight_lines_prop}
    Let $\bm{M}_\theta(\cdot)$ and $\bm{M}_\psi(\cdot)$ the Riemannian metrics over the latent space $\Z$. We consider a neighborhood $\mathcal{U}$ of the data manifold $\M\subset\X$ and based on the manifold hypothesis, we assume that the data lie uniformly around $\mathcal{U}$. Let us suppose that in the corresponding region in $\Z$:
    \begin{enumerate}
        \item The density $\nu_\psi(\cdot)$ is approximately uniform.
        \item The generator's uncertainty $\sigma_\theta(\cdot)$ is approximately constant and in addition $\mu_\theta(\cdot)$ has low curvature.
    \end{enumerate}
Then for both the conformal $\bm{M}_\psi(\cdot)$ and the pull-back metric $\bm{M}_\theta(\cdot)$ the shortest paths are approximately straight lines.
\end{proposition}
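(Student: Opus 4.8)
The plan is to show that under each listed hypothesis the corresponding metric becomes (approximately) position-independent on the region, and then to invoke the elementary fact that a \emph{constant} Riemannian metric has vanishing Christoffel symbols, so that the general geodesic system $\ddot{c}(t) = F(\dot{c}(t), c(t), t)$ of Sec.~\ref{sec:geometry} collapses to $\ddot{c}(t) = 0$, whose solutions are exactly straight lines $c(t) = c(0) + t\,\dot{c}(0)$. The two metrics are handled independently: the conformal metric $\bm{M}_\psi(\cdot)$ uses only assumption~(1), and the pull-back metric $\bm{M}_\theta(\cdot)$ uses only assumption~(2).

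First I would dispatch the conformal metric, which is the easy case. Since $m(\bm{z}) = (\alpha\,\nu_\psi(\bm{z}) + \beta)^{-2/d}$ and $\nu_\psi$ is approximately uniform on the region, the conformal factor $m(\cdot)$ is approximately constant there, so $\nabla m \approx 0$ and $\bm{M}_\psi \approx c\,\I_d$ for a constant $c$. Substituting $\nabla m = 0$ into the conformal ODE Eq.~\ref{eq:conformal_metric_ode} annihilates both numerator terms simultaneously, giving $\ddot{c}(t) = 0$ and hence straight-line geodesics. The qualifier ``approximately'' is controlled because a small residual $\|\nabla m\|$ produces a right-hand side of Eq.~\ref{eq:conformal_metric_ode} of the same order.

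Next I would treat the pull-back metric under assumption~(2). Constancy of $\sigma_\theta(\cdot)$ forces $\bm{J}_{\sigma_\theta} \approx 0$, so the uncertainty term of Eq.~\ref{eq:pullback_vae_metric} drops and $\bm{M}_\theta(\bm{z}) \approx \bm{J}_{\mu_\theta}(\bm{z})^{\intercal}\bm{J}_{\mu_\theta}(\bm{z})$. The crucial step is to convert ``low curvature of $\mu_\theta$'' into ``$\bm{J}_{\mu_\theta}$ approximately constant in the $\Z$-coordinates'': low curvature means the second derivatives of $\mu_\theta$ are small, so its first-order Taylor expansion $\mu_\theta(\bm{z}) \approx \bm{A}\bm{z} + \bm{b}$ is accurate on the region, whence $\bm{J}_{\mu_\theta}(\bm{z}) \approx \bm{A}$ and $\bm{M}_\theta(\bm{z}) \approx \bm{A}^{\intercal}\bm{A}$, a constant matrix $\bm{G}$. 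For a constant $\bm{G}$ every derivative $\partial_l g_{ij}$ vanishes, so all Christoffel symbols $\Gamma^k_{ij} = \tfrac{1}{2}\,g^{kl}(\partial_i g_{jl} + \partial_j g_{il} - \partial_l g_{ij})$ vanish and the geodesic equation again reduces to $\ddot{c}(t) = 0$, yielding straight lines.

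The main obstacle is making these two ``approximately affine / approximately constant'' reductions quantitatively honest rather than merely heuristic. The clean route is to prove the exact limiting claim first---if $\nu_\psi$ is exactly uniform, or if $\mu_\theta$ is exactly affine and $\sigma_\theta$ exactly constant, then the geodesics are exactly straight---and then invoke continuous dependence of solutions of the geodesic ODE on its coefficients (a standard Gr\"onwall estimate) to conclude that bounded deviations ($\|\nabla m\|$ small, $\|\bm{J}_{\sigma_\theta}\|$ small, the second derivatives of $\mu_\theta$ small) keep the geodesics uniformly close to straight lines on $\mathcal{U}$. A secondary subtlety I would explicitly flag is that for the pull-back metric it does \emph{not} suffice that the induced metric be intrinsically flat---a flat metric can still have curved geodesics in curvilinear coordinates---so the argument genuinely needs the stronger fact that $\bm{J}_{\mu_\theta}$ is nearly constant in the given $\Z$-coordinates, which is precisely what the low-curvature (affine) approximation of $\mu_\theta$ supplies.
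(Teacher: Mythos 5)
Your proposal is correct and follows essentially the same route as the paper's own proof: assumption (1) makes $\nabla m \approx 0$ so the conformal ODE of Eq.~\ref{eq:conformal_metric_ode} reduces to a small perturbation of $\ddot{c}(t)=0$, and assumption (2) makes $\bm{J}_{\sigma_\theta}\approx \bm{0}$ and $\bm{J}_{\mu_\theta}$ approximately constant so the metric derivative in the general geodesic system vanishes, with the ``approximately'' handled by continuous dependence of ODE solutions on coefficients. Your explicit remark that intrinsic flatness alone would not suffice and that one genuinely needs $\bm{J}_{\mu_\theta}$ near-constant in the given coordinates is a worthwhile clarification, but it does not change the argument.
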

\begin{proof}
    See App.~\ref{app:prop_straight_lines_prop}.
\end{proof}

However, in practice the assumptions of Prop.~\ref{pror:straight_lines_prop} does not hold always, especially due to the RBF (see App.~\ref{app:constructive_examples}). Also, there exist at least one case where the two metrics have exactly the opposite behavior. Consider a manifold where the data are uniformly distributed around it, except one part where there are more data with higher noise. This means that the corresponding region in $\Z$ will have higher density, since more latent codes will be encoded therein, which implies that $\bm{M}_\psi(\cdot)$ will be smaller. In contrast, $\bm{M}_\theta(\cdot)$ increases in the same region, since the uncertainty of the generator will also increase due to the actual data distribution. Therefore, the shortest paths will have the exact opposite behavior, and in particular, $\bm{M}_\psi(\cdot)$ will be misleading. 

The analysis in this section implies that the two metrics induce approximately the same topology in $\Z$, as in both cases shortest paths prefer regions with non-zero density. Also, if the curvature of $g(\cdot)$ is low, we showed that locally the two paths are similar. However, we note that potentially additional problematic cases might exist. Nevertheless, one important benefit of the proposed metric is that we can easily control it during learning through the prior. For example, a naive computational approach is to consider a pointwise regularizer of the form $||\bm{M}_\psi(\bm{z}) - \bm{M}_\theta(\bm{z})||^2_F$. Therefore, the proposed metric enables us to take into account the geometry during learning the model. In this way, we are able to influence the model by considering interpretable inductive biases throught geometric formulations.

\section{Experiments}
\label{sec:experiments}

\begin{figure*}[h]
	\centering
	\begin{subfigure}{0.23\linewidth}
	    \begin{overpic}[width=1\linewidth]{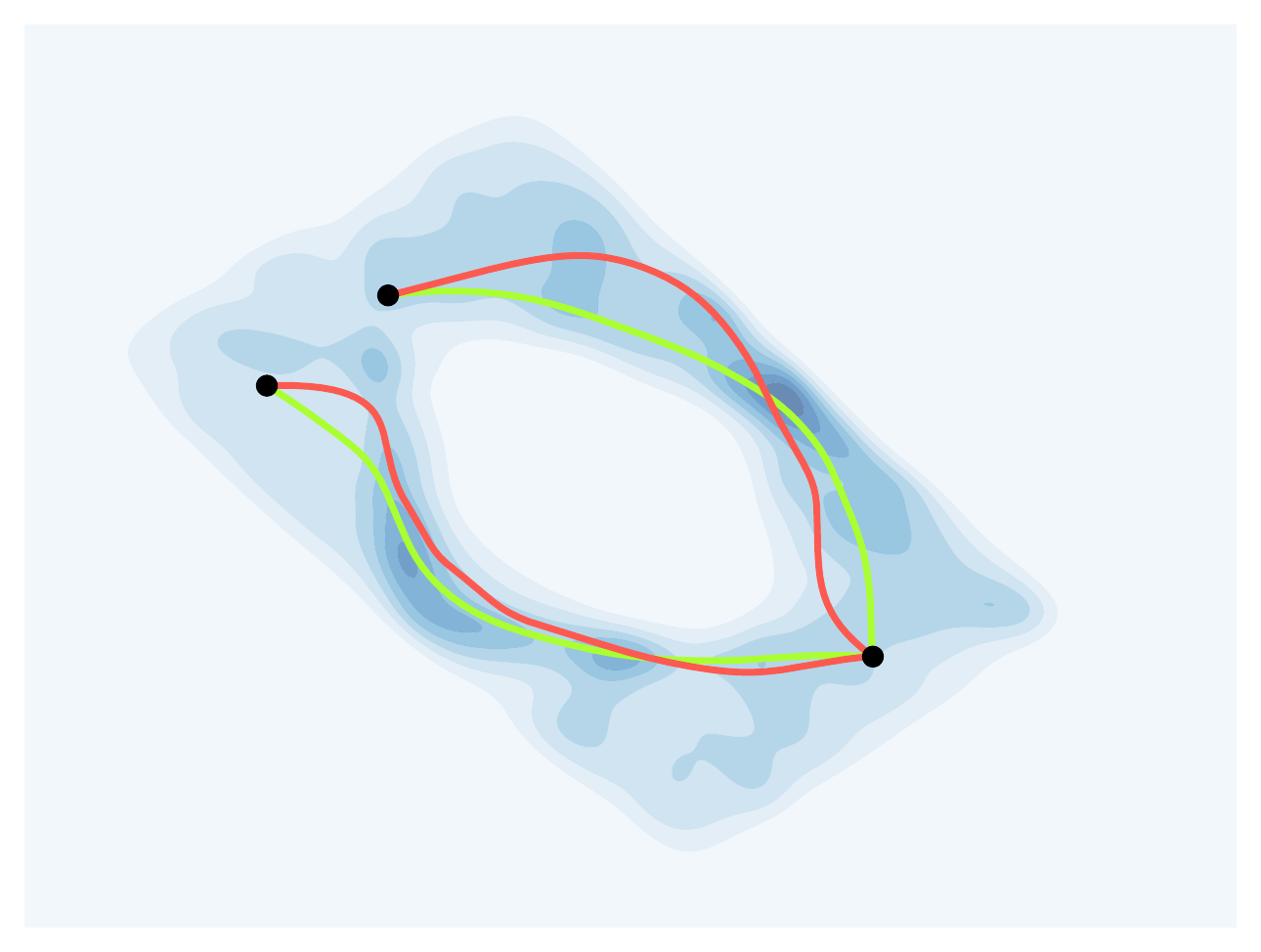}
        \put(5, 5){\tiny Hole}
        \put(74,77){{\tiny Pull-back path}}
	    \put(70,79){{\color[RGB]{234,101,88}	\linethickness{1.3pt}\line(-1,0){5}}}
	    \put(74,70){{\tiny Our path}}
	    \put(70,72){{\color[RGB]{189,252,83}	\linethickness{1.3pt}\line(-1,0){5}}}
    \end{overpic}
    \end{subfigure}
	~
	\begin{subfigure}{0.23\linewidth}
	    \begin{overpic}[width=1\linewidth]{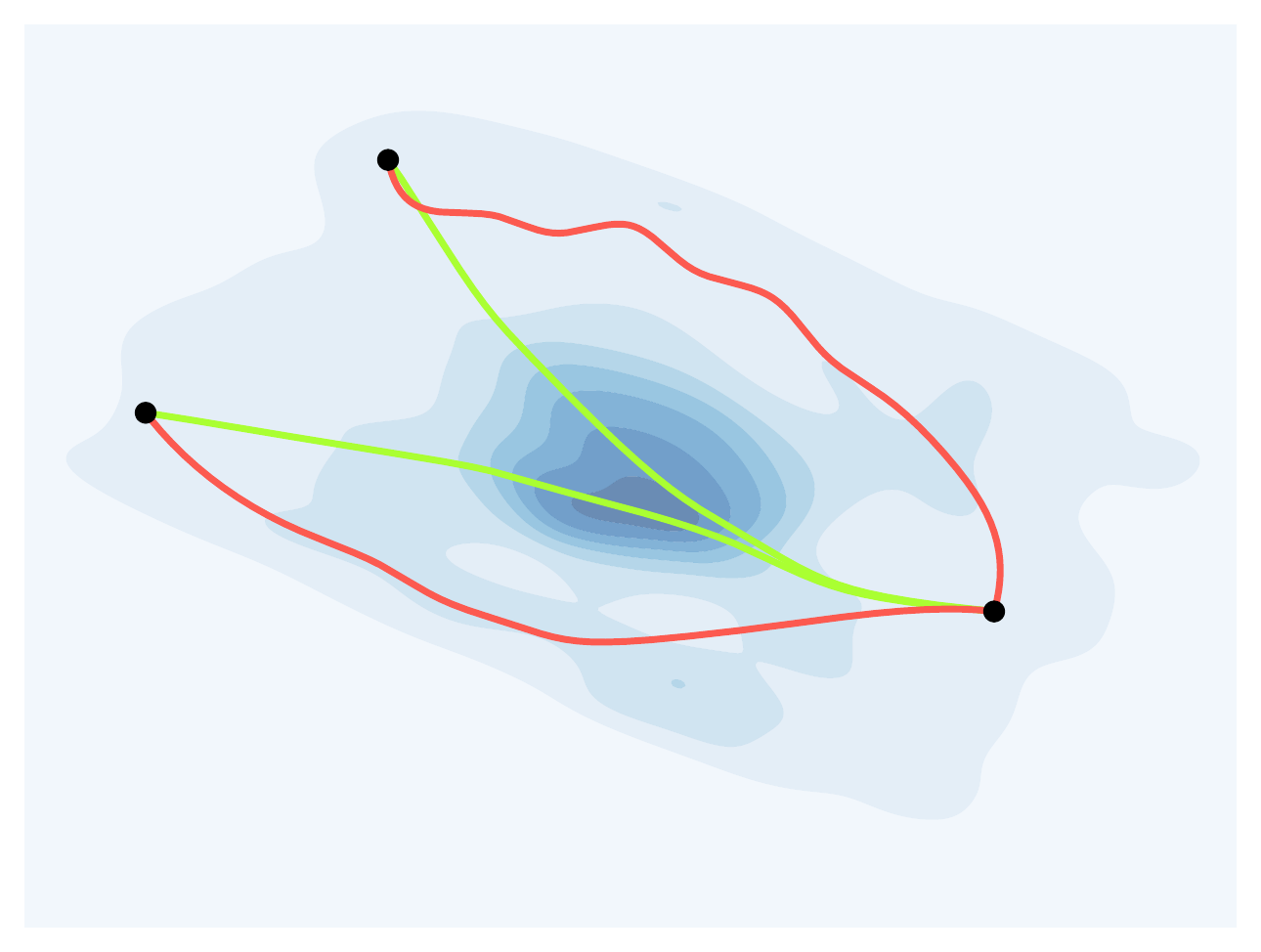}
        \put (5, 5) {\tiny Ball}
    \end{overpic}
    \end{subfigure}
	~ 
	\begin{subfigure}{0.23\linewidth}
	    \begin{overpic}[width=1\linewidth]{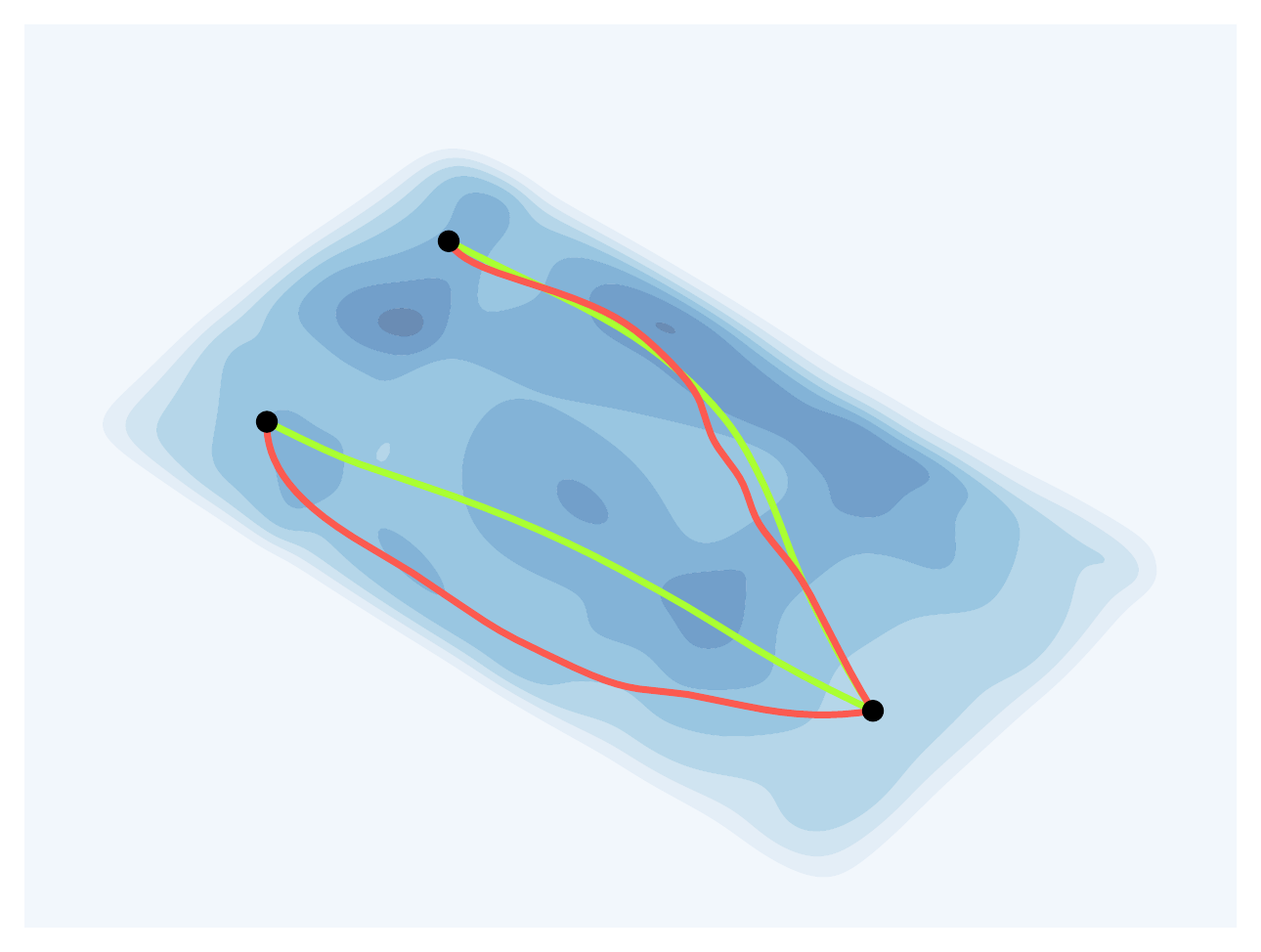}
        \put (5, 5) {\tiny Normal}
    \end{overpic}
    \end{subfigure}
	~
	\begin{subfigure}{0.23\linewidth}
	    \begin{overpic}[width=1\linewidth]{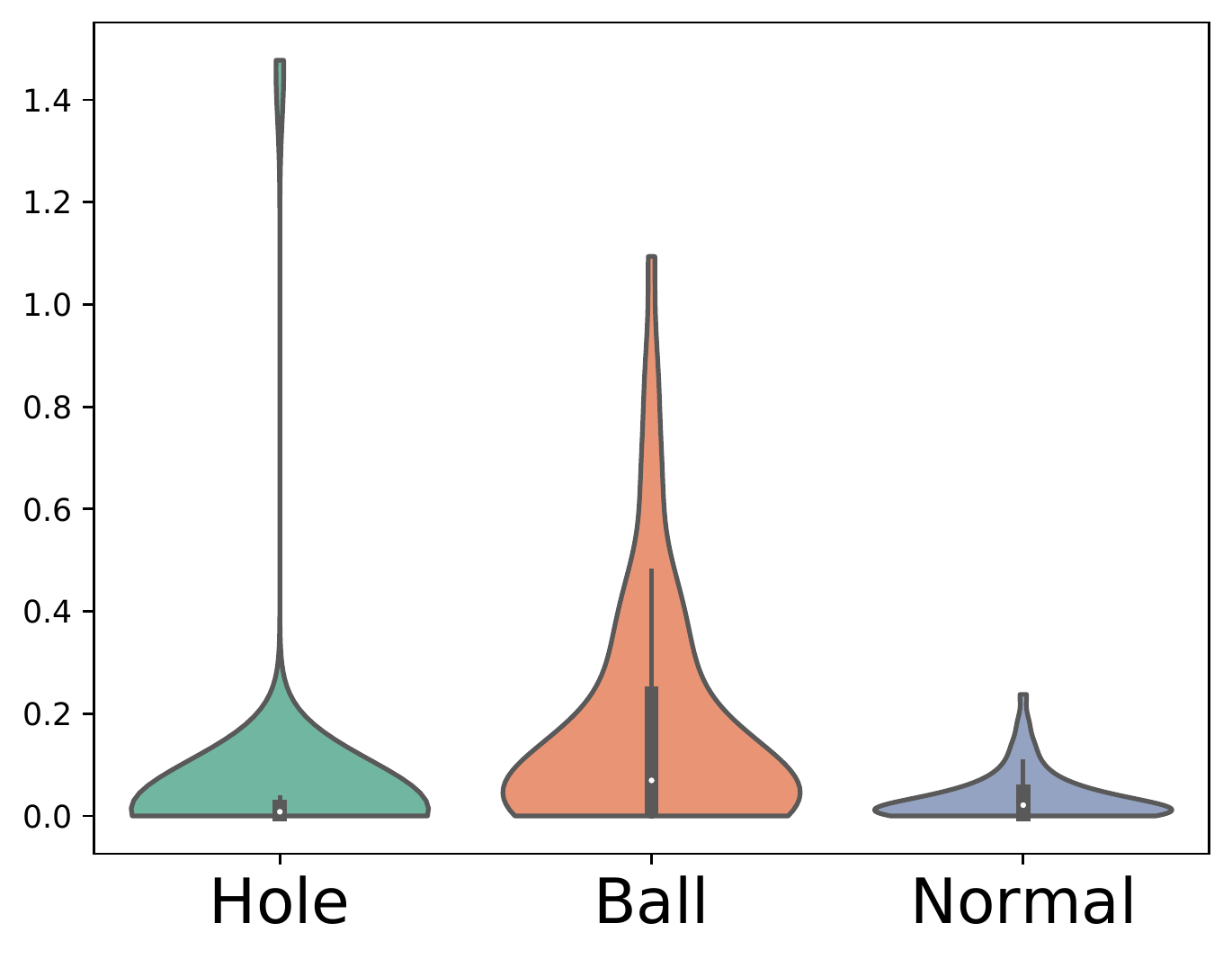}
    \end{overpic}
    \end{subfigure}
	\caption{Demonstrating the three cases analyzed in Sec.~\ref{sec:our_proposed_metric}. \emph{Left}: The two metrics behave similarly since the uncertainty estimation aligns well with the prior. \emph{Middle}: The behavior of the metrics is exactly the opposite, since the area with higher density has also higher uncertainty. \emph{Right}: Based on Prop.~\ref{pror:straight_lines_prop} we expect the shortest paths locally to be similar, as long as the prior is uniform and the curvature of the generator is small. Additionally, we show the distribution of distances between the curves for each case respectively. Note that in the case with the hole, the shortest paths tend to be similar except for a few outliers, while in the ball case many paths are not similar.}
	\label{fig:ex2:compare_cases}
\end{figure*}

Our experimental setting is two fold. First, we compare our prior to the state-of-the-art VampPrior \cite{tomczak:aistats:2018}. Note that our goal is not to improve generative modeling, but to show that $\nu_\psi(\cdot)$ adapts well to the latent codes, so it is a sensible choice for $\bm{M}_\psi(\cdot)$. Then, we compare the proposed metric with the pull-back $\bm{M}_\theta(\bm{z})$ of \citet{arvanitidis:iclr:2018} on several aspects as the robustness and the efficiency of shortest paths. Also, we provide a constructive example based on the analysis of Sec.~\ref{sec:our_proposed_metric}. Finally, we show applications of Riemannian statistics in life sciences. Details for the experiments and code can be found in App.~\ref{app:experiments}.

\subsection{Performance of the proposed prior}

We compare in terms of log-likelihood our learnable prior to the standard unit Gaussian and the VampPrior. We train 10 Convolutional-VAEs on MNIST and FashionMNIST datasets and we report the mean log-likelihood of test data in Table~\ref{tab:test_nll_prior_comparisson}, which we computed using importance sampling with 5000 samples as $p(\bm{x}) \approx \frac{1}{S}\sum_{s=1}^S \frac{p_\theta(\bm{x}|\bm{z}_s) p(\bm{z}_s)}{q_\phi(\bm{z}_s |\bm{x})}$ where $\bm{z}_s \sim q_\phi(\bm{z} | \bm{x})$. In addition, using PCA we projected the datasets in 100 dimensions and we fitted 10 VAEs with Gaussian decoders. This already captures $>90\%$ of the data variance, while enables us to use stochastic decoders such that to use the pull-back metric in the latent space. In both cases the dimension of the latent space is $d=10$.

\begin{table}[t]
	\centering
	\begin{tabular}{c c c c} 
		&	Standard	&	VampPrior	&	Ours	\\ \midrule
		MNIST	&	$85.38$	&	$83.28$ &	$83.56$	\\
		FMNIST	&	$227.12$	&	$224.15$    &	$224.53$	\\ \midrule
		MNIST $(100)$	&	$95.51$	&	$90.24$	&	$91.70$	\\
		FMNIST $(100)$	&	$87.17$	&	$81.83$	&	$84.06$	\\
	\end{tabular}
	\caption{The negative mean log-likelihood on test data.}
	\label{tab:test_nll_prior_comparisson}
\end{table}

For the VampPrior we use $K=500$ learnable inducing points and for our prior $f_\psi(\cdot)$ we use a fully connected 2-layer deep network with 128 units per layer and \texttt{tanh} activations. From the results in Table~\ref{tab:test_nll_prior_comparisson} we see that our proposed prior is comparable to the VampPrior, while being always better than the unit Gaussian prior. This shows that $\nu_\psi(\cdot)$ adapts well to the latent codes during training.


\subsection{Comparing the behavior of the metrics}
\label{sec:exp:constructive_examples}

Here we provide examples for the analysis in Sec.~\ref{sec:our_proposed_metric}. We construct a surface in $\X=\R^3$ as $[\bm{z}, 0.25 \cdot \sin(z_1)] + \varepsilon$ where ${z}_j\sim\mathcal{U}(0,2\pi),~j=1,2$ and $\varepsilon\sim \N(0, 0.1^2 \cdot \I_3)$ the same data with a hole, as well as including a uniform ball of points in the center. Then, we trained a VAE per dataset with our proposed prior and we fitted post-hoc an RBF network for each to induce the pull-back metric. In Fig.~\ref{fig:ex2:compare_cases} we show the $d=2$ latent spaces. Also, we define the distance between two curves as $\int_0^1||c_1(t) - c_2(t)||_2^2 dt$, where each curve is parametrized with unit speed under the Euclidean metric. This makes curves coming from different Riemannian metrics as comparable as possible. Thus, we select pairs of points and we compute the distance between the curves that correspond to the pull-back and our proposed conformal metric. For additional details see App.~\ref{app:constructive_examples}.


From the results we observe that the theoretical analysis in Sec.~\ref{sec:our_proposed_metric} is reasonable. In particular, for the \emph{hole} case we see that both metrics behave similarly, since the paths avoid crossing the regions in $\Z$ with zero density. This is useful in practice as the shortest paths are pulled towards the latent codes for both metrics. However, some outliers still exist, which means that the represented geometry is not exactly the same. This is apparent in the \emph{ball} case, where the two metrics have exactly the opposite behavior. The ball data increases the prior in $\Z$, while the uncertainty of $g(\cdot)$ increases in the same region as well. This causes the shortest paths to have a contrastive behavior. While in the \emph{normal} case the two metrics result to similar curves. However, for a pair of points the path of $\bm{M}_\psi(\cdot)$ crosses a region with higher density and the path of $\bm{M}_\theta(\cdot)$ not, but the two curves are still similar. Therefore, if the data lie uniformly near a manifold in $\X$ we expect the metrics to behave similarly, due to the relation of the prior to the uncertainty of $g(\cdot)$.

\subsection{Comparing efficiency and robustness of the metrics}
\label{sec:exp:efficiency_robustness}

We investigate the behavior of the metrics as dimension increases, as well as the influence this has on the computation of shortest paths. We use the MNIST digits 0,1,2 that we project with PCA to 100 dimensions and we train a VAE for each $d=[2,3,5,10]$ using our proposed prior, and also, we train post-hoc the RBF network to induce $\bm{M}_\theta(\cdot)$.  Moreover, to make the metrics comparable we rescale them so that the maximum magnification factor on the latent codes is 1.


\begin{figure}[t]
	\centering
    \begin{subfigure}{0.44\columnwidth}
	    \begin{overpic}[width=1\linewidth]{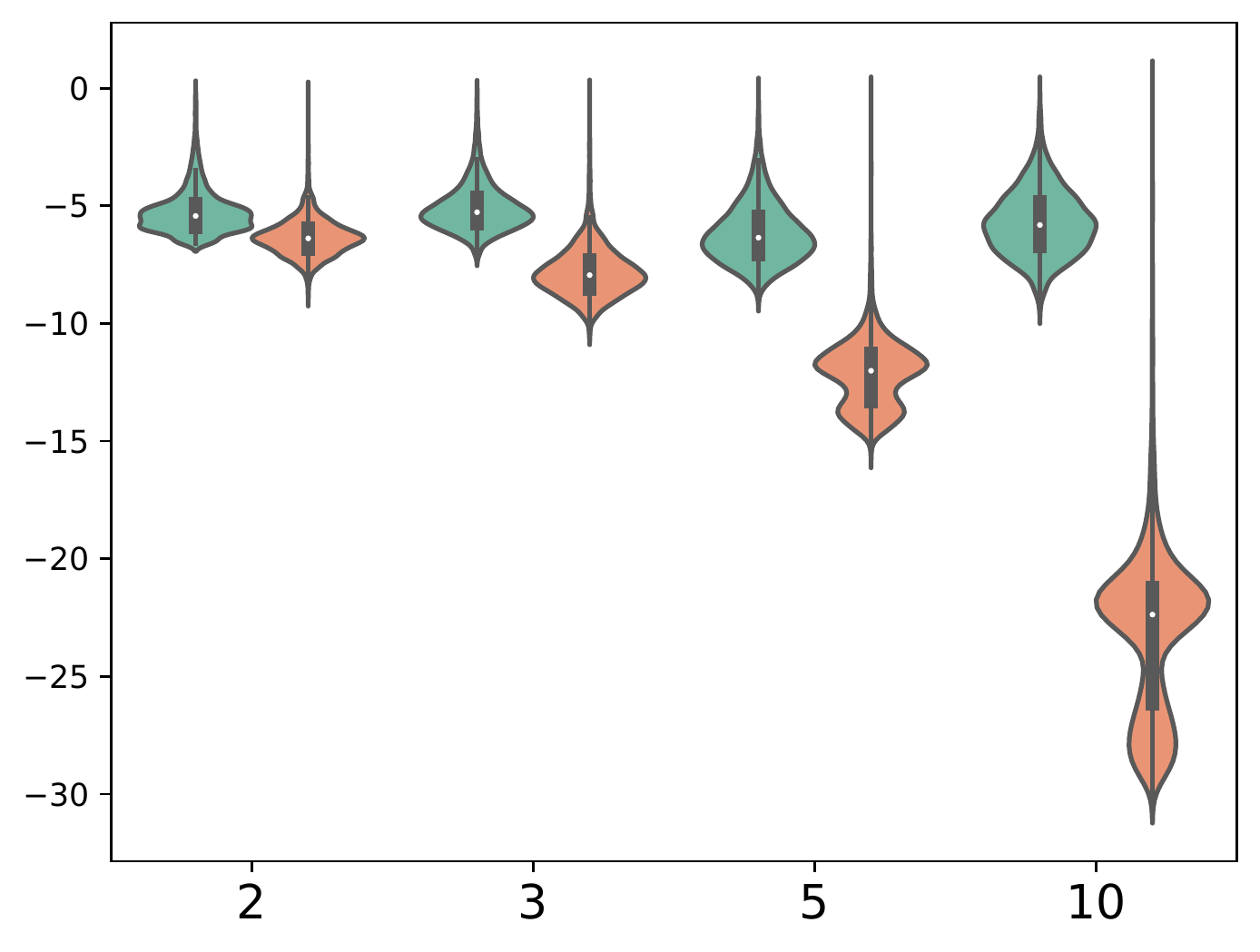}
	    \put(-7, 25){\rotatebox{90}{\tiny $\log[\sqrt{\bm{M}(\cdot)}]$}}
	    \put(50, -2.5){\tiny dim}
	    \put(19,10){{\tiny Our conformal metric}}
	    \put(15,12){{\color[RGB]{97,173,151}\circle*{5}}}
	    \put(19,18){{\tiny Pull-back metric}}
	    \put(15,20){{\color[RGB]{232,140,109}\circle*{5}}}
    \end{overpic}
    \end{subfigure}
    \quad
    \begin{subfigure}{0.44\columnwidth}
	    \begin{overpic}[width=1\linewidth]{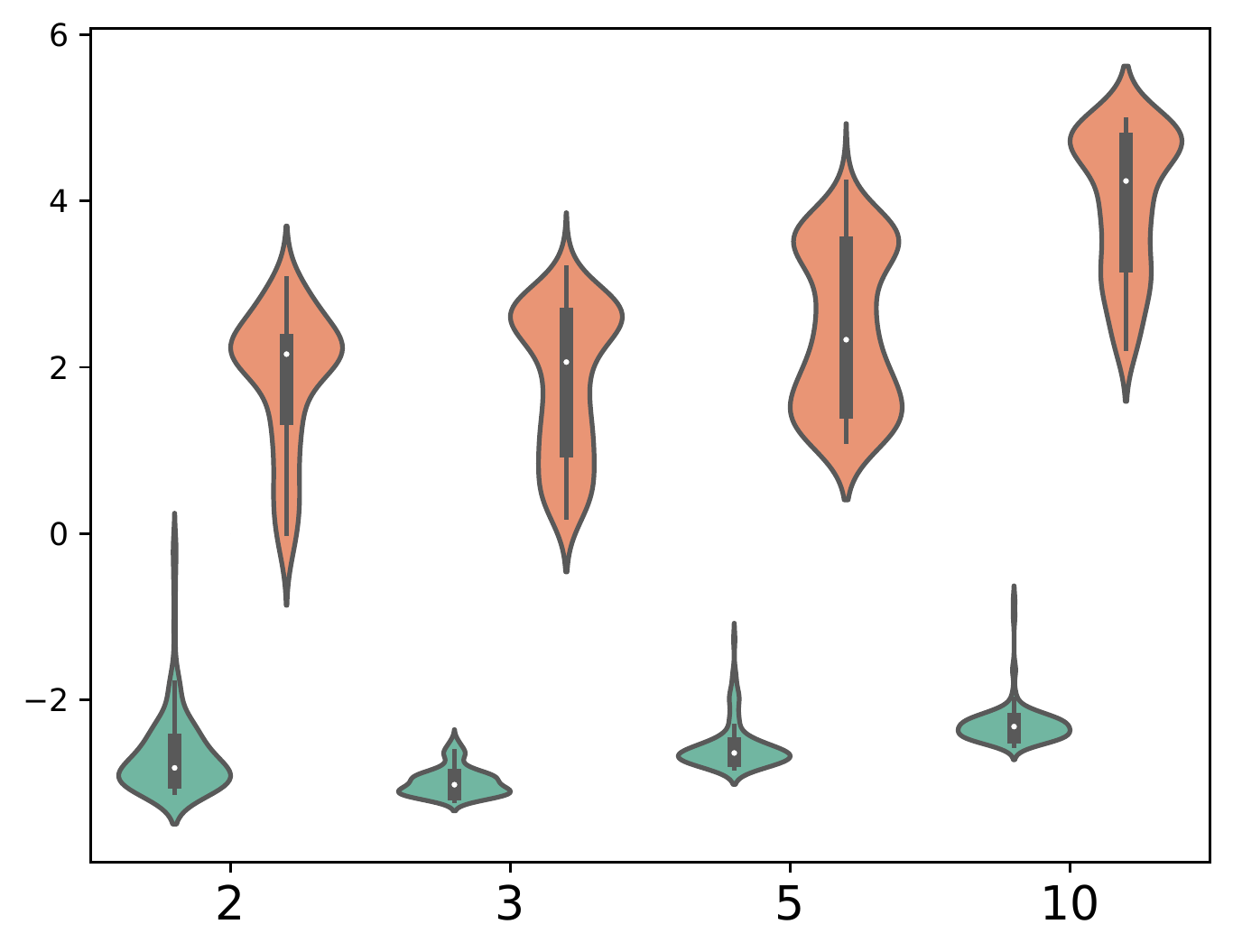}
        \put(-5, 30){\rotatebox{90}{\tiny $\log[$time(sec)$]$}}
        \put(50, -2.5){\tiny dim}
    \end{overpic}
    \end{subfigure}
	\caption{Metric robustness (\emph{left}) and shortest paths efficiency (\emph{right}) in higher dimensions. The magnification factor of $\bm{M}_\psi(\cdot)$ remains stable, while $\bm{M}_\theta(\cdot)$ due to the RBF is not robust. In fact, some latent codes fall far from the RBF centers, so the second term of $\bm{M}_\theta(\cdot)$ becomes large. Also, $\bm{M}_\theta(\cdot)$ results in a complex and unstable ODE system, so the efficiency of the solver is limited and many times fails, while for $\bm{M}_\psi(\cdot)$ the ODE is easier to solve.}
	\label{fig:robust_efficiency}
\end{figure}

We show in Fig.~\ref{fig:robust_efficiency} the magnification factor computed on the latent codes and we see that $\bm{M}_\psi(\cdot)$ is robust as $d$ increases. This means that the prior behaves consistently i.e., the density on the representations is non-zero and is relatively similar across them. Also, we sample uniformly in the bounding box of the latent codes and the evaluation of the metric shows that indeed it is small only near the representations (see App.~\ref{app:efficiency_robustness}). In contrast, $\bm{M}_\theta(\cdot)$ is not robust because due to the curse of dimensionality the second term that is based on the RBF is inconsistent, which results in very high magnification factor on some of the latent codes.



Additionally, we selected 10 points per cluster and we compute the pairwise distances within each cluster, in order to investigate the influence of the metrics on the shortest paths. The results in Fig.~\ref{fig:robust_efficiency} shows that $\bm{M}_\psi(\cdot)$ is highly efficient when computing shortest paths, even when $d$ increases. The reason is that the corresponding ODE system is simpler, more stable and also easier to solve. While $\bm{M}_\theta(\cdot)$ mainly due to the RBF, results in an unstable ODE systems, as well as, only evaluating the metric and its derivative is significantly more expensive. Consequently, the computation of the paths is very slow, while many times the solver fails $(>25\%)$. Further details for this experiment in App.~\ref{app:efficiency_robustness}.


\begin{figure}[h]
	\centering
	\begin{subfigure}{0.44\columnwidth}
    	{\includegraphics[width=1\linewidth]{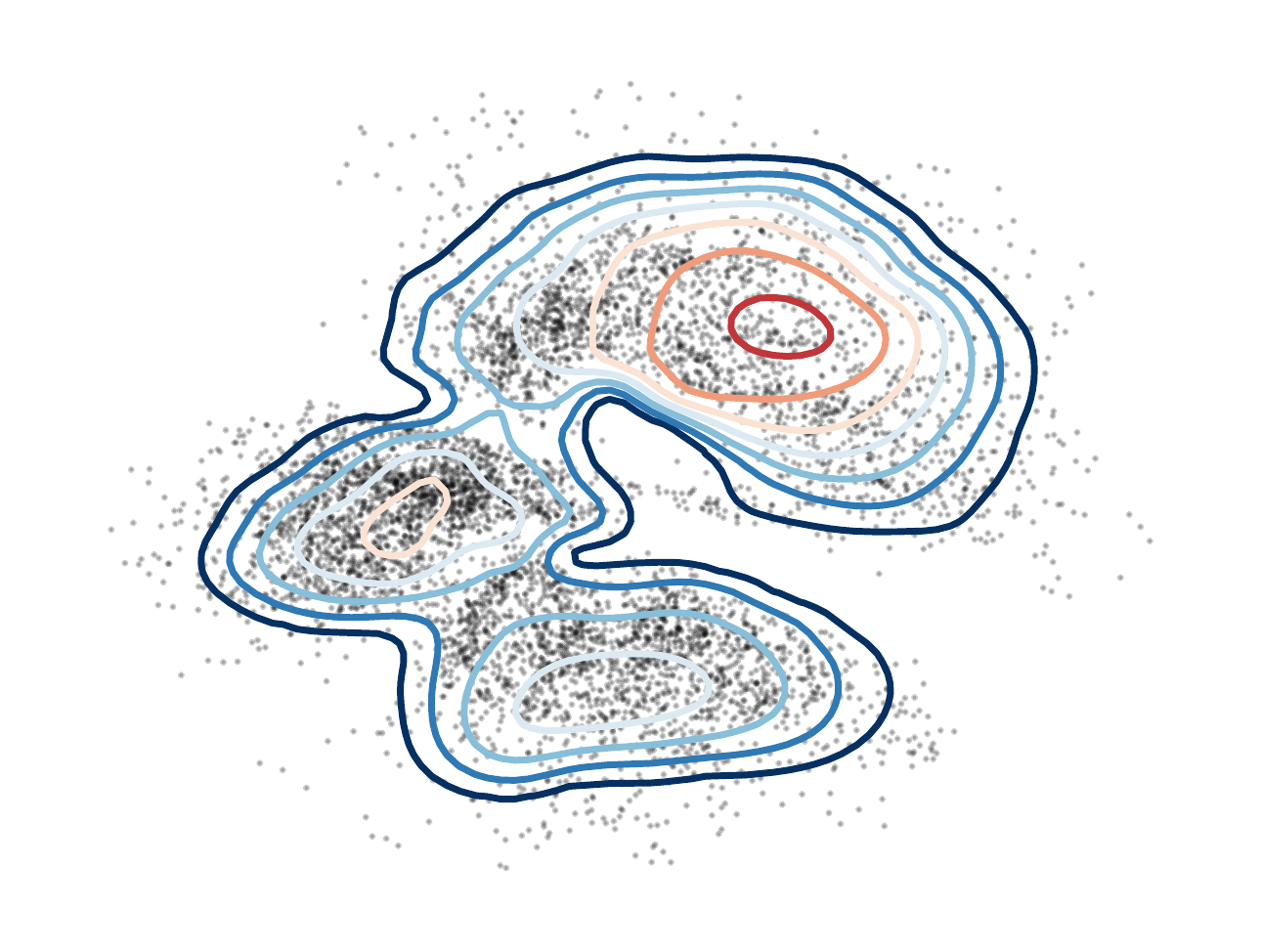}}
	\end{subfigure}
	~
	\begin{subfigure}{0.44\columnwidth}
	    {\includegraphics[width=1\linewidth]{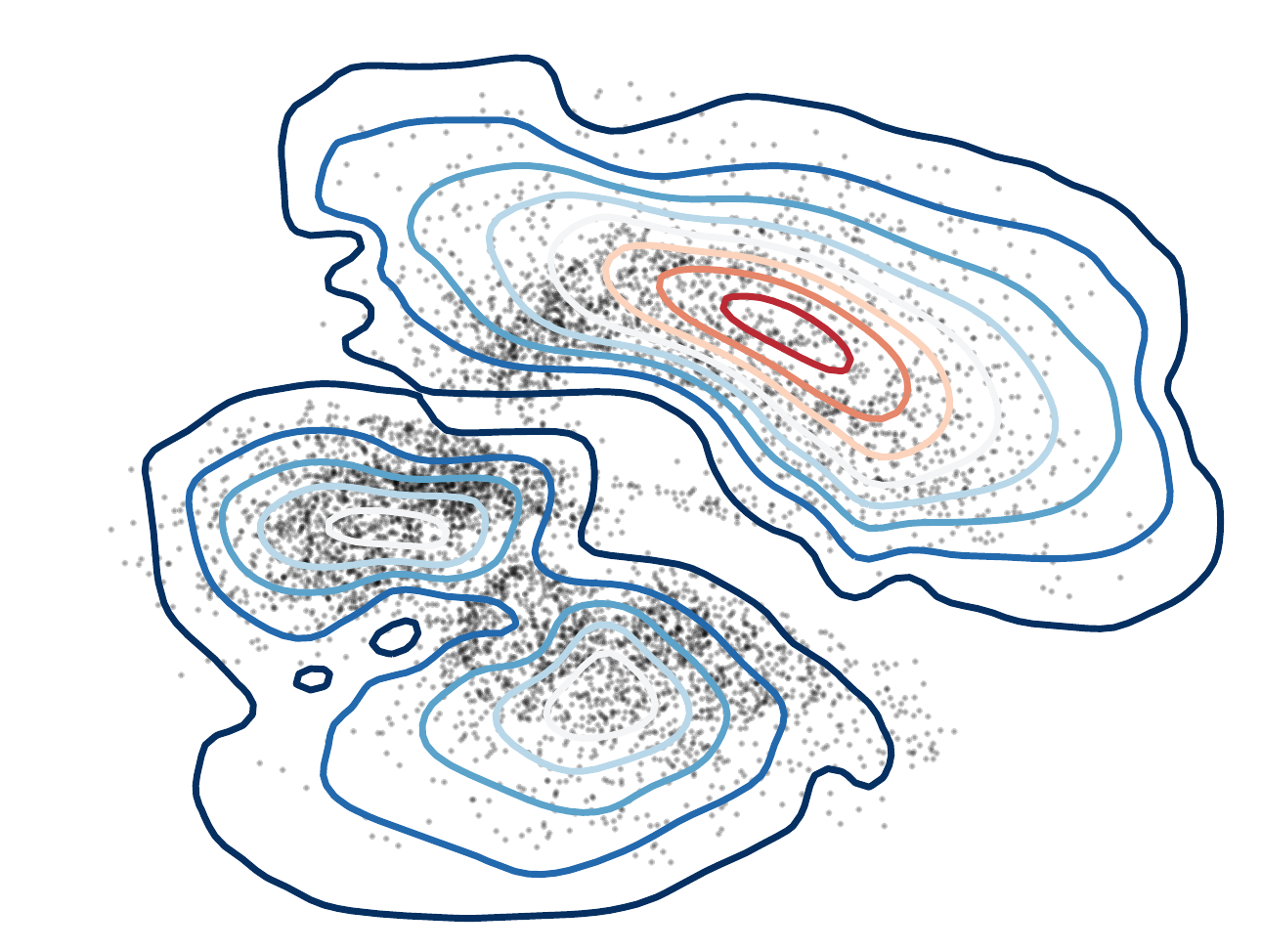}}
	\end{subfigure}
	\caption{LAND mixtures (Sec.~\ref{sec:ex:land_mnist}) in $\Z$ for our $\bm{M}_\psi(\cdot)$ (\emph{left}) and $\bm{M}_\theta(\cdot)$ (\emph{right}). Due to the robustness of our proposed metric the density adapts better to the representations, while the pull-back is high on some boundary points which affects negatively the fitting.}
	\label{fig:ex:land_mnist}
\end{figure}

\subsection{Statistical models on Riemannian manifolds}
\label{sec:ex:land_mnist}

We fit a mixture of locally adaptive normal distributions (LANDs) defined on Riemannian manifolds with  density $\rho(\bm{z}) = C(\mu,\bm{\Gamma}) \cdot \exp(-0.5 \cdot \inner{\logmap{\bm{\mu}}{\bm{z}}}{\bm{\Gamma}\cdot \logmap{\bm{\mu}}{\bm{z}}})$, mean $\mu\in\R^d$,  precision $\bm{\Gamma}\in\R^{d\times d}_+$ and normalization constant $C(\mu,\bm{\Gamma})$ \citep{arvanitidis:neurips:2016}. This is a flexible model but computationally expensive since it is fitted with gradient descent based on $\logmap{\mu}{\cdot}$ and $\expmap{\mu}{\cdot}$. In Fig.~\ref{fig:ex:land_mnist} we show the result on the latent codes of Sec.~\ref{sec:exp:efficiency_robustness}. Due to robustness of $\bm{M}_\psi(\cdot)$ the density adapts better. In contrast, outliers with high $\bm{M}_\theta(\cdot)$ cause underestimated precisions. Also, the running times are respectively 10 min and 2 hours, because the ODE (Eq.~\ref{eq:conformal_metric_ode}) for $\bm{M}_\psi(\cdot)$ is significantly more efficient.

\subsection{Applications in life sciences}
\label{sec:ex:life_data}


We show the usability of the proposed metric in real world problems. Note that our setting is simplified and specialized models for such data exist. For more details see App.~\ref{app:experiments_biology}..


\begin{figure}[h]
	\centering
	\begin{subfigure}{0.44\columnwidth}
	    {\includegraphics[width=1\linewidth]{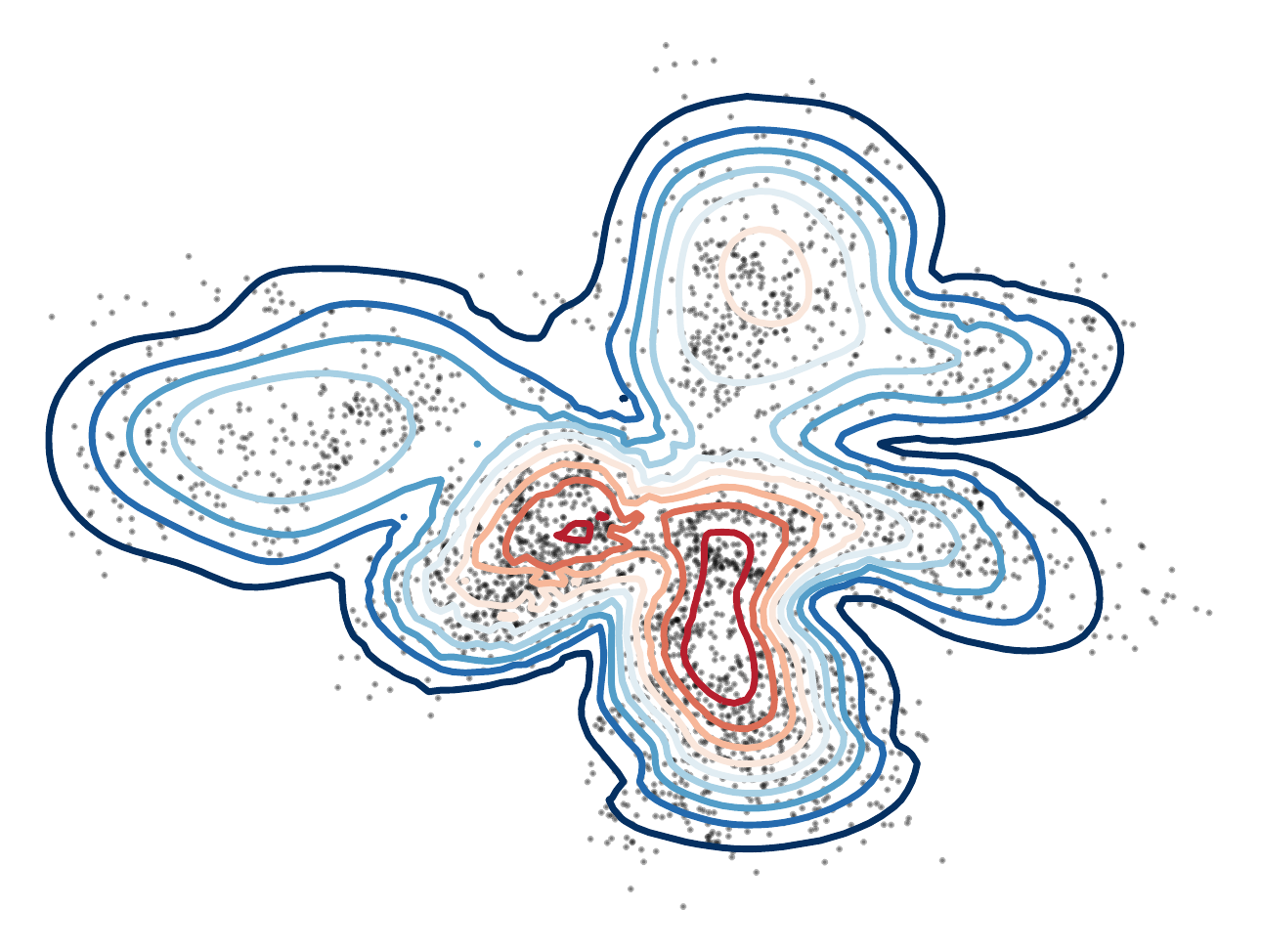}}
	\end{subfigure}
	~ 
	\begin{subfigure}{0.44\columnwidth}
	{\includegraphics[width=1\linewidth]{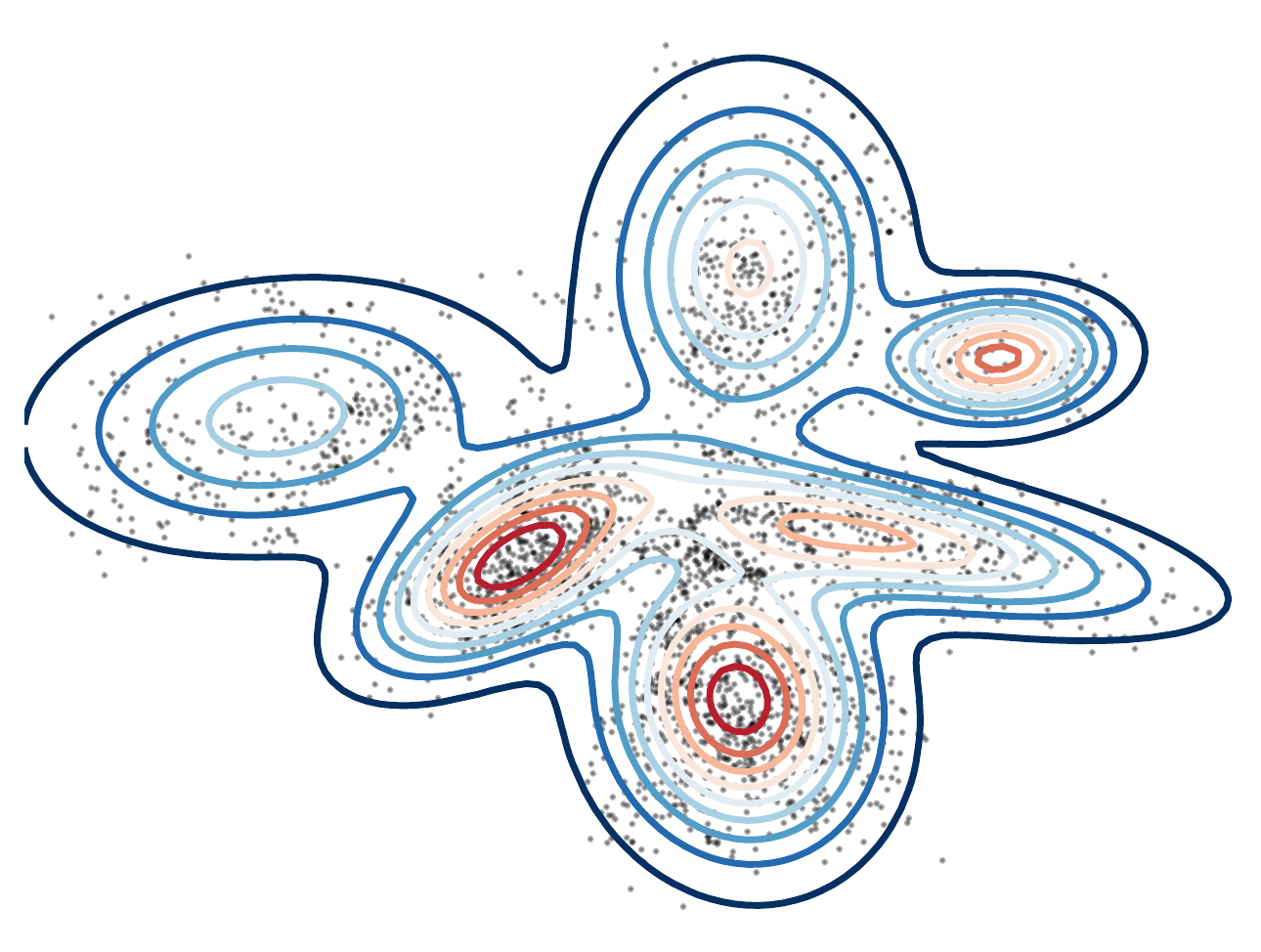}}
	\end{subfigure}
	\caption{A mixture of LANDs and a GMM on the Cortex data.}
	\label{fig:ex:land_cortex}
\end{figure}


We trained a VAE on mouse cortex cell data, which has a natural clustering \citep{zeisel:science:2015}. In Fig.~\ref{fig:ex:land_cortex} we compare a mixture of LANDs with a Gaussian mixture model (GMM), where we see that the LANDs adapts better to the representations, which can be useful for exploratory data analysis by experts (see App.~\ref{app:experiments_biology} for individual components). In addition, we can utilize the principal geodesics as a form of local \emph{disentanglement}, as they represent the directions with highest variance on the data manifold (see App.~\ref{app:experiments_biology}).


\begin{figure}[h]
\centering
	\begin{subfigure}{0.44\columnwidth}
	    {\includegraphics[width=1\linewidth]{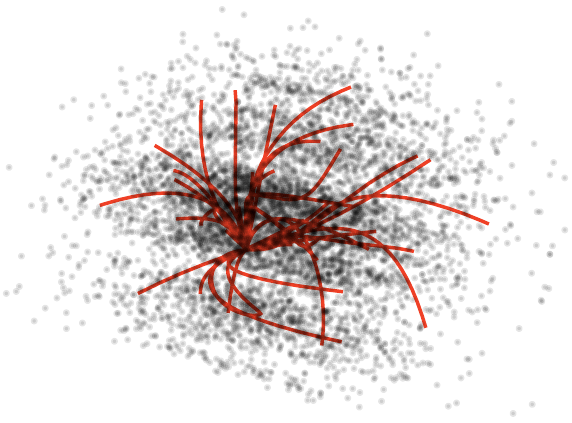}}
	\end{subfigure}
	~
	\begin{subfigure}{0.44\columnwidth}
	\lineskip=0pt
		\begin{tikzpicture}
		    \draw (0, 0) node[inner sep=0] {\includegraphics[width=0.4\columnwidth]{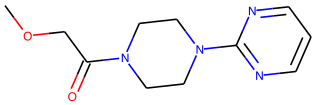}};
	    	\draw (0, -0.5) node {\tiny Start};
    	\end{tikzpicture}
	    ~
	    \begin{tikzpicture}
		    \draw (0, 0) node[inner sep=0] {\includegraphics[width=0.4\columnwidth]{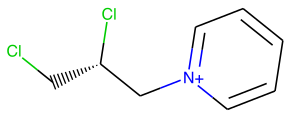}};
	    	\draw (0, -0.5) node {\tiny End};
    	\end{tikzpicture}
    	
    	\vspace{5pt}
    	
    	\begin{tikzpicture}
		    \draw (0, 0) node[inner sep=0] {\includegraphics[width=0.4\columnwidth]{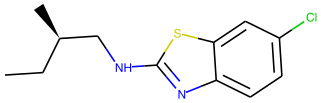}};
	    	\draw (0, -0.5) node {\tiny Mean ours};
    	\end{tikzpicture}
	    ~
	    \begin{tikzpicture}
		    \draw (0, 0) node[inner sep=0] {\includegraphics[width=0.4\columnwidth]{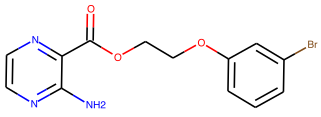}};
	    	\draw (0, -0.5) node {\tiny Mean linear};
    	\end{tikzpicture}
	\end{subfigure}
	\caption{Shortest paths in $\Z=\R^3$ and mean value comparison.}
	\label{fig:chemical_data}
\end{figure}


We used a subset of chemical compounds from the ZINC database \citep{zinc:2015} to train a recurrent VAE. Such data has an inherent natural structure that we capture in $\Z$. We see in Fig.~\ref{fig:chemical_data} that shortest paths respect the learned nonlinear structure (see also App.~\ref{app:experiments_biology}). This amounts to interpretable and meaningful interpolations, which can reveal biological information \citep{detlefsens:arxiv:2020}. As an example, we compare the linear and our shortest path mean.


\section{Conclusion}
\label{sec:conclusion}

We propose to capture the geometry of a data manifold  in the latent space of a generative model using a Riemannian metric that is inversely proportional to a learnable prior. In addition, we propose a suitable energy-based model for the learnable prior in a VAE context. Our analysis shows that the proposed metric is a sensible approximation of the true pull-back metric while being efficient and robust. Apart from its usefulness, our metric provides a way to implicitly take into account the data geometry during training a generative model, using interpretable regularizers in geometric forms.






\clearpage

%


\bibliography{paper_refs}
\bibliographystyle{icml2021}


\clearpage
\appendix

\twocolumn[
\icmltitle{Appendix: A prior-based approximate latent Riemannian metric}
]

\section{Riemannian geometry}
\label{app:riemannian_geometry}

A Riemannian manifold $\M$ is a smooth manifold together with a Riemannian metric that defines a smoothly changing local inner product that acts on the tangent space \citep{lee:2018, docarmo:1992}. The most intuitive way to conceptualize a Riemannian manifold is as a $d$-dimensional hypersurface embedded in a higher-dimensional ambient space $\X=\R^D$. The simplest Riemannian metric in this case is the restriction of the $\I_D$ on each tangent space $\tangent{\bm{x}}{\M}$. Essentially, the tangent space in this case is a $d$-dimensional vector space that touches $\M$ tangentially at $\bm{x}\in\M$. So a tangent vector ${\bm{v}}\in\tangent{\bm{x}}{\M}$ is actually a vector in $\R^D$. By definition, a smooth manifold can be covered by a collection of charts. A chart can be seen as a parametrization of a neighborhood on $\M$ formally written as $\phi_j:\mathcal{U}_j\subset\mathcal{M} \rightarrow \HH_j \subset\R^d$. In other words, a chart gives us $d$-dimensional coordinates that represent the points in a neighborhood $\mathcal{U}_j$. Moreover, on a smooth manifold the charts are diffeomorphisms by definition. 
However, for simplicity, we assume that a global chart $h(\cdot)$ exists, which gives a global paramtrization of the manifold, so we can write that $h:\HH \rightarrow\M$. The space $\HH$ is known as the \emph{intrinsic coordinates}.

Since $h(\cdot)$ is a differomorphism, we know that $\bm{J}_h:\HH \rightarrow \R^{D\times d}$ is full-rank, and hence, we can uniquely map a vector $\overline{\bm{v}} \in \HH$ from the intrinsic coordinates to a tangent vector ${\bm{v}}\in\tangent{\bm{x}}{\M}$ as ${\bm{v}} = \bm{J}_h(\bm{z}) \overline{\bm{v}}$. Therefore, assuming that the Riemannian metric in the ambient space $\X$ is the Euclidean ${\bm{M}}_\X(\cdot)=\I_D$, we can define the inner product in each tangent space as $\inner{{\bm{v}}}{{\bm{v}}}_\bm{x} = \inner{\overline{\bm{v}}}{{\bm{M}_\HH}(\bm{z}) \overline{\bm{v}}}$. Therefore, on the tangent space, which is actually a $d$-dimensional vector space the metric now is ${\bm{M}_\HH}(\bm{z}) = \bm{J}_h(\bm{z})^\intercal \bm{J}_h(\bm{z})$ and also changes for each point $\bm{x}=h(\bm{z})$. Intuitively, on the tangent space we represent a ``linearized'' view of $\M$ with respect to a base point $\bm{x}\in\M$. When working directly in the embedding space, the linear representation $\bm{v}$ is scaled by the metric ${\bm{M}_\X}(\bm{x})$, and equivalently, the $\overline{\bm{v}}$ is scaled by $\bm{M}_{\HH}(\bm{z})$ when working in intrinsic coordinates.

In addition, the metric $\bm{M}_{\HH}(\cdot)$ appears in the intrinsic coordinates $\HH$ and represents the amount of distortion caused to the infinitesimal volume element $d\bm{z}$ when mapped through $h(\cdot)$ on $\M$. Also, due to the fact that the chart is a diffeormorphism, we get that the metric is smooth as it is based on the Jacobian of $h(\cdot)$. Therefore, the embedding of a smooth manifold $\M$ in a higher dimensional ambient space with $\bm{M}_\X(\cdot)$ directly induces a Riemannian metric in the intrinsic coordinates $\HH$. In this work we assume that $\bm{M}_\X(\cdot)=\I_D$. 

This Riemannian metric allows us to compute distances between points on $\M$. Intuitively, it represents the distortions of the infinitesimal distance and volume element. In particular, let a curve $\gamma:[0,1]\rightarrow\M\subset\X$ with $\gamma(0)=\bm{x}$ and $\gamma(1)=\bm{y}$. We can measure the curve length on $\M$ by considering the curve simply lying in $\X$, so we get
\begin{align}
\label{app:eq:curve_length}
    \ell[\gamma] &= \int_0^1 \sqrt{\inner{\dot{\gamma}(t)}{\dot{\gamma}(t)}_{\gamma(t)}} dt = \int_0^1 \sqrt{\inner{\dot{\gamma}(t)}{\I_D \dot{\gamma}(t)}} dt \nonumber \\
    &=\int_0^1 \sqrt{\inner{\dot{c}(t)}{{\bm{M}_\HH}(c(t)) \dot{c}(t)}}dt = \ell[c],
\end{align}
where $\dot{\gamma}(t)=\partial_t \gamma(t) \in \tangent{\gamma(t)}\M$ is the velocity of the curve and $\gamma(t) = h(c(t))$. Here, we assumed that the metric of $\X$ is the Euclidean, however, other meaningful Riemannian metrics could have been use \citep{arvanitidis:arxiv:2020}. This result shows that instead of computing the length of a curve on $\M\subset\X$ we can equivalently compute it in the intrinsic coordinates $\HH$.

Moreover, we can find the shortest path i.e. the curve with minimum length, by optimizing the functional Eq.~\ref{app:eq:curve_length}. However, it is known that the length is parametrization invariant. In other words, we can reparametrize $t$ and get still the same length. Instead, the energy is not invariant under reparametrizations of $t$, and thus, we can find the curve with minimum energy by optimizing the energy functional
\begin{equation}
    \gamma^* = \argmin_{\gamma} \int_0^1 \inner{\dot{\gamma}(t)}{\dot{\gamma}(t)}_{\gamma(t)} dt,
\end{equation}
or equivalently, we can optimize this quantity using the curve $c(t)$ in the intrinsic coordinates $\HH$ using ${\bm{M}_{\HH}}(\cdot)$. In $\HH$ we can apply the Euler-Lagrange equations which gives us a system of second order nonlinear ordinary differential equations \citep{arvanitidis:iclr:2018}
\begin{align}
\label{eq:general_ode}
    \ddot{c}(t) 
    =&-\frac{1}{2}\bm{M}_{\HH}^{-1}(c(t))\Big[2 (\dot{c}(t)^\T \otimes \I_d) \parder{\vectorize{\bm{M}_{\HH}(c(t))}}{c(t)}\dot{c}(t) \nonumber\\
    &- \parder{\vectorize{\bm{M}_{\HH}(c(t))}}{c(t)}^\T (\dot{c}(t) \otimes \dot{c}(t))\Big],
\end{align}
that we need to solve in order to find the curve that minimizes the energy. The resulting curve is a minimizer of the length as well. Here, $\otimes$ is the Kronocker product and $\text{vec}[\cdot]$ stacks the columns of a matrix.

We can find the shortest path by solving the ODE system above as a boundary value problem (BVP) with $c(0)=\bm{z}_\bm{x}$ and $c(1)=\bm{z}_\bm{y}$ the corresponding points in $\HH$ of $\gamma(0)=\bm{x}$ and $\gamma(1)=\bm{y}$. Unfortunately, for general Riemannian manifolds the analytic solution is intractable, and thus, we rely on approximate numerical solutions \citep{arvanitidis:aistats:2019, hennig:aistats:2014, yang:arxiv:2018}. 

In order to perform computations on $\M$ or equivalently in the intrinsic coordinates $\HH$ we use two operators. The logarithmic map $\logmap{\bm{x}}{\bm{y}}={\bm{v}}\in\tangent{\bm{x}}{\M}$ takes two points $\bm{x},\bm{y}\in\M$ and returns a tangent vector on the tangent space of $\bm{x}$. The vector $\bm{v}$ can be seen as the initial velocity of the curve that starts at $\bm{x}$ and on time $t=1$ reaches the point $\bm{y}$. Essentially, since $\tangent{\bm{x}}{\M}$ is a vector space, this operator provides a linear representation of (a neighborhood on) $\M$ with respect to the base point $\bm{x}\in\M$. In practice, we compute the logarithmic map in the intrinsic coordinates by solving the ODE system as a Boundary Value Problem (BVP). The inverse operator is the exponential map that takes a point $\bm{x}\in\M$ and a vector ${\bm{v}}\in\tangent{\bm{x}}{\M}$ and returns a geodesic $\expmap{\bm{x}}{t\cdot{\bm{v}}} = \gamma(t)$ with $\gamma(1)=\bm{y}$. Again, we implement this operator in the intrinsic coordinates $\HH$ by solving the ODE system as an Initial Value Problem (IVP). The length of a tangent vector, as it lies on a tangent space, it is computed under the Riemannian metric and it is by definition $\text{length}[\gamma]=\inner{{\bm{v}}}{{\bm{v}}}_{\bm{x}}=\inner{\overline{\bm{v}}}{{\bm{M}_{\HH}}(\bm{z})\overline{\bm{v}}}=\text{length}[c]$, where $\gamma(t)$ and $c(t)$ the geodesics on $\M$ and $\HH$ respectively. We can rescale or reparametrize the intrinsic vector $\overline{\bm{v}}$ to $\widetilde{\bm{v}}$ such that the metric locally to become $\widetilde{\bm{M}}_{\HH}=\I_d$ so the $\text{length}[c] = \inner{\widetilde{\bm{v}}}{\widetilde{\bm{v}}}$. The new representation $\widetilde{\bm{v}}$ is known as the \emph{normal coordinates}.

For clarification, the tangent vector $\bm{v}\in\tangent{\bm{x}}{\M}$ in the ambient space $\X$ is a vector in $\R^D$ that is tangential to a $d$-dimensional $\M$ at the point $\bm{x}\in\M$. So the tangent space is a hyperplane that touches tangentially $\M$ at the point $\bm{x}$. Hence, $\M$ can be represented linearly on each tangent space, which is a $d$-dimensional vector space. On the other hand, an example of intrinsic coordinates for $\M$ can be see in Fig.~\ref{fig:geodesic_example}. In the intrinsic coordinates $\HH \subseteq\R^d$ the tangent space at a point $\bm{z}\in\HH$ is simply the $\R^d$ centered at $\bm{z}$. So we can linearly represent the intrinsic coordinates with respect to a base point $\bm{z}$ as vectors $\overline{\bm{v}}\in\R^d$ centered at $\bm{z}$. A second interpretation for $\overline{\bm{v}}\in\R^d$ is to be considered as the intrinsic representation of the vector $\bm{v}\in\R^D$ on the $d$-dimensional vector space $\tangent{\bm{x}}{\M}$.

The analysis above shows that essentially the Riemannian metric ${\bm{M}_{\HH}}$ and the intrinsic coordinates $\HH$ are enough in order to compute distances on a manifold $\M$. This further implies that as long as these quantities are given, then $\M$ could even be an abstract manifold. Unfortunately, in the setting where the manifold is implied by data that lie in $\X$, the Riemannian metric is usually unknown. Moreover, a unique chart rarely exists. In this case, we use a trick to capture the geometry of the data manifold.

\medskip

More specifically, let $\Z \subseteq \R^{d'}$ and we learn a function $g:\Z \rightarrow \M\subset\X$ that should be at least twice differentiable and not necessarily a differomorphism. Then, following the previous analysis we can induce a Riemannian metric $\bm{M}_{\Z}:\Z \rightarrow \R^{d'\times d'}_+$. The high level idea is that if a global chart existed and $\Z = \HH$ with  $g(\cdot) \approx h(\cdot)$ then the $\bm{M}_{\Z}(\cdot) \approx {\bm{M}}_{\HH}(\cdot)$. Even if this is rarely the case, the $\bm{M}_{\Z}(\cdot)$ is still able to capture some geometric properties of some regions of $\M$. This is known as the pull-back metric.

Essentially, let $\Z = \R^{d'}$, which is a smooth manifold with a trivial tangent space, and consider a Riemannian metric $\bm{M}_\Z(\cdot)$ therein. Computing curve lengths under this metric transforms $\Z$ into a Riemannian manifold. In some sense, this Riemannian manifold ``imitates'' or ``captures approximately'' the geometry of $\M$. In practice, the metric scales the distances locally in $\Z$, so it changes the way we measure curve lengths therein. However, in the data manifold regime as it has been shown from previous works \citep{arvanitidis:iclr:2018, tosi:uai:2014, hauberg:only:2018, eklund:arxiv:2019} the $g(\cdot)$ should be a stochastic generator in order to capture properly the geometry of $\M$ in a latent space $\Z$.

The proposed conformal metric in this paper is one way to approximate the behavior of the computationally expensive $\bm{M}_{\Z}(\cdot)$, since evaluating and derivating this metric relies on expensive computations. As we showed in the main paper (see Sec.~\ref{sec:new_metric_discussion}), the new metric in many cases is a sensible approximation to the actual pull-back metric. We showed that under some conditions, both metrics locally result to linear shortest paths. Also, we analyzed theoretically the behavior of the two metrics, where we argued that due to their actual definition both metrics induce the same ``topological'' structure in $\Z$. In other words, in both cases the shortest paths are pulled towards the training latent codes. Of course, there are also cases where the two metrics have the exact opposite behavior. However, the formulation of the proposed metric enable us to take it into account during training. Therefore, we can add regularizers to make the two metrics more similar or even to include interepretable inductive biases in the form of geometric regularizers.

\textbf{Identifiability} in our context considers the preservation of the distance measure between points under diffeomorphic reparametrizations of the intrinsic coordinates. In particular, let two functions $g_1:\Z_1\subseteq\R^d\rightarrow\M\subset\X$ and $g_2:\Z_2\subseteq\R^d\rightarrow\M\subset\X$, where $g_2(\cdot) = T\circ g_1(\cdot)$ with $T(\cdot)$ a diffeomorphic transformation. The reparametrization directly implies that for any pair of points $\bm{x}_1, ~\bm{y}_1\in \Z_1$ and the corresponding points $\bm{x}_2,~\bm{y}_2\in \Z_2$ the Euclidean distance in general is $||\bm{x}_1 - \bm{y}_1||_2 \neq ||\bm{x}_2 - \bm{y}_2||_2$. However, the curve length on the manifold $\M$ between $\bm{x}=g_1(\bm{x}_1)=g_2(\bm{x}_2)$ and $\bm{y}=g_1(\bm{y}_1)=g_2(\bm{y}_2)$ does not change. Note that when we measure the length of a curve using the pull-back metric in $\Z_1$ or $\Z_2$, then we actually measure the length directly on $\M$. Therefore, if both functions $g_1(\cdot)$ and $g_2(\cdot)$ generate $\M$, then we know that the curve length is the same in both parametrizations when measured under each corresponding pull-back metric. In other words, if for any arbitrary learned parametrization $g_j(\cdot)$ the generated $\M$ remains the same, then the distance measured under the corresponding pull-back is invariant.

\section{Riemannian metrics from data}
\label{app:riemannian_metrics}

There are several ways to construct a Riemannian metric from a given set of observations. Here, we present some methods that have been proposed in the literature.

\citet{hauberg:nips:2012} proposed a Riemannian metric as a weighted sum of a predefined set of metric tensors. In particular, let $\bm{M}_{1:K} \in\R_+^{D\times D}$ a predefined set of positive definite metric tensors centered at points $\bm{x}_{1:K}\in\R^D$. Then, the metric at new points $\bm{x}$ is computed as
\begin{equation}
    \bm{M}(\bm{x}) = \sum_{k=1}^K \widetilde{\bm{w}}_k(\bm{x}) \bm{M}_k,
\end{equation}
where $\bm{w}_k(\bm{x}) = \exp\left(- \frac{|| \bm{x}_k -\bm{x}||_2^2}{2 \sigma^2}\right)$, $\sigma>0$ the bandwidth or support of the kernel and $\widetilde{\bm{w}}_k(\bm{x}) = \frac{\bm{w}_k(\bm{x})}{\sum_{l=1}^K \bm{w}_k(\bm{x})}$. In this case, the predefined metrics can be estimated using additional information e.g. labels. The bandwidth controls how large is the neighborhood from which we consider the predefined metrics. Clearly, it is hard to find the optimal parameter $\sigma$. Especially, when the dimension of the space is high, so the curse of dimensionality influences the kernel's behavior. Finally, one downside of this metric is that as we move away from the training data, the magnification factor does not necessarily increase. Because the normalized weights still select some of the predefined metrics.

In a similar spirit \citet{arvanitidis:neurips:2016} proposed an unsupervised approach to construct a Riemannian metric from data. In particular, the metric is defined as the inverse local diagonal covariance matrix, so the diagonal elements of the metric are computed as
\begin{equation}
    M_{jj}(\bm{x}) = \left[\sum_{n=1}^N \bm{w}_n(\bm{x})(x_{nj} - x_j) + \rho\right]^{-1},
\end{equation}
where $\bm{w}_n(\bm{x}) = \exp\left(- \frac{|| \bm{x}_n -\bm{x}||_2^2}{2 \sigma^2}\right)$, the parameter $\sigma>0$ is again the bandwidth and $\rho>0$ a parameter to upper bound the metric. The influence of $\sigma$ can be explained, in some sense it controls the curvature of the metric i.e., how fast the metric changes. However, again finding the optimal parameter is a challenging task. Regarding $\rho$, it is chosen as a small value such that to pull shortest paths near the data. 

A conformally flat Riemannian metric has been proposed by \citet{arvanitidis:arxiv:2020}. The metric is defined as
\begin{equation}
    \bm{M}(\bm{x}) = (\alpha \cdot r(\bm{x}) + \beta)^{-1} \cdot \I_D,
\end{equation}
where $\alpha, \beta>0$ are parameters to lower and upper bound the metric. Here, the function $r(\cdot)$ is modeled as a positive RBF $r(\bm{x}) = \bm{w}^\intercal \bm{\phi}(\bm{x})$, with $\bm{w}\in\R^{K}_{>0}$ and $\phi_k(\bm{x}) = \exp\left(- \frac{|| \bm{c}_k -\bm{x}||_2^2}{2 \sigma^2}\right)$ for some centers $\bm{c}_k$ near the training data. So the behavior is $r(\bm{x}) \rightarrow 1$ near the training data and $r(\bm{x})\rightarrow 0$ as we move away from them. Again here, the problem is how to find the optimal parameters $\sigma$ as well as the kernel behavior in higher dimensions.

Even if the kernel based Riemannian metrics above are simple and meaningful, their performance is rather limited. The main problem is the selection of the bandwidth $\sigma$, as well as the behavior of the kernel especially in higher dimensions. For this reason, another line of work proposed to learn Riemannian metrics in the latent space of a generative model. This approach allows to reduce the dimensionality of the problem. Even if theoretically the resulting  metrics capture precisely the data manifold's geometry, in practice, they rely on some form of a kernel as well. Moreover, their usability is hindered by inevitable computational complexity.

Here we focus on deep generative models and more specifically on Variational Auto-Encoders \citep{kingma:iclr:2014, rezende:icml:2014}. However, the same analysis can be done in the context of Gaussian Processes with GPLVMs \citep{tosi:uai:2014}. Let a stochastic generator $\bm{x}=g(\bm{z}) = \mu_\theta(\bm{z}) + \sigma_\theta(\bm{z}) \cdot \varepsilon$ with $\varepsilon \sim \mathcal{N}(0, \I_D)$. Obviously, this stochastic function is not differentiable as it is a non-smooth function due to $\varepsilon$. Instead, fixing $\varepsilon$ makes the function smooth. This can be seen as generating a whole surface $g(\Z)\subset\X$ for a fixed noise vector $\varepsilon$, which in expectation results in a distribution of points that converges to the actual generative process. \citet{eklund:arxiv:2019} viewed this step as a random projection of a smooth surface that lies in a higher dimensional space $[\mu_\theta(\bm{z}), \sigma_\theta(\bm{z})] \in\R^{D^2}$ using the projection matrix $\text{block}\_\text{diag}([\I_D, \I_D \cdot \varepsilon])\in\R^{D\times {D^2}}$.

Therefore, we can compute the Jacobian of $g(\cdot)$ and then, compute the expected Riemannian metric in $\Z$ as
\begin{align}
    \bm{M}_\theta(\bm{z}) &= \mathbb{E}_\varepsilon [\bm{J}^\intercal_g(\bm{z}) \bm{J}_g(\bm{z})]\nonumber\\
    &=\bm{J}^\intercal_{\mu_\theta}(\bm{z})\bm{J}_{\mu_\theta}(\bm{z}) + \bm{J}^\intercal_{\sigma_\theta}(\bm{z})\bm{J}_{\sigma_\theta}(\bm{z}),
\end{align}
which is known as the pull-back metric. This an interpretable and meaningful metric, since the second term that is based on the uncertainty makes sure that the shortest paths prefer to stay in regions of the latent space with low uncertainty. To achieve this we need the $\sigma_\theta(\cdot)$ to increase as we move further from the latent codes. The solution proposed by \citet{arvanitidis:iclr:2018} is to utilize a positive RBF to model the precision i.e. the inverse variance. So, as we move further from the latent codes due to the RBF behavior the uncertainty increases. Even if this Riemannian metric seems as a reasonable solution, as we discussed in the main paper (Sec.~\ref{sec:learned_latent_riemannian_manifolds}) it comes with some practical disadvantages.

For this metric, there is a hyperparameter that we need to set. In particular, the $\sigma^2_\theta(\bm{z}) = \xi_\theta(\bm{z})^{-1}$ where
\begin{equation}
\label{eq:prec_rbf_defintion}
    \xi_\theta(\bm{z}) = \bm{W} \cdot 
    \begin{bmatrix}
          \exp(-0.5 \cdot \lambda \cdot ||\bm{z} - \bm{z}_1||^2_2) \\
          \exp(-0.5 \cdot \lambda \cdot ||\bm{z} - \bm{z}_2||^2_2) \\
          \vdots \\
          \exp(-0.5 \cdot \lambda \cdot ||\bm{z} - \bm{z}_K||^2_2)
    \end{bmatrix} + \zeta,
\end{equation}
with $\bm{W}\in\R^{D\times K}_{>0}$ and $\zeta>0$ is the lower bound for the precision or equivalently an upper bound for the uncertainty. Hence, $\zeta$ implicitly influences the metric, as it controls the point where $\bm{J}_{\sigma_\theta}(\cdot)$ becomes nearly constant. Practically, it does not allow the precision to become nearly zero. Before $\bm{J}_{\sigma_\theta}(\cdot)$ becoming nearly constant, the corresponding part of the pull-back metric achieves its maximum value. This sets the boundaries around the latent codes, which represents in some sense the topology of the data manifold in $\Z$. Also, it affects the maximum magnification factor.

As we discussed in the main paper, usually the training of the VAE is done using a deep neural network $\overline{\sigma}^2_\theta(\cdot)$ to model the uncertainty of the generator, and then, we train post-hoc the RBF, as a regression problem or using again the ELBO while fixing the other functions. So a practical way to set $\zeta$ is after the first phase of the training to compute the mean variance $\overline{\sigma}^2_{\text{mean}} =\frac{1}{N \cdot D} \sum_{n=1}^N \sum_{j=1}^D [\overline{\sigma}^2_{\theta}(\bm{z}_n)]_j$ of the training latent codes. Then we can set $\zeta = (\alpha \cdot \overline{\sigma}^2_{\text{mean}})^{-1}$ where $\alpha>0$ is a multiplicative factor e.g. $\alpha = 1000$. This is heuristic way to fix the hyperparameter $\zeta$.

In order the magnification factor to be as comparable as possible across different Riemannian metrics, we propose to scale each metric. As regards the pull-back metric we compute the magnification factor on the training latent codes and we find the maximum $m_{\text{max}} = \argmax_{\bm{z}_n}\sqrt{|\bm{M}_\theta(\bm{z})|}$. Then, we rescale the metric as 
\begin{equation}
\label{app:eq:pull_back_metric_scaling}
    \bm{M}_\theta(\bm{z}) \triangleq \frac{1}{m^{\nicefrac{2}{d}}_{\text{max}}}\left[ \bm{J}^\intercal_{\mu_\theta}(\bm{z})\bm{J}_{\mu_\theta}(\bm{z}) + \bm{J}^\intercal_{\sigma_\theta}(\bm{z})\bm{J}_{\sigma_\theta}(\bm{z})\right],
\end{equation}
which ensures that the maximum magnification factor on the training latent codes is 1. Here, if $m_{\text{max}}$ is a huge value, then the metric is scaled dramatically especially on the rest of the latent codes. This is precisely what we see in Sec.~\ref{sec:exp:efficiency_robustness}.

Given a probability density $\nu_\psi(\cdot)$ function that represents the prior of the training latent codes, we proposed in this paper the conformally flat Riemannian metric
\begin{equation}
	\bm{M}_\psi(\bm{z}) = m(\bm{z}) \cdot \I_d = {(\alpha \cdot \nu_\psi(\bm{z}) + \beta)^{-\nicefrac{2}{d}}} \cdot \I_d,
\end{equation}
where $\alpha,~\beta>0$ two hyperparameter that lower and upper bound the metric respectively.
Interestingly, this metric does not depend in any way on a kernel. Also, the parameters $\psi$ of the metric can be learned during the VAE training. As regards the hyperparameters, we can set $\beta = \nicefrac{1}{m_\text{max}}$ where $m_\text{max}$ is the highest value that $\sqrt{\bm{M}_\psi(\cdot)}$ can get. Typically, the $m_{\text{max}}$ is set to a large value like 100. Then, we find the lowest prior value on the training latent codes ${\nu_{\text{min}}} = \argmin_{\bm{z}_n} \nu_\psi(\bm{z})$ and set the $\alpha = \nicefrac{(1 - \beta)}{{\nu_{\text{min}}}}$. This ensures that the maximum magnification factor on the training latent codes is 1. Since this number is $1\ll m_{\text{max}}$ the shortest paths will be pulled towards the training latent codes.

Clearly, the proposed metric does not depend on a kernel, which makes it robust. Also, does not rely on any Jacobian computation, so it is efficient. While its hyperparameters can be fixed relatively easier, which makes it simple.

\section{Training details for the proposed prior}

Our learnable prior is based on energy-based models \citep{lecun:energy:2006}, and is defined as
\begin{equation}
    \nu_\psi(\bm{z}) = \frac{\exp(f_\psi(\bm{z})) \cdot p(\bm{z})}{\C},
\end{equation}
where $f_\psi:\Z \rightarrow \R$ is a deep neural network and the base prior $p(\bm{z}) = \mathcal{N}(0, \I_d)$. Since $\Z$ is typically a low dimensional space, the normalization constant can be computed using naive Monte Carlo as
\begin{equation}
    \mathcal{C} = \int_\Z \exp(f_\psi(\bm{z})) \cdot p(\bm{z}) d\bm{z} \approx \frac{1}{S} \sum_{s=1}^S \exp(f_\psi(\bm{z}_s)),
\end{equation}
where $\bm{z}_s \sim p(\bm{z})$. In addition, when we train the VAE we include the latent codes of the training batch in the estimation of the normalization constant. This helps to prevent the function $f_\psi(\cdot)$ of getting extreme values. In theory, the samples from $p(\bm{z})$ should be enough such that to regularize the function $f_\psi(\cdot)$. However, especially in higher dimensions the number of samples $S$ might not be large enough, in order to successfully regularize $f_\psi(\cdot)$. For this reason, we included the latent codes of the training batch, which we empirically observed to work well i.e., extreme values of $f_\psi(\cdot)$ and $\mathcal{C}$ does not occur.

Of course, in the training objective of the VAE (see Eq.~\ref{eq:our_new_elbo}), due to the $\log(\cdot)$ that is applied on the constant, we used the log-sum-exp trick in order to stabilize the training. Also, we are able to regularize the prior using temperature. In fact, we can use a temperature parameter $T$ in the exponent $\exp(T\cdot f_\psi(\bm{z}))$, for which a large $T$ gives a more complex prior and a smaller $T$ gives a smoother prior \citep{lecun:energy:2006}. Similarly, we can regularize the prior implicitly by applying standard regularization techniques for the parameters of $f_\psi(\cdot)$ e.g. $L_2$ regularization for the weights.

The actual contribution of the normalization constant is to regularize implicitly the prior to be nearly zero in regions of $\Z$ with no latent codes. This is important, because it ensures that the magnification factor increases as we move further from the latent codes. In other words, this helps to approximate well the geometry (and topology) of the data manifold. However, after the training, the normalization constant does not affect anymore the metric, especially since we set the parameters $\alpha, \beta$ as explained above.

\section{Theoretical analysis of the proposed metric}

In this section we provide the theoretical results discussed in the main paper (see Sec.~\ref{sec:new_metric_discussion}). Apart from the demonstrations in the main paper (see Sec.~\ref{sec:exp:constructive_examples}), we provide additional empirical evaluations of these results in App.~\ref{app:constructive_examples}. 

\begin{proposition}
    \label{app:prop_straight_lines_prop}
    Let $\bm{M}_\theta(\cdot)$ and $\bm{M}_\psi(\cdot)$ the Riemannian metrics over the latent space $\Z$. We consider a neighborhood $\mathcal{U}$ of the data manifold $\M\subset\X$ and based on the manifold hypothesis, we assume that the data lie uniformly around $\mathcal{U}$. Let us suppose that in the corresponding region in $\Z$:
    \begin{enumerate}
        \item The density $\nu_\psi(\cdot)$ is approximately uniform.
        \item The generator's uncertainty $\sigma_\theta(\cdot)$ is approximately constant and in addition $\mu_\theta(\cdot)$ has low curvature.
    \end{enumerate}
Then for both the conformal $\bm{M}_\psi(\cdot)$ and the pull-back metric $\bm{M}_\theta(\cdot)$ the shortest paths are approximately straight lines.
\end{proposition}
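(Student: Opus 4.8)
The plan is to show that under each set of assumptions the relevant metric becomes \emph{approximately constant} when expressed in the latent coordinates, and then to invoke the elementary fact that a metric which is constant in a chart has vanishing Christoffel symbols, so that its geodesic equation reduces to $\ddot{c}(t)=\bm{0}$, whose solutions are exactly straight lines. I would treat the two metrics separately, since their geodesic ODEs take different (though analogous) forms.

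For the conformal metric $\bm{M}_\psi(\cdot)$ I would start from the simplified system in Eq.~\ref{eq:conformal_metric_ode}. Assumption~1 states that $\nu_\psi(\cdot)$ is approximately uniform on the region, and since $m(\bm{z}) = (\alpha\cdot\nu_\psi(\bm{z})+\beta)^{-\nicefrac{2}{d}}$ is a smooth function of $\nu_\psi$, this forces $m(\cdot)$ to be approximately constant there, hence $\nabla m(\cdot)\approx \bm{0}$. Substituting $\nabla m \approx \bm{0}$ into Eq.~\ref{eq:conformal_metric_ode} makes the entire numerator vanish, while the denominator $2\,m(c(t))$ stays bounded away from zero because $\beta>0$ guarantees $m(\cdot)\geq(\alpha\cdot\max\nu_\psi+\beta)^{-\nicefrac{2}{d}}>0$. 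Therefore $\ddot{c}(t)\approx \bm{0}$ and the shortest paths are approximately straight lines.

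For the pull-back metric $\bm{M}_\theta(\cdot)$ I would argue that the two parts of Assumption~2 make the full matrix in Eq.~\ref{eq:pullback_vae_metric} approximately constant. Since $\sigma_\theta(\cdot)$ is approximately constant, its Jacobian satisfies $\bm{J}_{\sigma_\theta}(\cdot)\approx \bm{0}$, so the second (uncertainty) term drops out. The assumption that $\mu_\theta(\cdot)$ has low curvature I would interpret as smallness of its second-order derivatives, i.e. locally $\mu_\theta(\bm{z})\approx A\bm{z}+\bm{b}$ with $\bm{J}_{\mu_\theta}(\cdot)\approx A$ constant; consequently $\bm{M}_\theta(\bm{z})\approx A^\T A$ is approximately constant. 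Feeding a constant metric into the general geodesic system Eq.~\ref{eq:general_ode}, the derivative $\parder{\vectorize{\bm{M}_\theta(c(t))}}{c(t)}$ vanishes, so again $\ddot{c}(t)\approx\bm{0}$ and the shortest paths are approximately straight lines.

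The main obstacle is making the word \emph{approximately} precise, and in particular interpreting ``low curvature'' correctly. Note that being a straight line is a chart-dependent notion: a flat metric written in curvilinear coordinates has curved coordinate geodesics, so intrinsic flatness alone does not suffice. What is actually needed is that $\bm{M}_\theta(\cdot)$ be nearly constant \emph{in the latent coordinates}, which is precisely the smallness of $\partial_k \bm{J}_{\mu_\theta}$ (the second derivatives of $\mu_\theta$) rather than the smallness of the intrinsic Riemannian curvature of the induced metric. Once this interpretation is fixed, I would close the argument with a continuity/perturbation statement: the right-hand sides of both geodesic ODEs are smooth functions of the metric and its first derivatives, which in turn are smooth functions of $\nabla m$ and of $\partial_k\bm{J}_{\mu_\theta}$ and $\bm{J}_{\sigma_\theta}$; these driving terms are $O(\epsilon)$ under the assumptions, so by continuous dependence of ODE solutions on their right-hand side the geodesics deviate from straight lines by $O(\epsilon)$ on the bounded region $\mathcal{U}$.
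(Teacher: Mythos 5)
Your proposal is correct and follows essentially the same route as the paper's own proof: condition 1 gives $\nabla m \approx \bm{0}$ so the conformal ODE in Eq.~\ref{eq:conformal_metric_ode} is a small perturbation of $\ddot{c}(t)=\bm{0}$, and condition 2 gives $\bm{J}_{\sigma_\theta}\approx\bm{0}$ and an approximately constant $\bm{J}_{\mu_\theta}$ so the metric derivative in Eq.~\ref{eq:general_ode} vanishes, with a perturbation/continuous-dependence argument closing both cases. Your added caveat that ``low curvature'' must be read as smallness of the second derivatives of $\mu_\theta$ in the latent chart (rather than intrinsic curvature) is a sharper statement of the interpretation the paper also adopts implicitly.
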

\begin{proof}
    From condition 1 we know that the $m(\bm{z})$ should be approximately constant in the corresponding region of $\Z$. More precisely, there exists a sufficiently small number $\epsilon > 0 $ so the $\| \nabla m(\bm{z}) \| \leq \epsilon $ in this region. Therefore, the ODE system (see Eq.~\ref{eq:conformal_metric_ode}) becomes a small perturbation of $\ddot{c}(t)=0$ and by basic ODE theory the solutions of the two systems are close (e.g. in $C^2$-norm). Moreover, the solution to the latter differential equation is the straight line. Note that we solve this ODE as a BVP problem with $c(0)$ and $c(1)$ the given boundary conditions.
    
    
    From condition 2 we know that $\sigma_\theta(\cdot)$ is approximately constant, which implies that the $\bm{J}_{\sigma_\theta}(\cdot)$ goes to $\bm{0}_{D\times d}$. The low curvature of $\mu_\theta(\cdot)$ implies that the Jacobian $\bm{J}_{\mu_\theta}(\cdot)$ is approximately constant. For example, if the map locally is linear, then it has zero curvature and the Jacobian is constant. In the general ODE system (see Eq.~\ref{eq:general_ode}) we need the derivative of the metric. In this case, this quantity will be approximately zero, since the low curvature implies that the metric will not change locally. Therefore, as above the ODE system becomes $\ddot{c}(t)=0$, where again the solution is the straight line.
\end{proof}


We are able to provide a more general result that relates any pull-back metric $\bm{M}_\theta(\cdot)$ with a conformal metric $\bm{M}_\psi(\cdot)$. Briefly, in a local neighborhood let us consider a bounded pull-back metric with small (Riemannian) curvature, i.e. the metric is almost locally isometric to a flat Euclidean space, and the corresponding volume form is tightly controlled. Then, we can reparametrize this neighborhood so that $\bm{M}_\theta(\cdot)$ becomes conformally flat, and in addition, we can rescale it such that the magnification factor becomes equal to $\sqrt{|\bm{M}_\psi(\cdot)|}$.

\begin{proposition}
\label{app:prop:reparametrization_straight_lines}
    Let $\bm{M}_\theta(\cdot)$ be the Riemannian metric over the latent space $\Z$ as above and let us suppose that:
    \begin{enumerate}
		\item The curvature tensor $R(\cdot)$ associated to $\bm{M}_\theta(\cdot)$ satisfies $\|R(\cdot)\|_\infty \leq \kappa$ for a sufficiently small positive $\kappa$;
		\item There exist constants $m_1, m_2$ with $|m_2 - m_1|$ sufficiently small, so that the volume element $\sqrt{|\bm{M}_\theta(\cdot)|}$ satisfies $ m_1 \leq \sqrt{|\bm{M}_\theta(\bm{z})|} \leq m_2 $ for each $\bm{z}$ in the chart.
	\end{enumerate}
	Then there exists a reparametrization $\tilde{\bm{z}}$ equipped with the conformal metric $\bm{M}_\psi(\tilde{\bm{z}})$ (given by a pointwise-diagonal matrix with equal entries whose volume form agrees with $\bm{M}_\theta(\bm{z})$ at corresponding points as above), so that the geodesics are approximately given by straight lines and locally the volume of the geodesic balls $B_\rho(\bm{z})$ and $B_\rho(\tilde{\bm{z}})$ are the same.
\end{proposition}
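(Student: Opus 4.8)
The plan is to reduce everything to a near-flat situation via geodesic normal coordinates and then control both metrics by perturbation off the Euclidean metric. First I would fix a point $p$ in the chart and introduce geodesic normal coordinates $\tilde{\bm{z}}$ for $\bm{M}_\theta(\cdot)$ centered at $p$; this is the reparametrization named in the statement, and ``corresponding points'' refers to the associated diffeomorphism. By the standard Taylor expansion of a metric in normal coordinates, the curvature bound of condition~1 gives, on a geodesic ball $B_\rho$,
\begin{equation}
    \left| [\bm{M}_\theta]_{ij}(\tilde{\bm{z}}) - \delta_{ij} \right| \leq C\kappa\rho^2, \qquad \left| \partial_k [\bm{M}_\theta]_{ij}(\tilde{\bm{z}}) \right| \leq C\kappa\rho,
\end{equation}
so that both the metric and its first derivatives are $O(\kappa)$-close to those of $\I_d$. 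The same expansion controls the volume form, $\sqrt{|\bm{M}_\theta(\tilde{\bm{z}})|} = 1 - \tfrac{1}{6}\mathrm{Ric}_{kl}\tilde{z}^k\tilde{z}^l + O(|\tilde{\bm{z}}|^3)$, yielding $|\nabla \sqrt{|\bm{M}_\theta|}| = O(\kappa\rho)$, while condition~2 keeps this quantity in the narrow range $[m_1, m_2]$ across the whole chart.

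Next I would handle the geodesics of $\bm{M}_\theta(\cdot)$. Because the Christoffel symbols are $O(\kappa\rho)$ by the derivative bound above, the geodesic equation is a small perturbation of $\ddot{c}(t) = 0$; a Gr\"onwall / continuous-dependence argument for the associated boundary value problem then shows the geodesics are $O(\kappa)$-close (say in $C^1$) to the straight segment joining the same endpoints, exactly as in the proof of Prop.~\ref{app:prop_straight_lines_prop}. I would then define the conformal metric by matching volume forms: set $m(\tilde{\bm{z}}) := |\bm{M}_\theta(\tilde{\bm{z}})|^{1/d}$ so that $\bm{M}_\psi = m\cdot\I_d$ satisfies $\sqrt{|\bm{M}_\psi|} = m^{d/2} = \sqrt{|\bm{M}_\theta|}$ pointwise, which is the required agreement of volume forms and places $\bm{M}_\psi$ in the prior-based family of Eq.~\ref{eq:new_conformal_metric}. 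By the estimates above $m$ is nearly constant with $\nabla m = O(\kappa\rho)$, so the conformal ODE system Eq.~\ref{eq:conformal_metric_ode} is again a small perturbation of $\ddot{c}(t)=0$, and the same stability argument shows the geodesics of $\bm{M}_\psi(\cdot)$ are approximately straight lines.

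Finally, for the geodesic-ball volumes I would invoke the curvature expansion $\mathrm{Vol}(B_\rho) = \omega_d\rho^d\big(1 - \tfrac{\mathrm{Scal}(p)}{6(d+2)}\rho^2 + O(\rho^4)\big)$. For $\bm{M}_\theta$ the scalar curvature is $O(\kappa)$ by condition~1, and for $\bm{M}_\psi = m\,\I_d$ the conformal scalar-curvature formula expresses $\mathrm{Scal}$ through $\Delta(\log m)$ and $|\nabla(\log m)|^2$, both $O(\kappa)$ by the gradient control on $m$; hence both balls have volume $\omega_d\rho^d(1 + O(\kappa\rho^2))$, and since the volume forms coincide and both balls are $O(\kappa)$-close to the same Euclidean ball, the two volumes agree up to the stated approximation.

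The step I expect to be the genuine obstacle is controlling the derivatives of the conformal factor $m$. Condition~2 only bounds the range of the volume form, which does not by itself bound its gradient; I would therefore lean on the normal-coordinate curvature estimate (which does control $\nabla\sqrt{|\bm{M}_\theta|}$ by $O(\kappa\rho)$), or alternatively on an interpolation inequality under a standing $C^2$-regularity assumption, to convert small oscillation into a small gradient. A related subtlety is that in dimension $d\geq 3$ exact conformal flatness is obstructed by the Weyl tensor, so the reparametrization cannot render $\bm{M}_\theta$ itself conformal; the proposition sidesteps this by comparing $\bm{M}_\theta$ to the separately-constructed $\bm{M}_\psi$, and all comparisons are therefore ``approximate'', with errors governed by $\kappa$ and $|m_2-m_1|$.
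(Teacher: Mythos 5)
Your proposal is correct and follows essentially the same route as the paper's proof: normal coordinates centered at a point, the curvature-controlled Taylor expansions of the metric and Christoffel symbols, ODE perturbation theory to get approximately straight geodesics, and a volume-form-matching construction of the conformal metric. Your explicit choice $m = |\bm{M}_\theta|^{1/d}$ and your remarks on gradient control and the Weyl obstruction are slightly more careful than the paper's terse "pointwise rescale," but the argument is the same.
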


\begin{proof}
    Let us suppose w.l.o.g. that the origin $0 \in \mathbb{R}^n$ is contained in the chart and let us consider a normal coordinate neighbourhood $(B_r(0), \overline{\bm{z}})$ around $0$. It is well-known that in these coordinates the metric tensor $\overline{\bm{M}}(\Bar{\bm{z}})$ at $0$ is Euclidean and the following expansion of the metric holds:
    \begin{equation}
		\overline{M}_{ij}(\overline{\bm{z}}) = \delta_{ij} - \frac{1}{3} \overline{R}_{ikjl}(0) \overline{z}_k \overline{z}_l + O(|\overline{\bm{z}}|^3),
	\end{equation}
    where $\overline{R}$ denotes the respective curvature tensor. In fact, using Jacobi fields one can obtain higher-order expansions whose coefficients are again given by expressions of the curvature tensor. Hence, for every positive number $\epsilon$, if $\kappa$ is small enough, then $|\overline{M}_{ij}(\overline{\bm{z}}) - \delta_{ij}| \leq \epsilon$ for each $\overline{\bm{z}} \in B_r(0)$. Moreover, a similar expansion holds for the Christoffel symbols of the Levi-Civita connection induced by $\overline{M}_{ij}$:
    \begin{equation}
        \Gamma^{i}_{jk}(\overline{\bm{z}}) = -\frac{1}{3} \left( \overline{R}_{ikjl}(0) - \overline{R}_{ijkl}(0)  \right) \overline{z}_l + O(|\overline{\bm{z}}|^2),
    \end{equation}
    hence, for eventually choosing a smaller $\kappa$ one has $|\Gamma^i_{jk}(\overline{\bm{z}})| \leq \epsilon$. Now considering the geodesic equations for $\overline{\bm{M}}(\overline{\bm{z}})$ and the flat Euclidean metric at $\bm{M}_e(\overline{\bm{z}})$, by basic perturbation theory for ODEs, one sees that the solutions (i.e. the geodesic curves) within $B_r(0)$ satisfy:
    \begin{equation}
        \| \gamma_{\bm{M}_e} - \gamma_{\overline{\bm{M}}} \|_{C^\infty} \leq \epsilon, 
    \end{equation}
    if $\kappa $ is chosen small enough (shrunk further). This implies that the geodesic distances induced by the Euclidean metric $\bm{M}_e(\overline{\bm{z}})$ and $\overline{\bm{M}}(\bm{z})$ are similar i.e., straight lines. 
    
    Finally, let us pointwise rescale the coordinates $\overline{\bm{z}}$ to $\tilde{\bm{z}}$ so that the metric $\bm{M}_e(\overline{\bm{z}})$ agrees with the conformal metric $\bm{M}_\psi(\tilde{\bm{z}})$ - in particular, the volume forms at $\tilde{\bm{z}}$ and $\bm{z}$ agree (since we assumed that $|m_2 - m_1|$ is sufficiently small, the rescale is essentially given by constant multiplication). Here $(\tilde{\bm{z}}, \bm{M}_\psi(\tilde{\bm{z}}))$ is the conformal metric as defined above. Moreover, by construction the volumes of the geodesic balls $B_\rho(\tilde{\bm{z}})$ and $B_\rho(\bm{z})$ agree.
\end{proof}

Obviously, the Prop.~\ref{app:prop:reparametrization_straight_lines} implies that if $\Z$ is the latent space where the pull-back metric is defined, if we reparametrize it to get an equivalent conformally flat metric we work over a new space $\Z'$. This is not easily applicable and useful in our setting, since we are interested to compute shortest paths directly in the latent space $\Z$ of the generative model. However, when Prop.~\ref{app:prop_straight_lines_prop} applies, then the reparametrization from $\Z$ to $\Z'$ is actually the indentity map. Also, we are able to rescale the magnification factors such that the maximum value on the training latent codes is the same (see App.~\ref{app:riemannian_metrics}). This result implies that shortest paths are approximately straight lines and with equal curve length under each corresponding metric. For a demonstration see App.~\ref{app:constructive_examples}.

\section{Experimental details}
\label{app:experiments}

In this section, we give additional details about our experimental setting and our implementations. The source code can be found \url{here}\footnote{Source code publicly available in Github upon acceptance}.

\subsection{Shortest path solver}
\label{app:shortest_path_solver}

One of the main tools that we use in our experiments is the solver for ODE system. Specialized approximate numerical solvers have been proposed \citep{hennig:aistats:2014, arvanitidis:aistats:2019, yang:arxiv:2018} mainly for efficiency. However, usually the off-the-self numerical solvers provide more accurate solutions, especially logarithmic maps, and for this reason our approach is based on the SciPy's BVP solver. Commonly, methods of this type implement a version of Newton's method, and thus, convergence heavily relies on the initial solution. For this reason, we use a heuristic graph based solver, in order to provide a curve to the BVP solver as an initial solution.

First, we construct a $k$-NN graph in the latent space $\Z$ by using the Euclidean distances to find the neighbors. Once we construct the graph, we assign as edge weights the length of the straight line computed under the Riemannian metric. Essentially, a large edge weight informs us that this is not a good connection. For two test points, we find their $k$-NN neighbors on the graph again using the Euclidean metric first, and then, we update the weights of the edges using the Riemannian metric. This step enables us to use as the starting and ending point, the nodes of the graph that are closer to the test points. Then, we can find the discrete shortest path on the weighted graph using Dijkstra's algorithm.

This returns a sequence of points, but for which the starting and ending points are not the test points. So once we have the sequence, we replace the two points on the edges with the test points. In order to smooth the final path we apply a filter. We update each point except the boundary ones as $\bm{p}_i = (\bm{p}_{i-1} + \bm{p}_{i} + \bm{p}_{i+1})/{3}$. Of course, we are able to apply more sophisticated filtering techniques. Finally, we use a \emph{cubic spline} to interpolate the filtered points, including the boundary points, which are essentially the test points.

Of course, this is a heuristic solution and does not satisfy the corresponding ODE system. However, in many cases it constitutes a sufficiently good initial solution, which helps the BVP solver to converge. So for our solver, if we do not have a previous solution computed, we initialize the solver with the heuristic graph based curve. If the BVP solver fails, then we return the graph based curve as our solution. However, in cases where the logarithmic map is necessary (as the LAND optimization) and the BVP solver fails, we exclude the point from the current step of the algorithm. The reason is that the logarithmic map of the graph based curve is rather arbitrary, and thus, we do not use it.

\subsection{Details for the comparison of the priors}
\label{app:prior_comparisson}

Here we present the details for our generative modeling experiment where we compare three priors in a VAE setting. In particular, we used two settings, a Convolutional VAE and a standard VAE. In Table~\ref{app:tab:cvae_prior} and Table~\ref{app:tab:vae_prior} we present the details of each setting. Note that for the standard VAE we projected the data using PCA in 100 dimensions with whittening, so the given data are in $\X=\R^{100}$. This step keeps $>90\%$ of the data variance and on the same time allows to train an RBF network post-hoc in order to induce the pull-back metric in $\Z$. We used the standard data splitting train/test both for MNIST and FashionMNIST\footnote{https://pytorch.org/docs/stable/torchvision/datasets.html}.

For the RBF network that we used to model $\sigma_\theta(\cdot)$ we use $K=100$ components. The RBF network is fitted post-hoc. In practice, in the first phase of the VAE training we use a deep neural network to model $\overline{\sigma}^2_\theta(\cdot)$ and in the second phase we train individually the RBF's weights i.e. as a regression problem or using the ELBO keeping the rest of the functions fixed. The RBF's centers are trained with $k$-means using the training latent codes. Then, using the points in each cluster we compute the corresponding covariance matrices, and then, for the bandwidth of each kernel we use the minimum variance on the diagonal of each covariance. This approach guarantees that the centers will be near the latent codes and that the bandwidths will be small enough such that to capture precisely the stucture of the latent codes. Note that training the RBF with the VAE implies that the centers are trainable so it could happen that some centers are moved far from the latent codes. Also, it is hard to pre-specify the bandwidth of the kernels, and in addition, if this parameter is trainable usually overestimated bandwidths occur. As we mention in the main paper for the VampPrior we used $K=500$ trainable inducing points.

\begin{figure}[t]
    \centering
    \begin{subfigure}{0.31\linewidth}
        \includegraphics[width=1\linewidth]{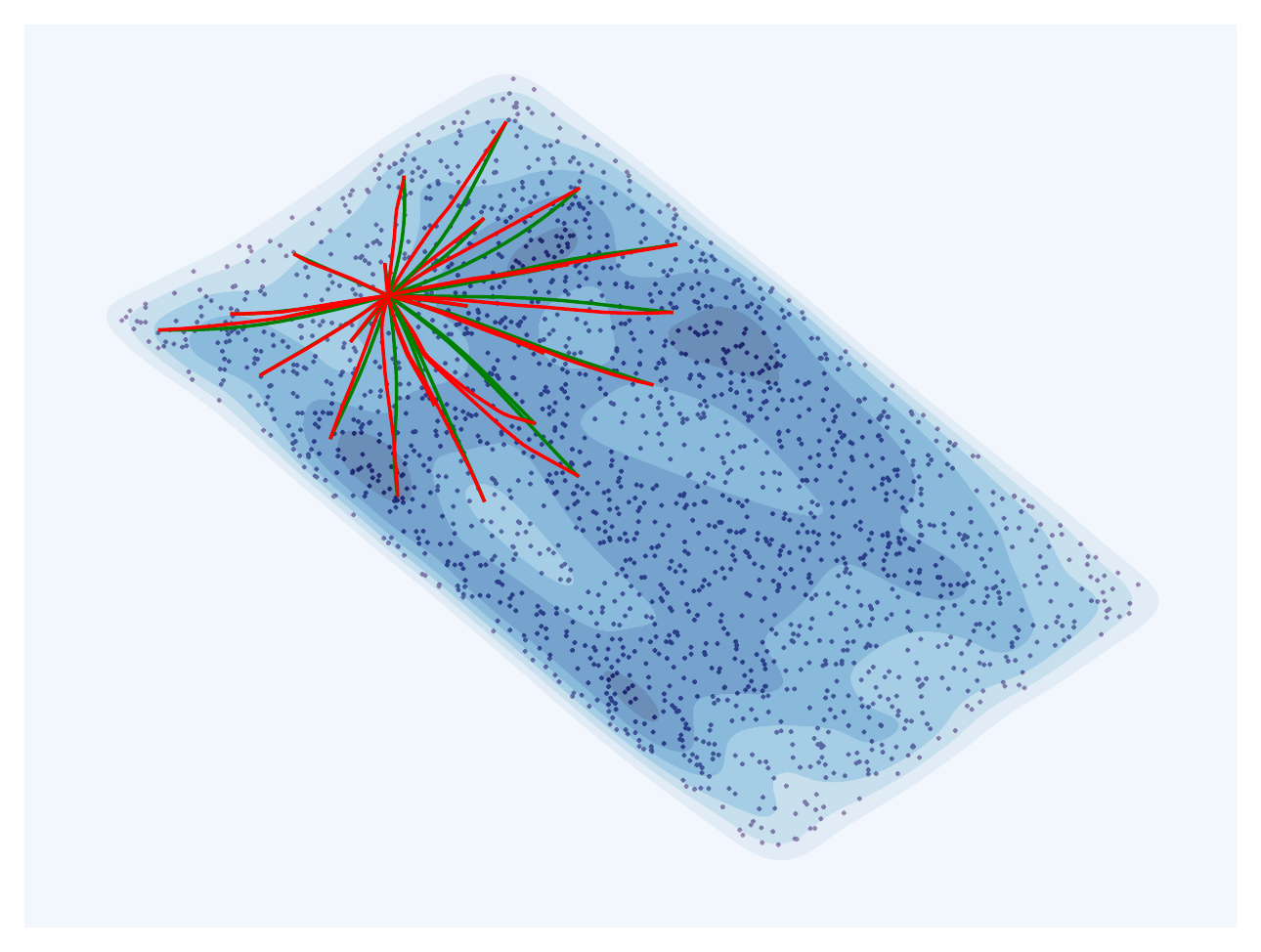}
    \end{subfigure}
    ~
    \begin{subfigure}{0.31\linewidth}
        \includegraphics[width=1\linewidth]{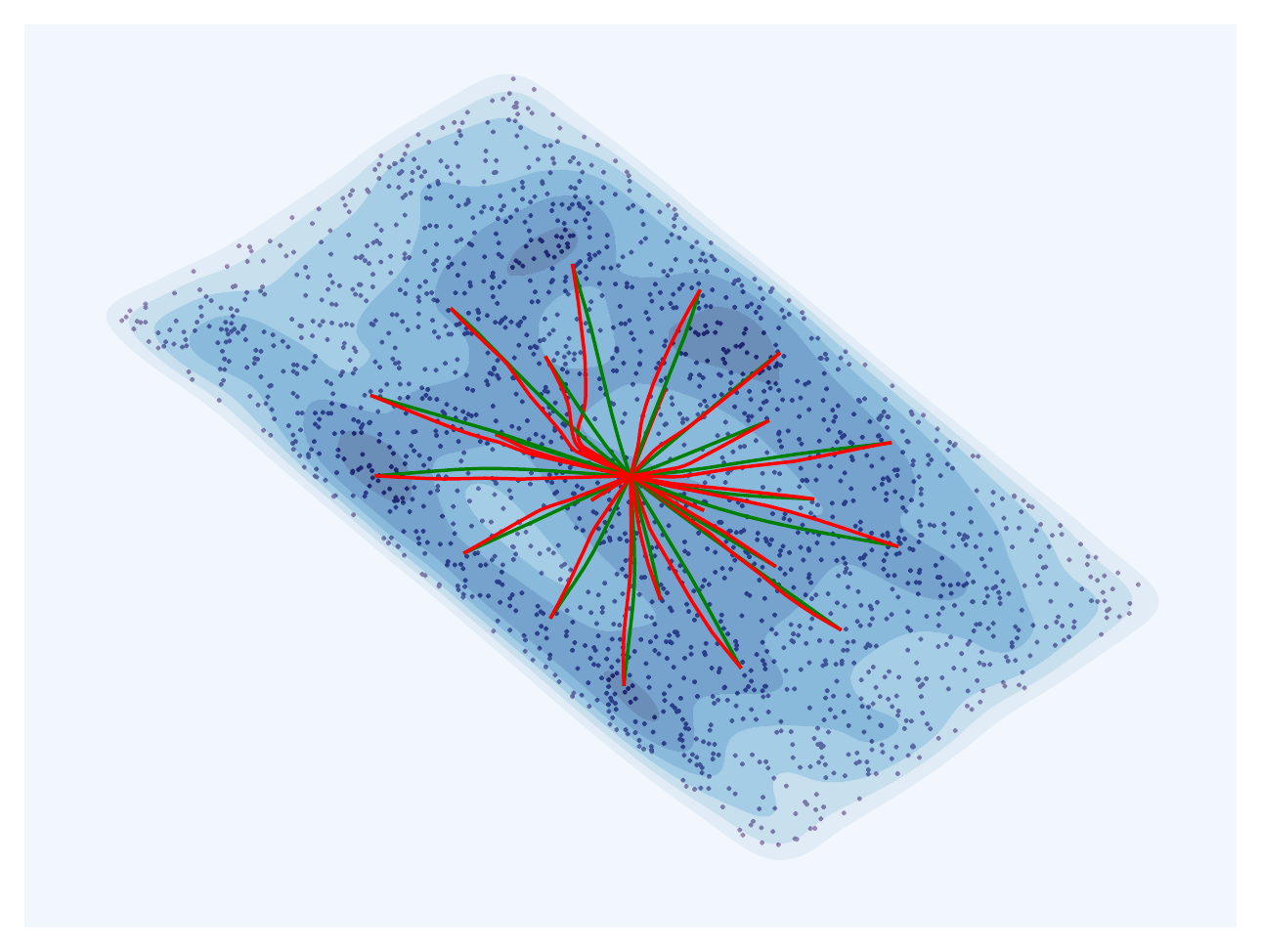}
    \end{subfigure}
    ~
    \begin{subfigure}{0.31\linewidth}
        \includegraphics[width=1\linewidth]{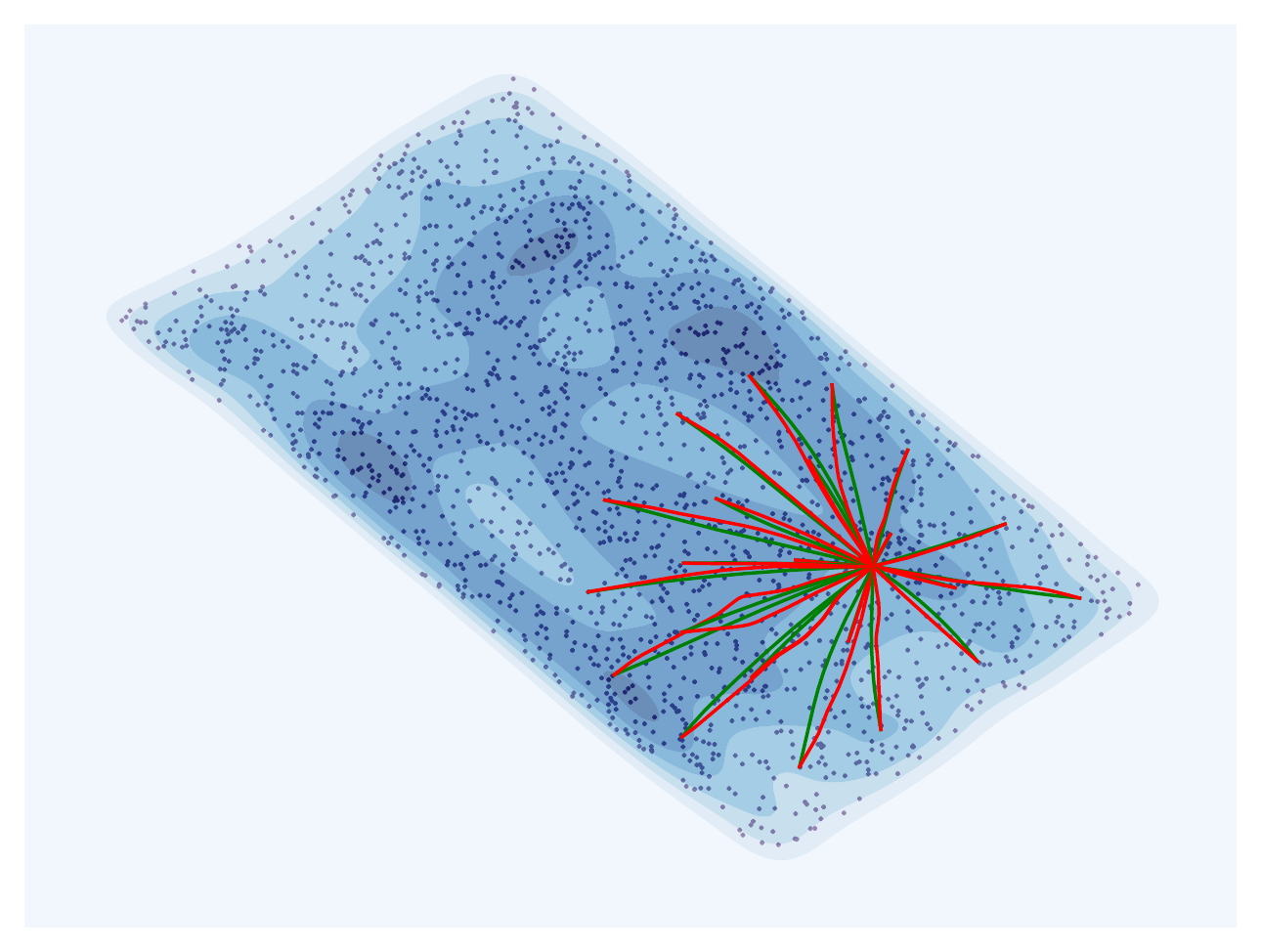}
    \end{subfigure}
    
    \begin{subfigure}{0.31\linewidth}
        \includegraphics[width=1\linewidth]{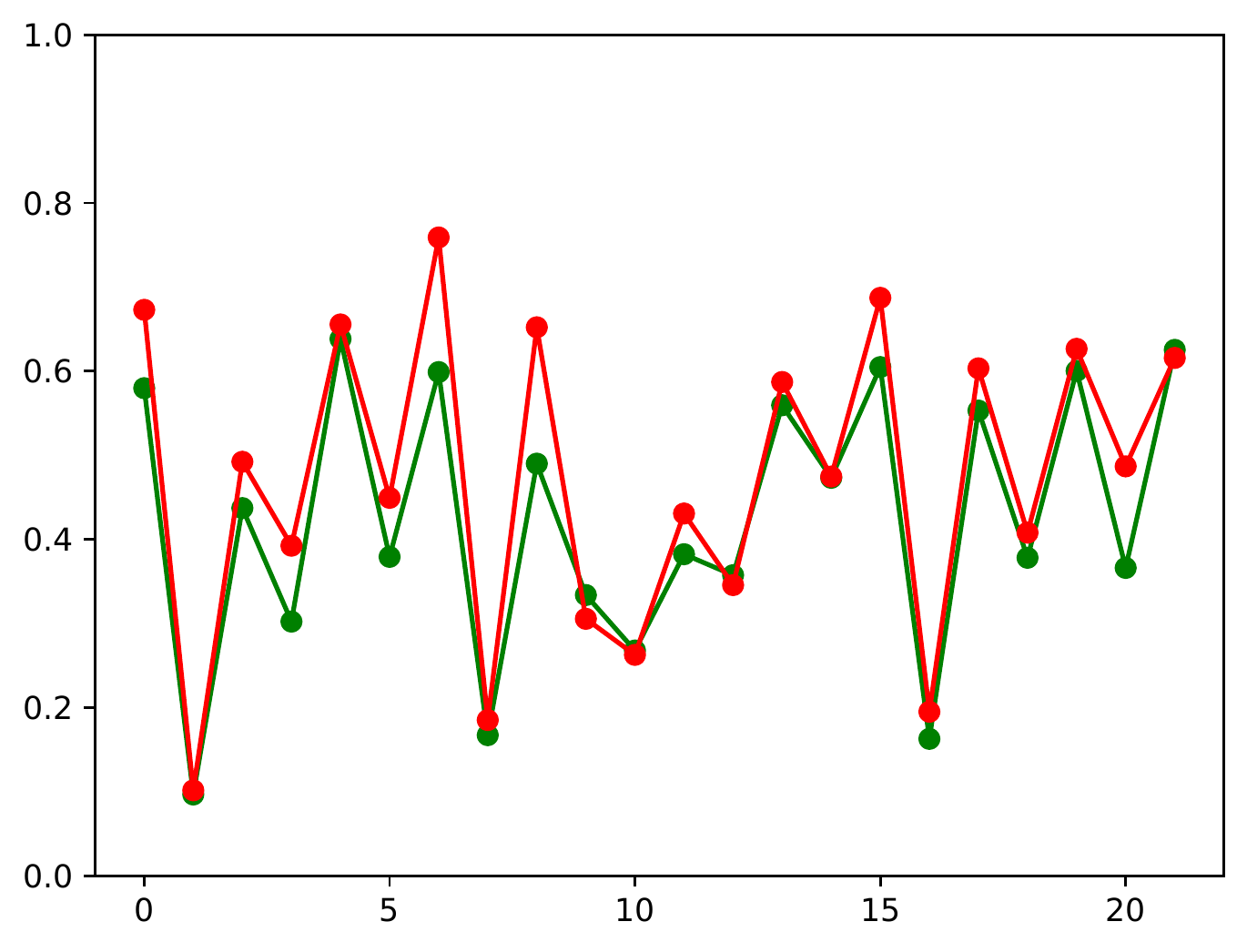}
    \end{subfigure}
    ~
    \begin{subfigure}{0.31\linewidth}
        \includegraphics[width=1\linewidth]{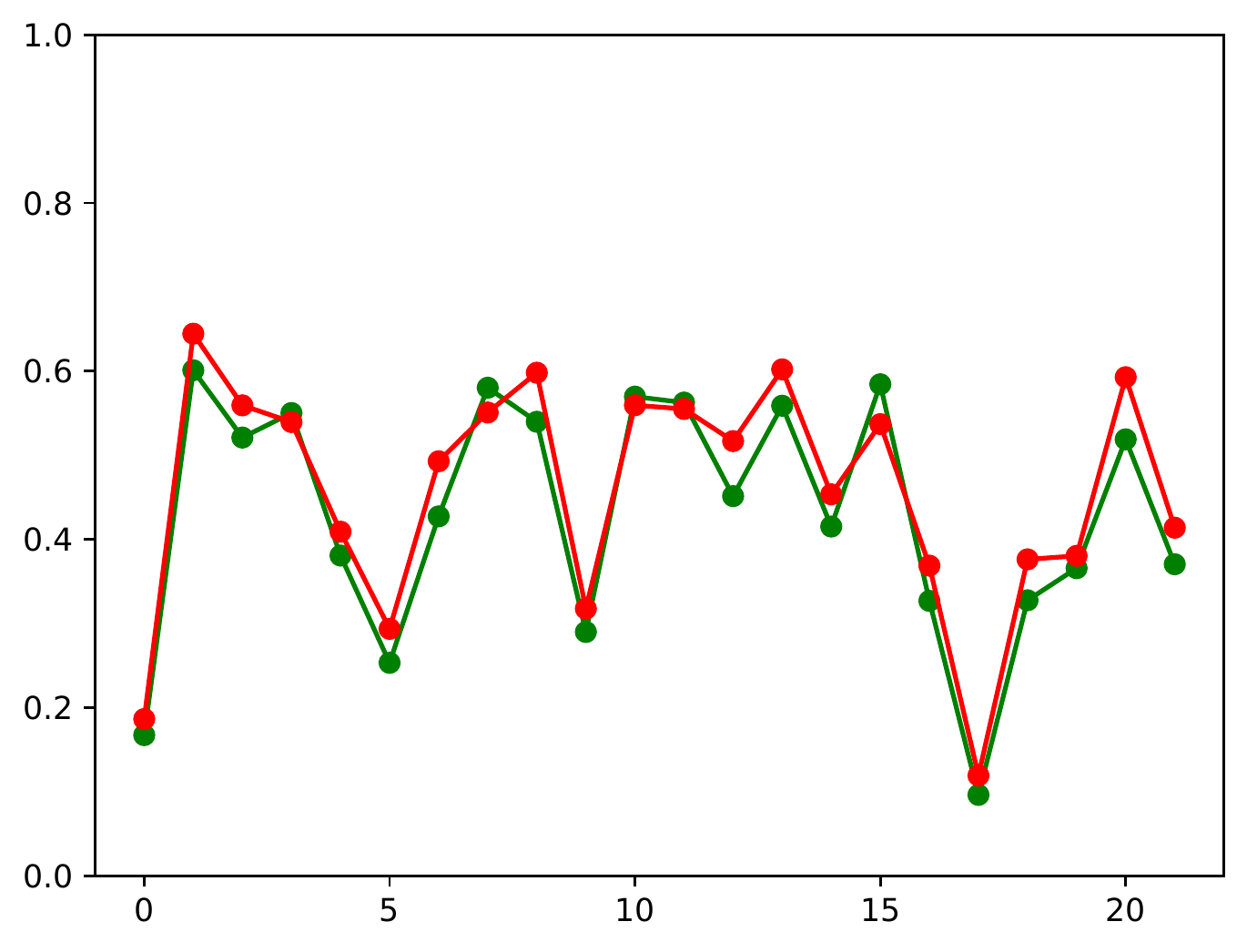}
    \end{subfigure}
    ~
    \begin{subfigure}{0.31\linewidth}
        \includegraphics[width=1\linewidth]{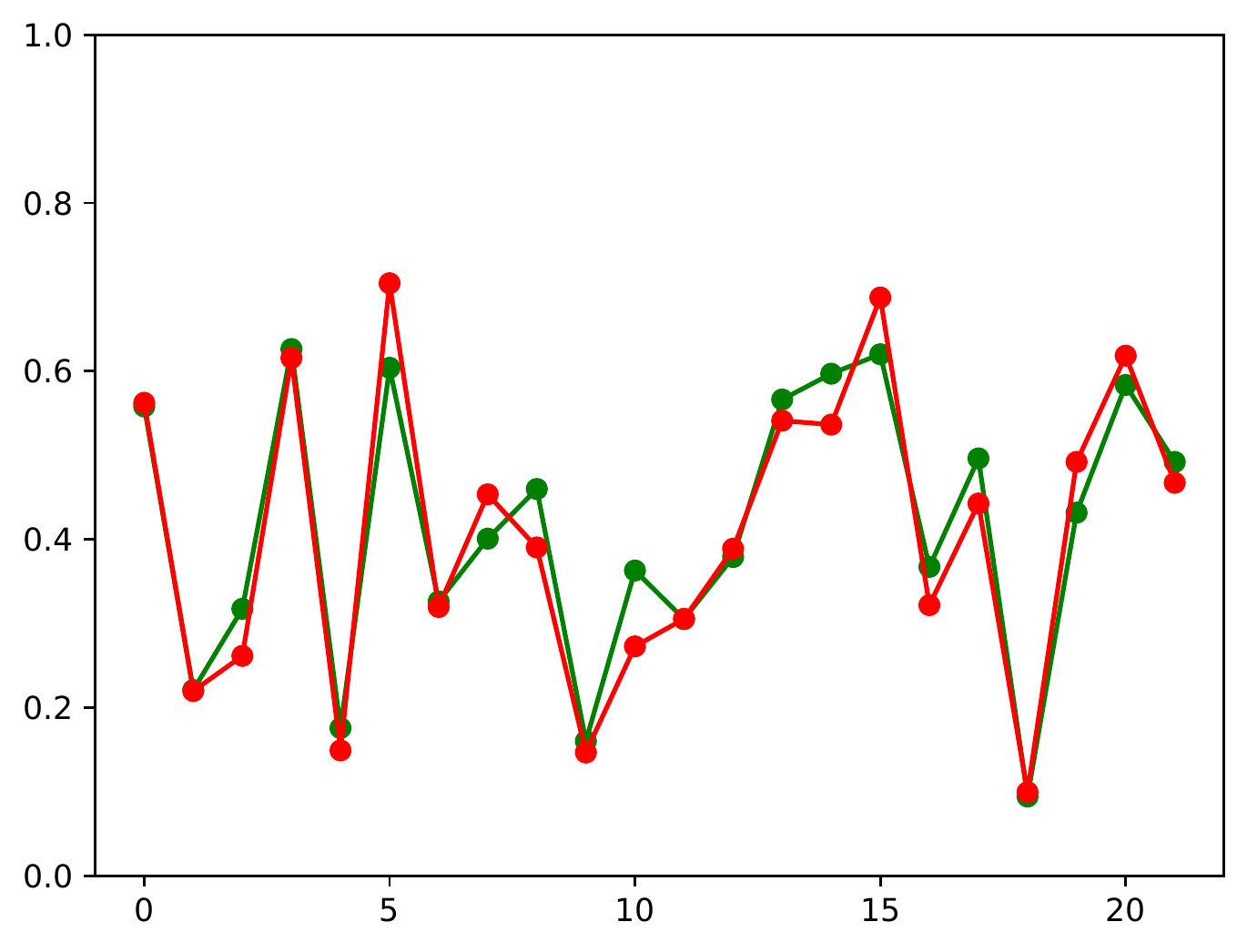}
    \end{subfigure}
    \caption{\emph{Top row}: Comparing the shortest path distance computed under each metric $\bm{M}_\psi(\cdot)$ with green and $\bm{M}_\theta(\cdot)$ with red. When conditions in Prop.~\ref{pror:straight_lines_prop} hold, then shortest paths are approximately straight lines. Since the metric are scaled to be maximum 1 on the training latent codes, the curve lengths are approximately equal. \emph{Bottom row}: Each dot in the graphs correspond to the length of a curve, while the connecting lines only for cleaner illustration.}
    \label{app:constructive_no_curv}
\end{figure}

\begin{table}[b]
    \centering
    \begin{tabular}{c|c c c }
        $\text{encoder}_\phi(\cdot)$ & $3\times3 \times 32$ & $3\times3 \times 32$ & $3\times3 \times 32$ \\
         $\mu_\phi(\cdot)$ & $\text{encoder}_\phi(\cdot)$ & flatten & MLP(512, d) \\
         $\log(\sigma^2_\phi(\cdot))$ & $\text{encoder}_\phi(\cdot)$ & flatten & MLP(512, d)\\
        \midrule
        $\mu_\theta(\cdot)$ & MLP(d, 512) & unflatten & $\text{decoder}_\theta(\cdot)$ \\
        $\text{decoder}_\theta(\cdot)$ & $4\times4 \times 32$ & $4\times4 \times 32$ & $4\times4 \times 32$
    \end{tabular}
    \caption{The convolutional VAE details. Encoder: We used convolutional filters with padding $1$ only in the first two filters, stride $2$ only in the first $2$ filters and $1$ in the final filter. Decoder: We used transposed convolutions with padding $1$ only in the last two filters, stride $2$ only in the last $2$ filters and $1$ in the fist filter. Also, we applied a final convolution filter $3\times3\times1$ with stride $1$ and padding $1$ to provide a smooth output. We used \texttt{tanh} activations both for the convolutional and the MLPs layers.}
    \label{app:tab:cvae_prior}
\end{table}

\begin{table}[b]
    \centering
    \begin{tabular}{c|c c }
        $\text{encoder}_\phi(\cdot)$ & MLP(D, H) & MLP(H, H) \\
         $\mu_\phi(\cdot)$ & $\text{encoder}_\phi(\cdot)$ & MLP(H, d) \\
         $\log(\sigma^2_\phi(\cdot))$ & $\text{encoder}_\phi(\cdot)$ & MLP(H, d)\\
         \midrule
        $\text{decoder}_\theta(\cdot)$ & MLP(d, H) & MLP(H, H) \\
         $\mu_\theta(\cdot)$ & $\text{decoder}_\theta(\cdot)$ & MLP(H, D) \\
         $\log(\sigma^2_\theta(\cdot))$ & $\text{decoder}_\theta(\cdot)$ & MLP(H, D)\\
         
    \end{tabular}
    \caption{The VAE encoder details. We used \texttt{tanh} activations for the MLPs.}
    \label{app:tab:vae_prior}
\end{table}

We trained all the parameters using the Adam optimizer with learning rate $1e^{-3}$. The batch size is 128. The number of epochs 500. For the normalization constant estimation in our proposed prior we used $10\cdot$batch\_size samples from $p(\bm{z})$ and we further include the batch latent codes as samples. This helps to regularize the behavior of $f_\psi(\cdot)$.

\subsection{Details for the constructive examples}
\label{app:constructive_examples}

\begin{figure}[t]
    \begin{subfigure}{0.31\linewidth}
        \includegraphics[width=1\linewidth]{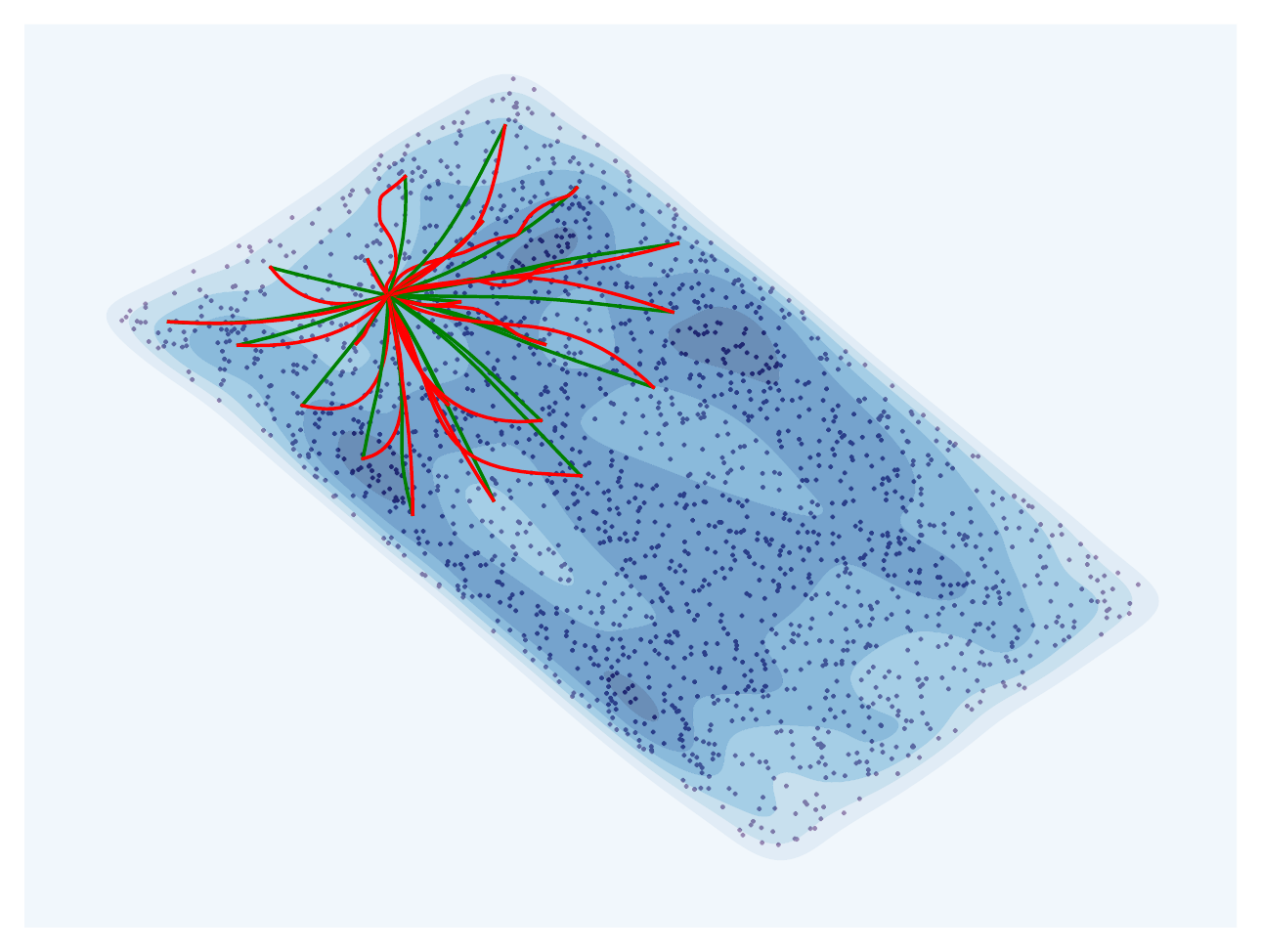}
    \end{subfigure}
    ~
    \begin{subfigure}{0.31\linewidth}
        \includegraphics[width=1\linewidth]{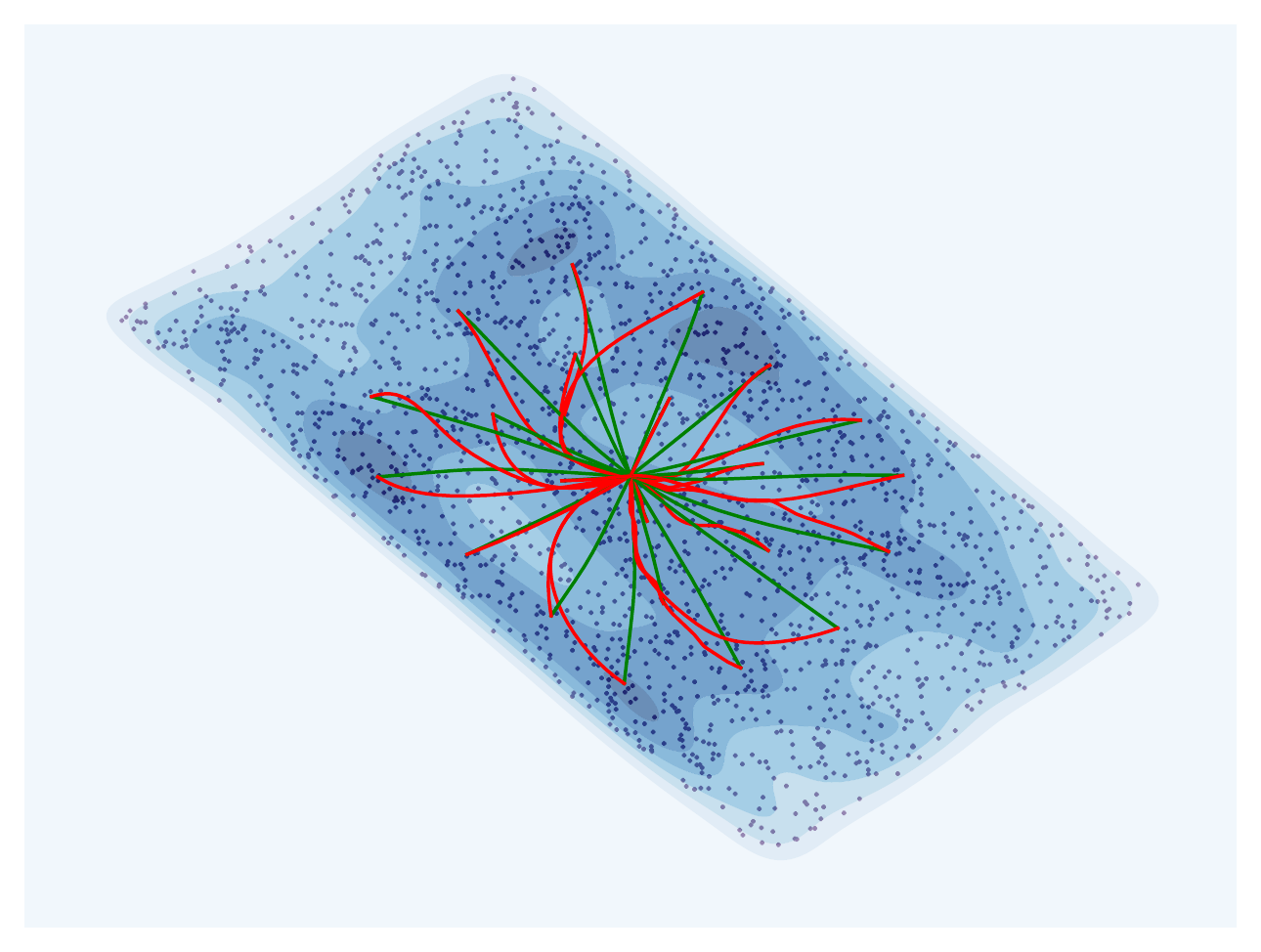}
    \end{subfigure}
    ~
    \begin{subfigure}{0.31\linewidth}
        \includegraphics[width=1\linewidth]{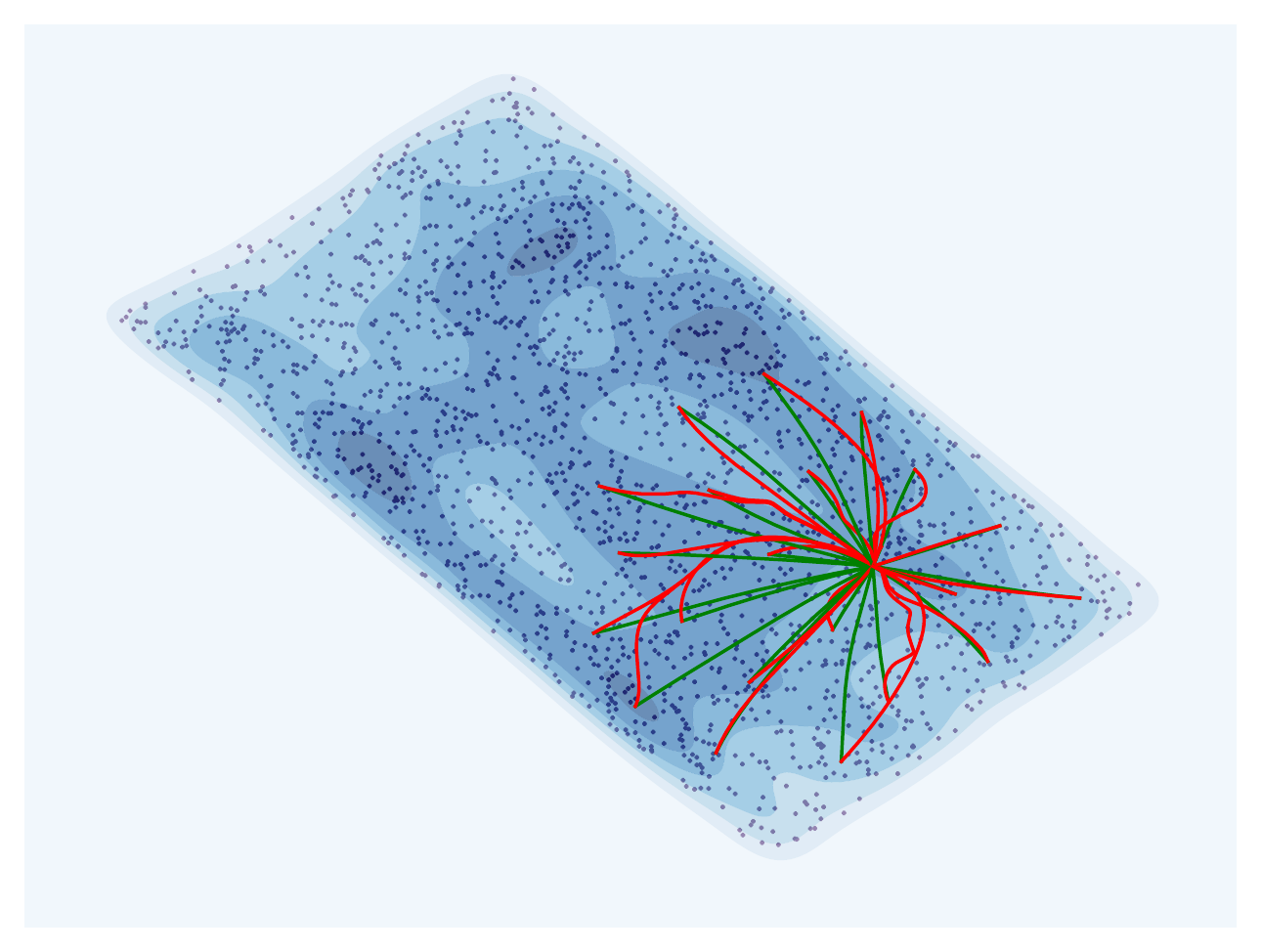}
    \end{subfigure}
    
    \begin{subfigure}{0.32\linewidth}
        \includegraphics[width=1\linewidth]{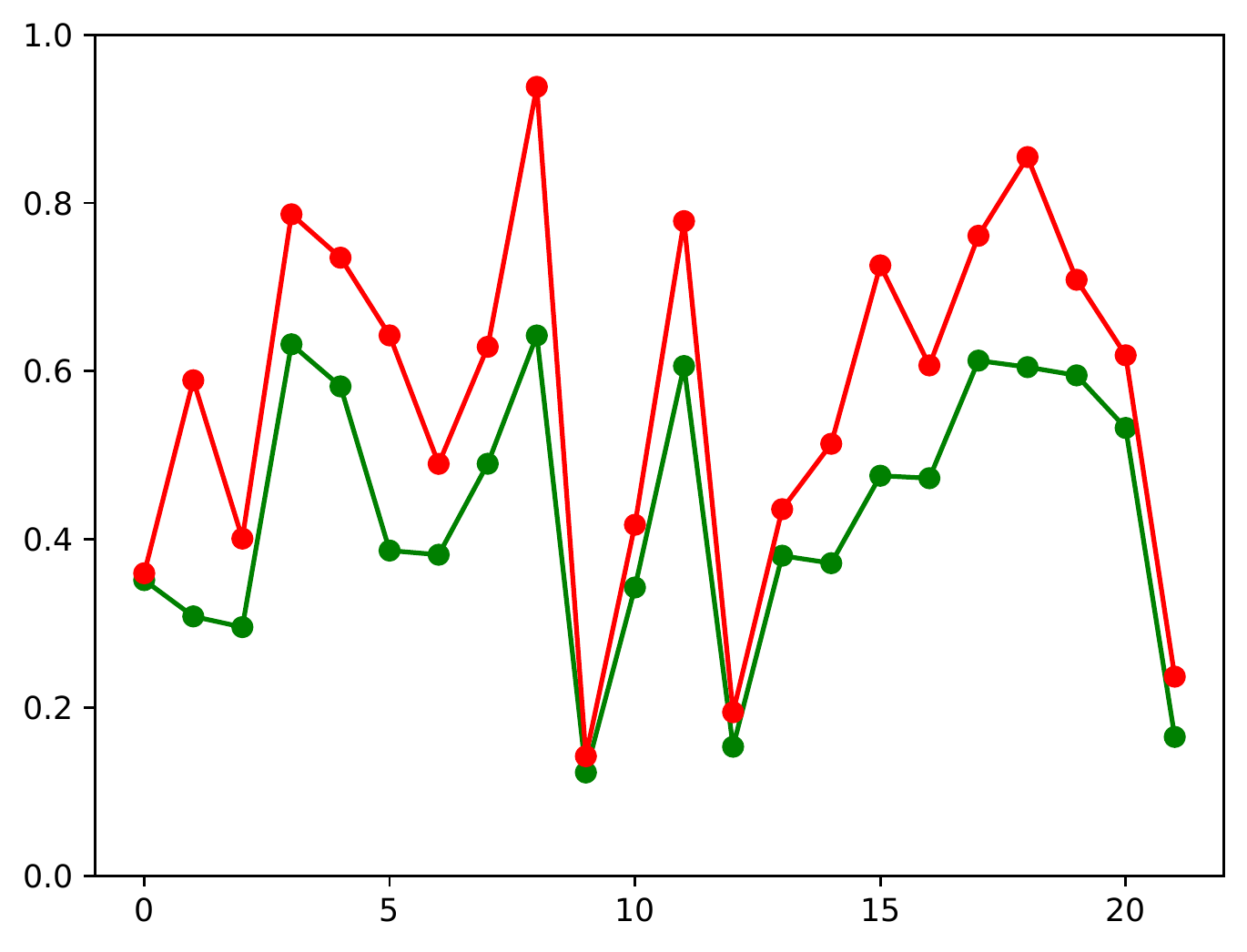}
    \end{subfigure}
    ~
    \begin{subfigure}{0.31\linewidth}
        \includegraphics[width=1\linewidth]{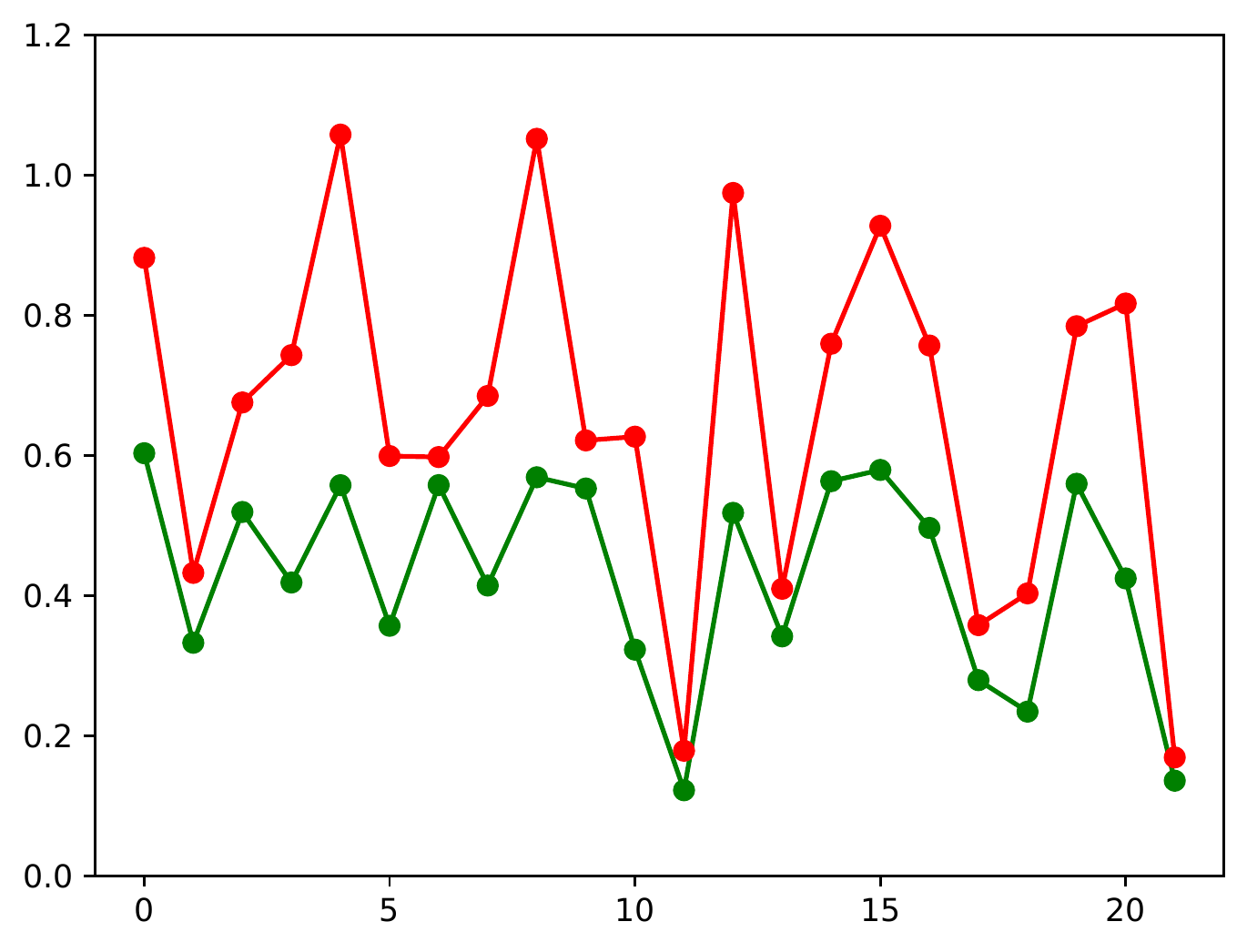}
    \end{subfigure}
    ~
    \begin{subfigure}{0.31\linewidth}
        \includegraphics[width=1\linewidth]{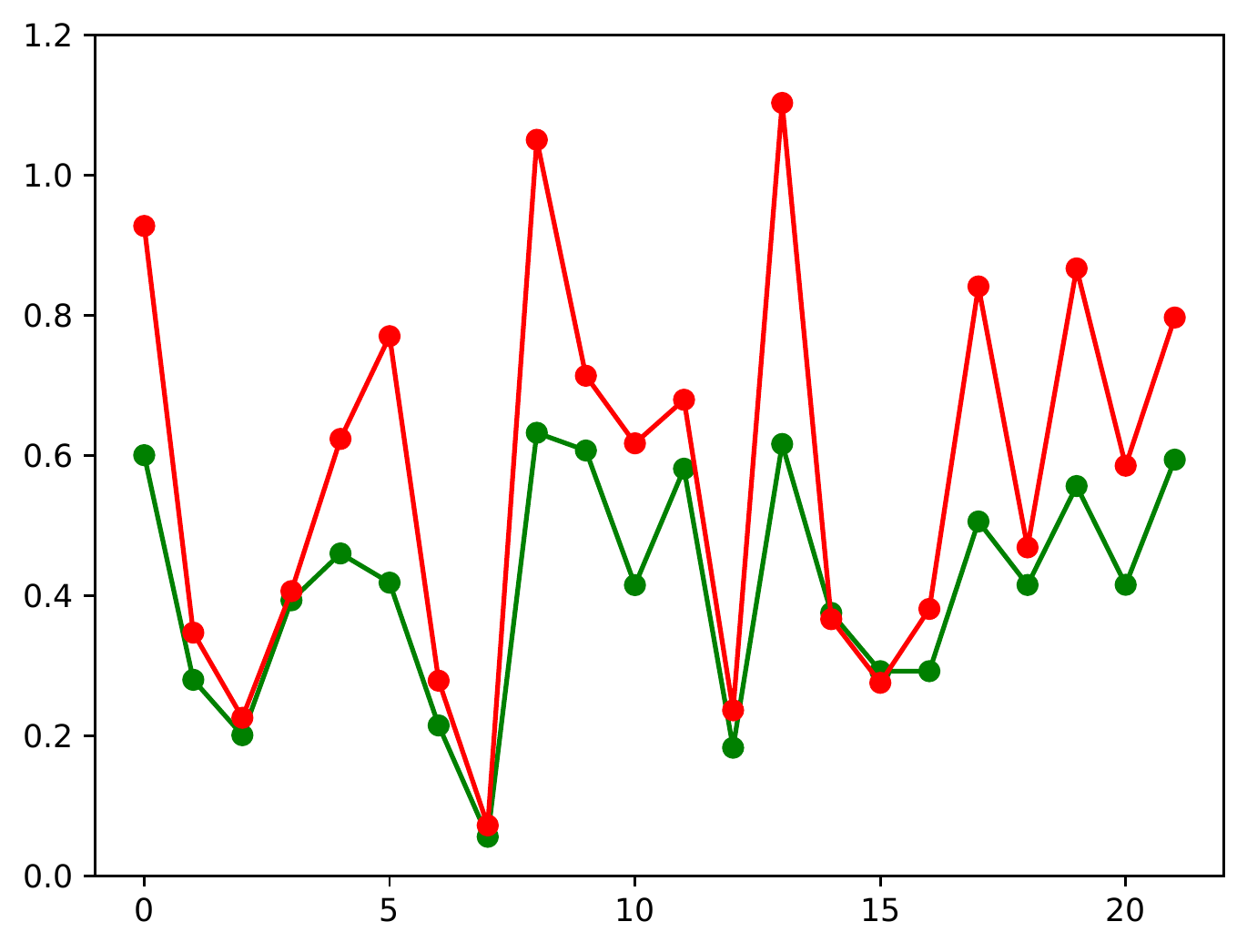}
    \end{subfigure}
    
    \caption{\emph{Top row}: Comparing the shortest path distance computed under each metric $\bm{M}_\psi(\cdot)$ with green and $\bm{M}_\theta(\cdot)$ with red. When conditions in Prop.~\ref{pror:straight_lines_prop} does not hold, then shortest paths are not straight lines. However, when the points are very close then the paths are similar. Note that the metrics are scaled to be maximum 1 on the training latent codes. \emph{Bottom row}: Each dot in the graphs correspond to the length of a curve, while the connecting lines only for cleaner illustration.}
    \label{app:constructive_with_curv}
\end{figure}

\begin{figure}[t]
    \begin{subfigure}{0.31\linewidth}
        \includegraphics[width=1\linewidth]{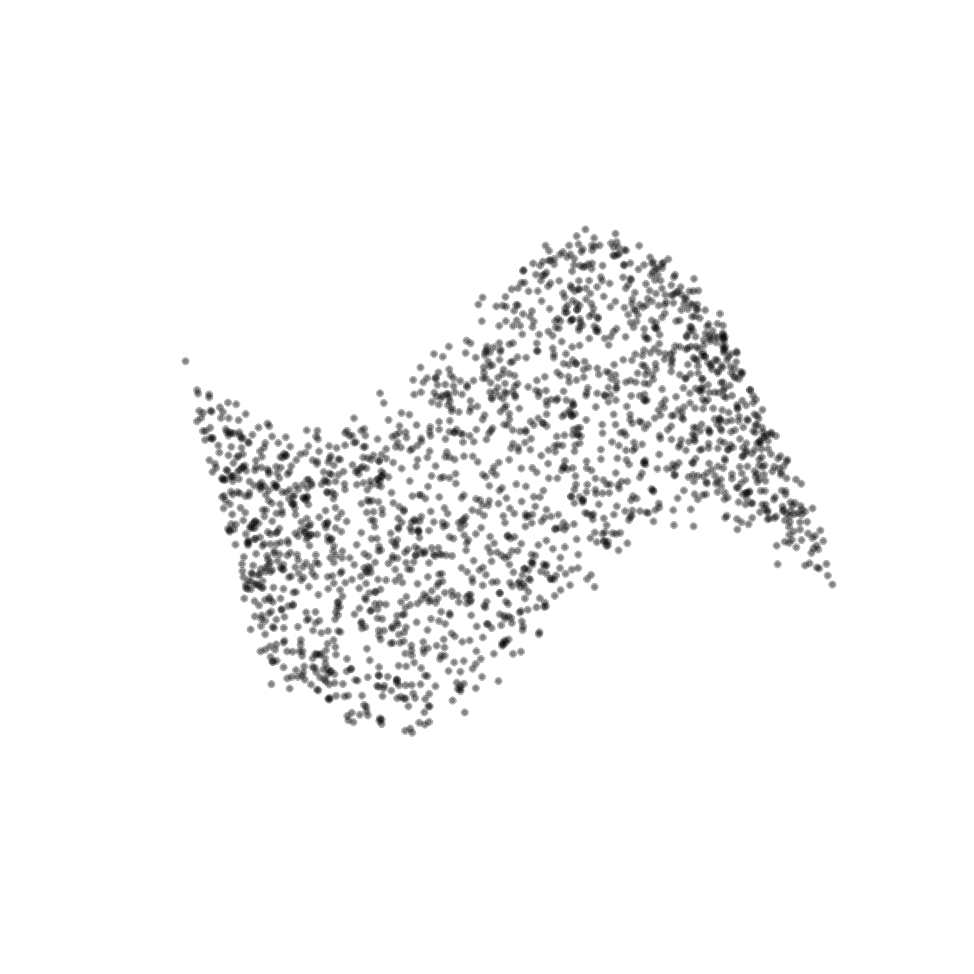}
    \end{subfigure}
    ~
    \begin{subfigure}{0.31\linewidth}
        \includegraphics[width=1\linewidth]{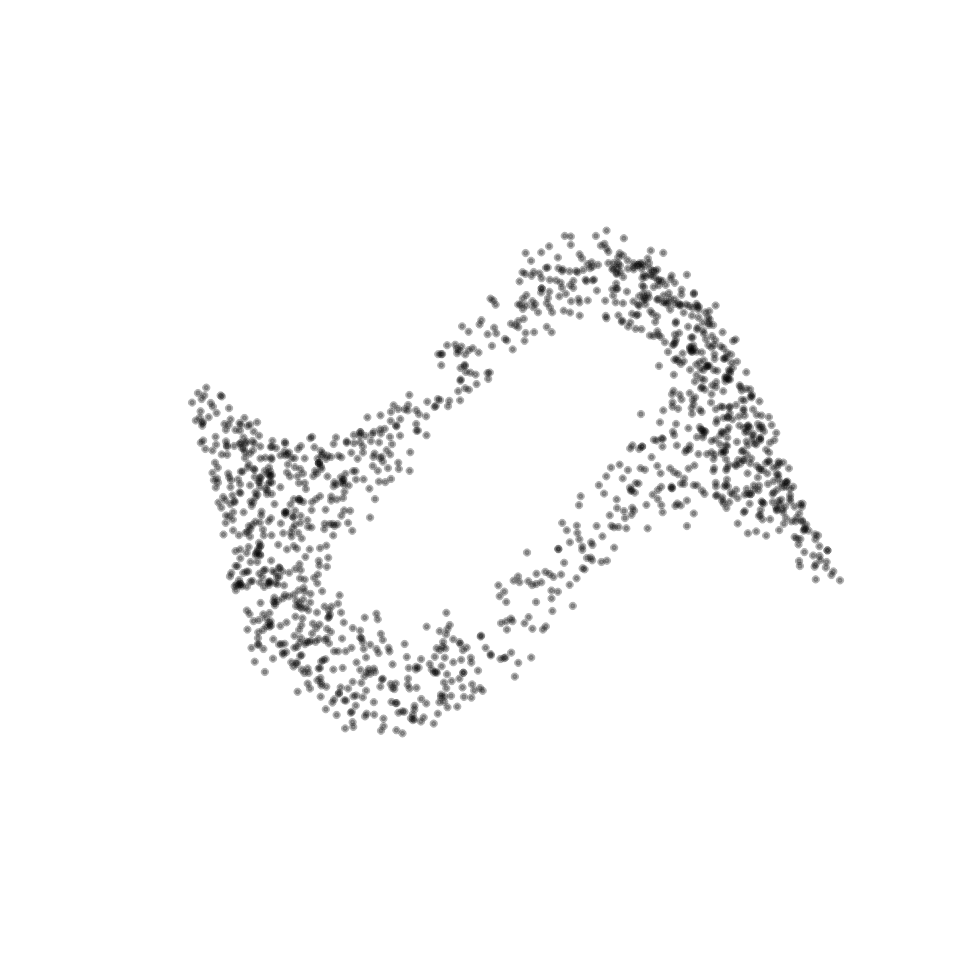}
    \end{subfigure}
    ~
    \begin{subfigure}{0.31\linewidth}
        \includegraphics[width=1\linewidth]{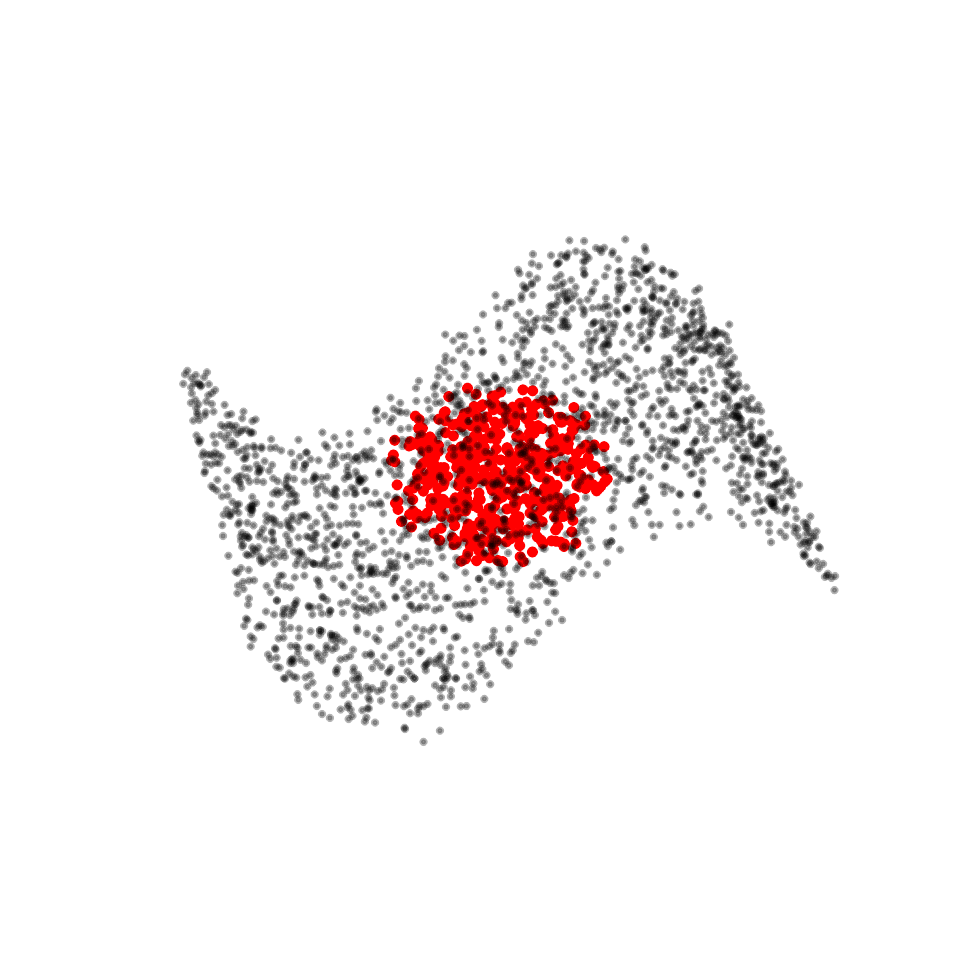}
    \end{subfigure}
    
    \begin{subfigure}{0.32\linewidth}
        \includegraphics[width=1\linewidth]{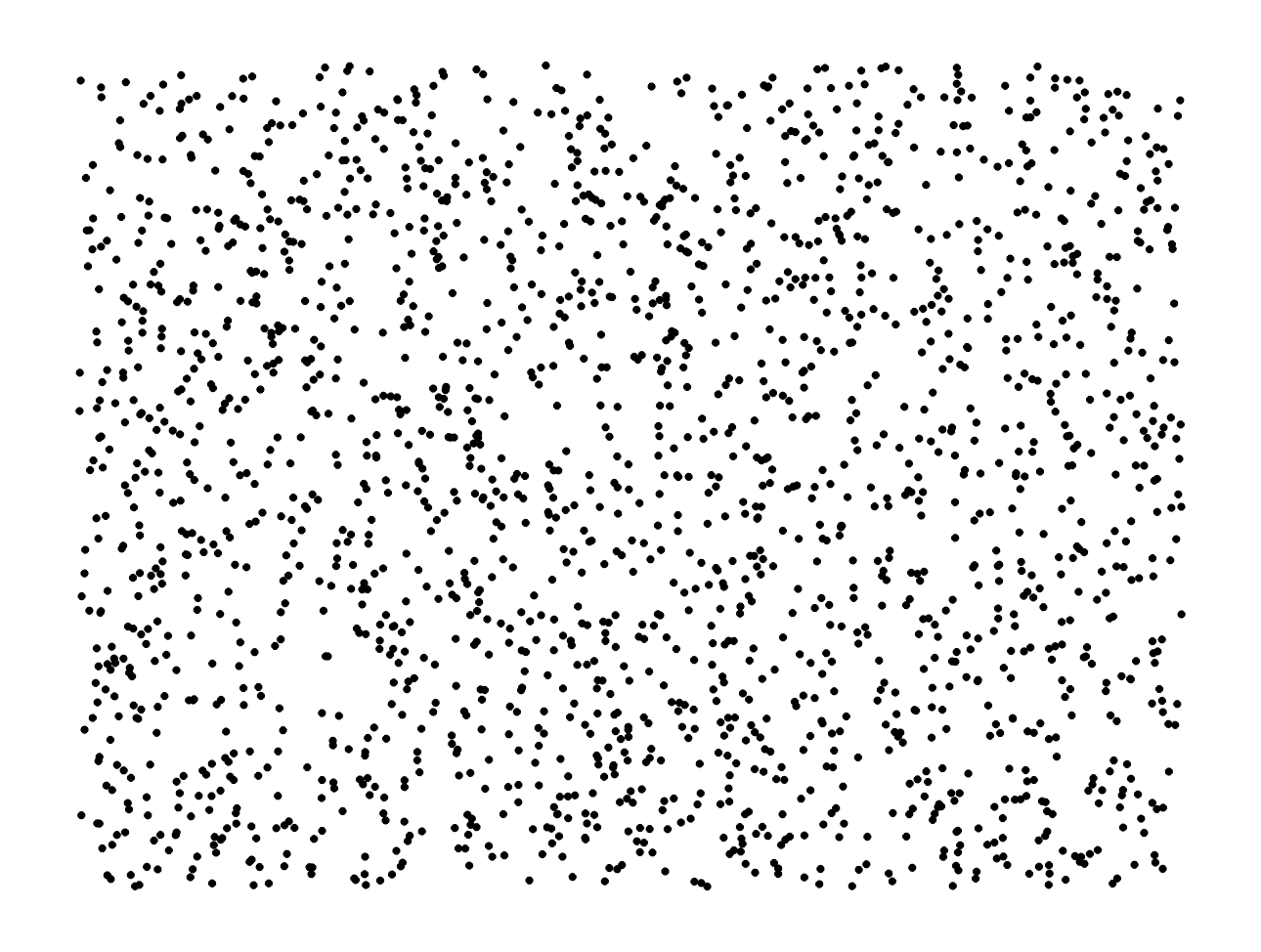}
    \end{subfigure}
    ~
    \begin{subfigure}{0.31\linewidth}
        \includegraphics[width=1\linewidth]{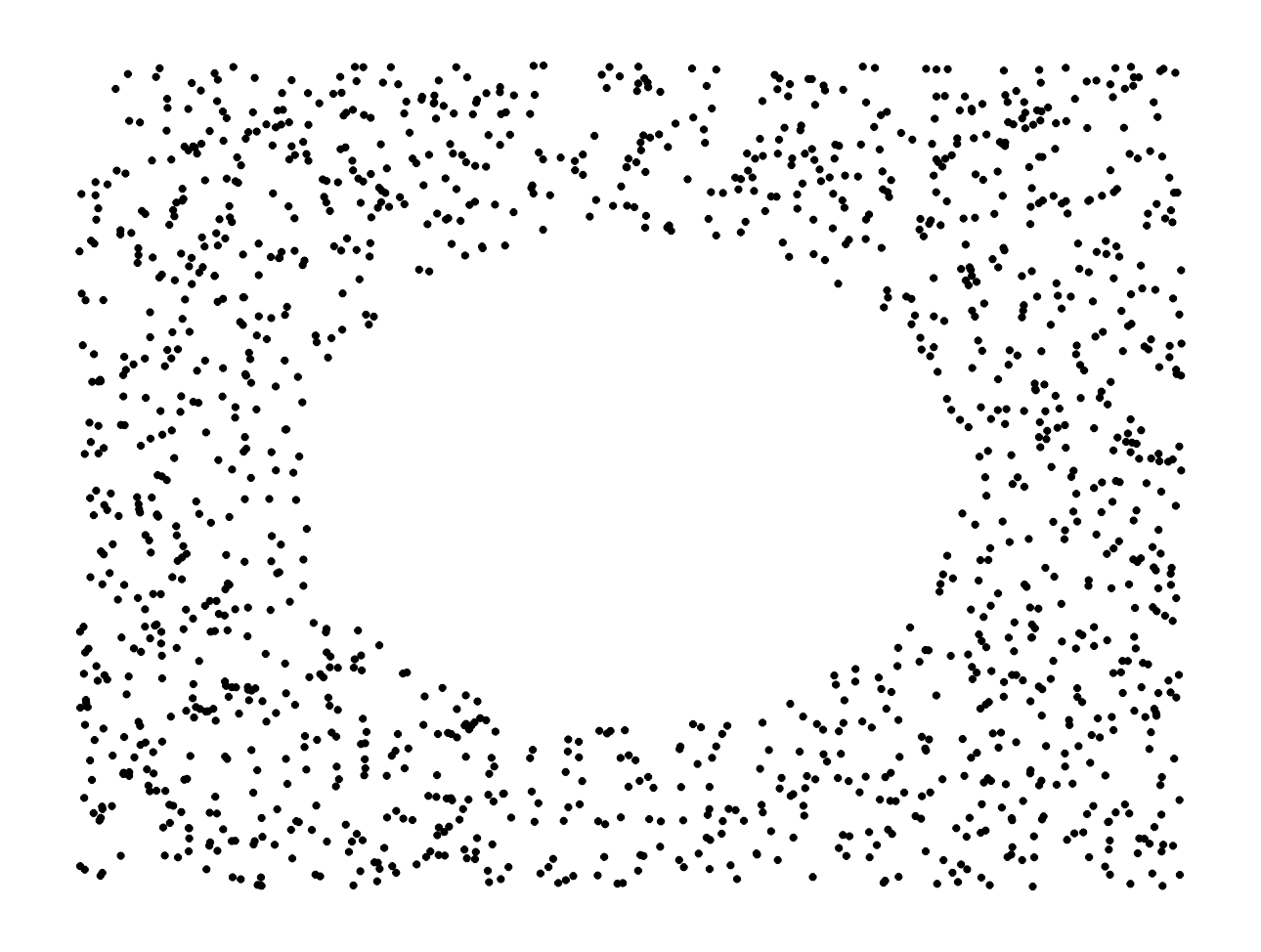}
    \end{subfigure}
    ~
    \begin{subfigure}{0.31\linewidth}
        \includegraphics[width=1\linewidth]{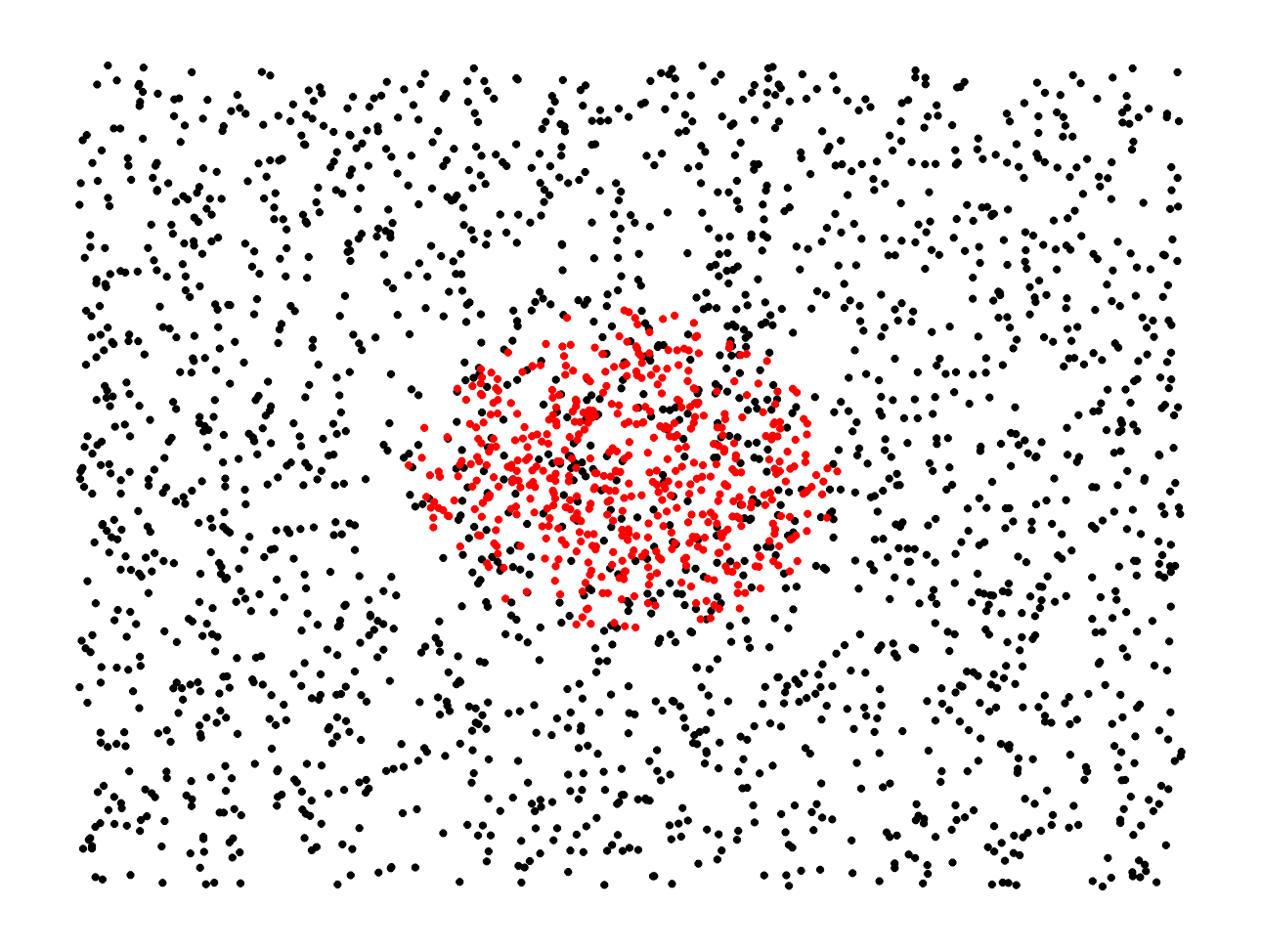}
    \end{subfigure}
    
    \caption{The synthetic data for the constructive examples.}
    \label{app:fig:constructive_data}
\end{figure}

We construct a surface in $\X=\R^3$ as $\bm{x}=[\bm{z}, 0.25 \cdot \sin(z_1)] + \varepsilon$ where ${z}_j\sim\mathcal{U}(0,2\pi),~j=1,2$ and $\varepsilon\sim \N(0, 0.1^2 \cdot \I_3)$. We call this as the normal dataset. We also construct a surface with a hole in the middle by removing the points in the center with radius $||\bm{z}||_2 < 0.3$, before the mapping in $\R^3$. Finally, we construct a uniform ball directly in $\R^3$ with radius $||\bm{x}||_2 < 0.2$ that we place in the center of the normal surface. We present the datasets in $\X$ and the corresponding ``true'' latent codes in Fig.~\ref{app:fig:constructive_data}. These are the three datasets that correspond to the analysis we did in Sec.~\ref{sec:new_metric_discussion}. In fact, the normal surface is the closest one to the Prop.~\ref{pror:straight_lines_prop}, since the manifold has low curvature locally. 

For the deep neural networks and the RBF we use the same setting as in App.~\ref{app:prior_comparisson}. Also, we use for the solution of the ODE system the strategy presented in App.~\ref{app:shortest_path_solver}. Note that in the main paper (see Sec.~\ref{sec:exp:constructive_examples}) we reparametrize the curves with respect to the Euclidean metric. This allows to compare as good as possible the actual curves in $\Z$. In other words, we compare how ``close'' are the two curves in the space.

Additionally, we show in Fig.~\ref{app:constructive_no_curv} and Fig.~\ref{app:constructive_with_curv} a second comparison to demonstrate Prop.~\ref{app:prop:reparametrization_straight_lines}. Basically, we compare the actual lengths of the shortest paths computed under each Riemannian metric. We train a VAE and an RBF using the normal surface data. In the first Fig.~\ref{app:constructive_no_curv} the bandwidth of the RBF kernels is scaled by 1.5, which makes the uncertainty term of the pull-back metric (second term in Eq.~\ref{eq:pullback_vae_metric}) nearly zero. In other words, this implies that the $\sigma_\theta(\cdot)$ is nearly constant. Since this is a simple surface the curvature of $\mu_\theta(\cdot)$ is expected to be low. Additionally, we expect the encoder to provide a nearly uniform distribution in $\Z$ since the data are almost uniformly distributed in $\X$. Note that we rescale the metrics (see App.~\ref{app:riemannian_metrics}) such that the highest mangification factor on the training latent codes to be 1 in the neighborhood of $\Z$ that we consider. Hence, as expected by Prop.~\ref{app:prop:reparametrization_straight_lines} both metrics result to shortest paths that have approximately equal lengths. However, when the bandwidth of the RBF is not scaled, it is very small, so the second term of the pull-back changes fast. Hence, even if $\mu_\theta(\cdot)$ remains the same, the uncertainty term increases the curvature. Consequently, the lengths are not similar anymore. Of course, we could potentially use always larger bandwidths for the kernels to alleviate this issue. The problem with this approach is that we lose the locality of the RBF, which implies that we do not capture precisely the geometry of the data manifold. In other words, we will allow the shortest paths to move in regions of $\Z$ with no latent codes, which does not necessarily correspond to the data manifold in $\X$.

\subsection{Details for efficiency and robustness}
\label{app:efficiency_robustness}

For the experiments that we conducted in Sec.~\ref{sec:exp:efficiency_robustness} we used the same VAE setting as in App.~\ref{app:prior_comparisson} and we projected using PCA in 100 dimensions the MNIST digits 0,1,2.

In order to have comparable magnification factors we scale each metric such that the highest magnification factor on the training latent codes to be 1 (see App.~\ref{app:riemannian_metrics}). For our proposed metric we also set the upper bound to be 100. As regards the pull-back metric we cannot explicitly control the upper bound of the magnification factor. So we set $\zeta$ in the RBF such that the maximum $\sigma^2_\theta(\cdot)$ to be 1000 times the mean variance of the training latent codes (see App.~\ref{app:riemannian_metrics}).

Here we explain in more details the result in Fig.~\ref{fig:robust_efficiency}. In practice, for the RBF kernel in higher dimensions some points can easily get a very high $\bm{J}_{\sigma_\theta}(\cdot)$. These points lie a bit further from the center of a kernel where the $\sigma^2_\theta(\cdot)$ changes extremely fast. More specifically, this occurs in the ``tails'' of the RBF kernel. As we know due to the curse of dimensionality this phenomenon is common for kernels in higher dimensions. Therefore, some of the latent codes may get a very high magnification factor, which means that this will scale down significantly the metric (see Eq.~\ref{app:eq:pull_back_metric_scaling}). This is precisely what we observe in Fig.~\ref{fig:robust_efficiency} (\emph{right}) for $d=10$. Also, in this case the distribution of the magnification factor has two modes. The interpretation is that there are some latent codes near the centers so the magnification factor is small, and some points closer to the tails of the RBFs which results in higher magnification factor values. While there are few points closer to the tails which causes the huge downscaling.

As regards our metric, its behavior is robust, which means that the prior near the latent codes is non-zero and the actual density values across them are comparable. Also, there are no training latent codes that get extreme prior values e.g. nearly zero. In order to show that only near the latent codes the magnification factor is small, we sampled uniformly from the hypercube that surrounds the latent codes and evaluated the metric. The result in Fig.~\ref{fig:convex_hull_sampling} shows that, indeed, only near the latent codes the metric is small. Especially, as the dimension increases, there is more empty space in the hypercube with no latent codes so the magnification factor is large. The interpretation is that the proposed prior adapts well on the training latent codes and does not assign density in regions of $\Z$ with no latent codes. Of course, an extremely flexible prior should not be used, because this could easily overfit the latent codes. This results in a highly curved metric i.e., the metric changes extremely fast.

\begin{figure}[h]
    \centering
    \begin{overpic}[width=0.6\linewidth]{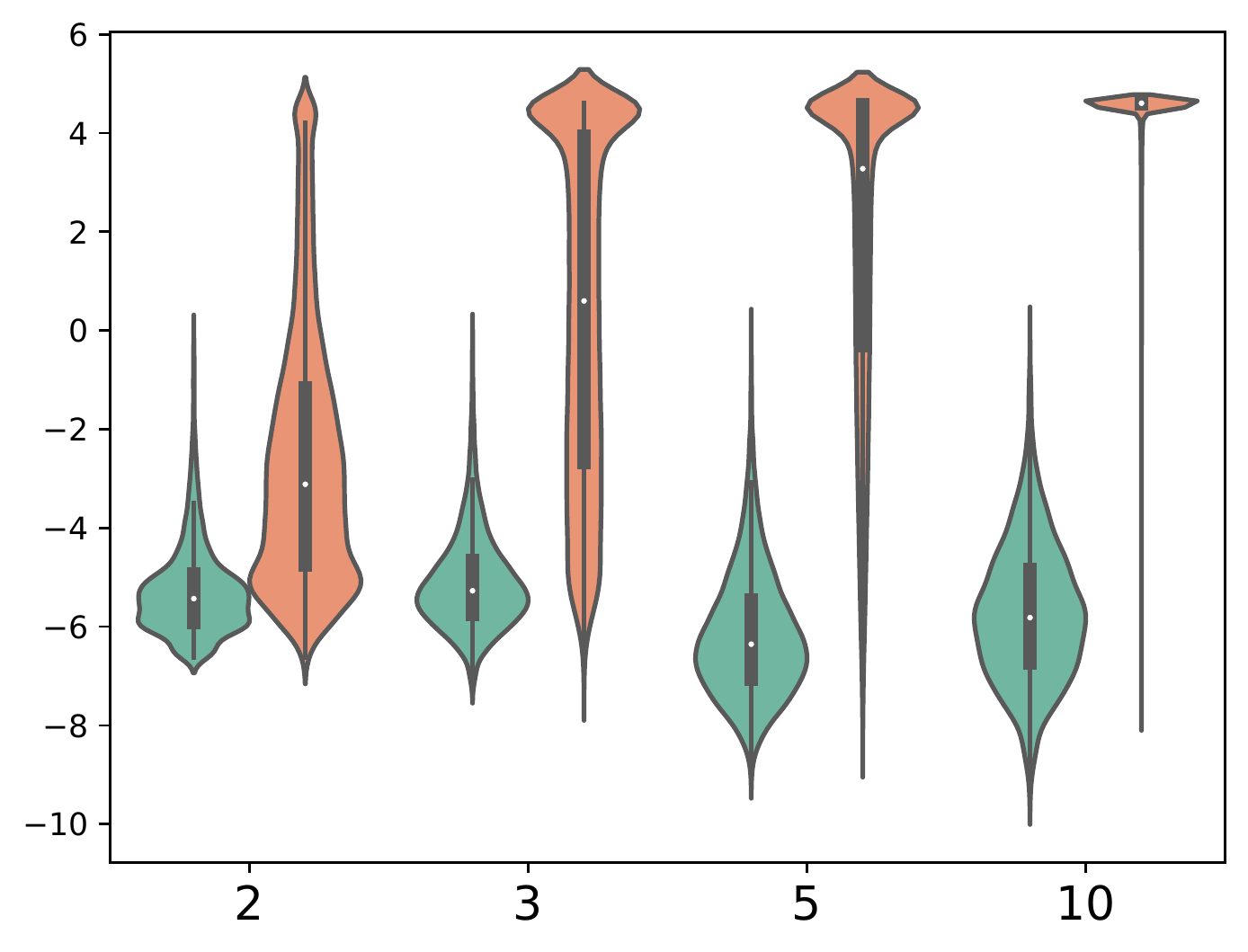}
        \put(-7, 35){\rotatebox{90}{\tiny $\log[\sqrt{\bm{M}(\cdot)}]$}}
        \put(72, -2.5){\tiny dim}
        \put(22,12){{\tiny Training latent codes}}
	    \put(18,14){{\color[RGB]{97,173,151}\circle*{5}}}
	    \put(22,20){{\tiny Uniform samples}}
	    \put(18,22){{\color[RGB]{232,140,109}\circle*{5}}}
    \end{overpic}
    \caption{Comparison of the magnification factor between the training latent codes and uniform samples in their bounding box.}
    \label{fig:convex_hull_sampling}
\end{figure}

Finally, due to the huge curvature of the pull-back metric, mainly due to the uncertainty term, the shortest path solver has to run for longer and also fails many times. Note that even the evaluation of the metric and its derivative that we need to compute the ODE system (see Eq.~\ref{eq:general_ode}) is significantly less efficient than our proposed metric. Here we used a different strategy to compute shortest paths. First we run the BVP solver with the straight line as the initial solution, and if this fails, we re-run the solver initialized by the graph based solution. The reason for doing that is to show that the ODE system under the proposed metric is easier and it can be solved directly without the graph initialization.

\subsection{Details for the LANDs experiment}
\label{app:ex:land_mnist}

For the LAND experiment we used the same VAE, RBF and data as in App.~\ref{app:efficiency_robustness}. Note that we did not use all the latent codes for training the LANDs, but instead, we quantized them using $k$-means with 120 centers. Even if the proposed metric is much more efficient than the pull-back, still computing one shortest path relies on the solution of an ODE system. This makes the use of all the latent codes prohibited. As reported in the main paper (see Sec.~\ref{sec:ex:land_mnist}) the running times are significantly different, as it is much more efficient to compute shortest paths under our proposed metric.

Moreover, we observe that the pull-back metric underestimates the precision matrices (or overestimates the covariance matrices). The reason is that for some points the shortest path length is large, because the pull-back metric gets large mainly due to the non-robustness of the RBF term. So the corresponding logarithmic map is large as well. This causes the precision to become smaller such that to capture these points that lie ``far'' from the component's center. Obviously, this is not a desirable behavior, since it only occurs due to the poor behavior and non-robustness of the RBF.

\subsection{Details for for life science experiments}
\label{app:experiments_biology}

Here we explain the details for the experiments with the real-world datasets (see Sec.~\ref{sec:ex:life_data}). As we mention in the main paper these experiments should be considered as a proof-of-concept, since specialized generative models have been proposed in the literature. With our experiment we want to show that geometry might be a suitable theory to utilize for exploratory data analysis in life sciences.

\textbf{Mouse cortex cell data}

For the mouse cortex cell data \citep{zeisel:science:2015} we used the scvi-tools\footnote{https://www.scvi-tools.org/en/stable/index.html}. As a reprocessing step we kept the 558 genes with the highest variability, and also, we projected the data into 100 dimensions using PCA for simplicity of the analysis. For the VAE and the RBF we used the setting as App.~\ref{app:prior_comparisson}. For the shortest path solver we used the setting as in App.~\ref{app:shortest_path_solver}. Since for the LAND fit we need the logarithmic maps to be as precise as possible, if the BVP solver fails, then we do not consider this point for the corresponding mixture component. Also, we quantized the latent codes using $k$-means and where $k=200$.

The resulting LAND adapts better to the latent codes, especially when we observe the individual components Fig.~\ref{fig:app:cortex}. Interestingly, the centers between the GMM and the mixture of LANDs differ. One reason is that the Euclidean distance of the latent codes does not correspond to the actual distance on the data manifold. For example, if some points are very sparse on the data manifold in $\X$, the encoder will push everything towards the support of the base distribution $p(\bm{z})$ of our proposed prior. However, the Euclidean mean in $\Z$ is not aware of the data manifold's geometry in $\X$, while the LAND mean potentially corresponds to a better estimate on the actual data manifold. In particular, the geometry aware mean under our proposed metric will be closer to the high density region in $\Z$. For instance, assume that the latent codes exhibit a non-convex distribution as a semi-circle. In this case, the Euclidean mean will be outside of the latent codes support, while our mean will be in the support.

For further analysis, we show that we can use the principal geodesics for each component (see Fig.~\ref{app:fig:kmeans_principal_geodesics}), which can be seen as a form of local \emph{disentaglement}. More specifically, we eigen-decompose the precision matrices of the LANDs mixture and we solve the exponential map with initial velocities the eigen-vectors. Clearly, the resulting paths correspond to the directions with the highest variance on the data manifold in $\X$. In this way we are able to recover locally the intrinsic degrees of freedom of the dataset. This can be seen as a non-linear extension of the PCA. Geometry aware disentanglement seems as a promising direction for future research \citep{pfau:neurips:2020}.

\textbf{Chemical compounds}

We used the ZINC database\footnote{https://zinc.docking.org/} \citep{zinc:2015}. In particular, using the SMILES representation, we sampled randomly 6400 points and for simplicity kept only the first 30 characters of each sequence. Clearly, this is rather simplified setting, however, patterns of chemical structures are present. Each batch has dimension $128\times 30 \times 1$, using one-hot encoding. For the VAE we used an encoder based on 1-dimensional Convolutions and a recurrent decoder, for specific details see Table~\ref{app:tab:recurrent_vae}. Chemical compounds have by definition an inherent natural structure. As we observe in Fig.~\ref{app:fig:chem_paths_planes} the resulting representations are indeed non-linearly structured, which the corresponding shortest path respect. So we are able to find interpretable, more meaningful shortest paths between points, compute mean values and barycenters accordingly, etc. Hence, geometry could potentially uncover some useful properties in the latent space. Even if our setting is rather simplified, we see that indeed nonlinear structures appear in the latent space. Similarly, \citet{detlefsens:arxiv:2020} studied the behavior of latent representations for protein sequences and showed that structure aware paths reveal biological
information that is otherwise obscured. 

\begin{table}[b]
    \centering
    \begin{tabular}{c|c c c}
        $\text{encoder}_\phi(\cdot)$ & $5 \times 32$ & $5 \times 32$ & $5 \times 32$ \\
         $\mu_\phi(\cdot)$ & $\text{encoder}_\phi(\cdot)$ & flatten & MLP(128, d) \\
         $\log(\sigma^2_\phi(\cdot))$ & $\text{encoder}_\phi(\cdot)$ & flatten & MLP(128, d)\\
         \midrule
        $\text{c}_{\theta}(\cdot)$ & MLP(d, H) & & \\
        $\text{hs}_{\theta}(\cdot)$ & MLP(d, H) & & \\
        $\text{decoder}_{\theta}(\cdot)$ &\multicolumn{2}{c}{LSTM(H, $\text{c}_{\theta}(\cdot)$, $\text{hs}_{\theta}(\cdot)$)} & MLP(H, 1) \\
    \end{tabular}
    \caption{Recurrent VAE. Enc: Conv1d with kernel size 5 and 32 filters, stride 1 in the first conv and stride 2 for the other two. Dec: For the output sequence we input 1 at each LSTM step and get the output value pushing the hidden state through an MLP. We used \texttt{tanh} activations and $H=128$.}
    \label{app:tab:recurrent_vae}
\end{table}

\begin{figure*}[h]
	\centering
	\begin{subfigure}{0.24\linewidth}
	    \includegraphics[width=1\linewidth]{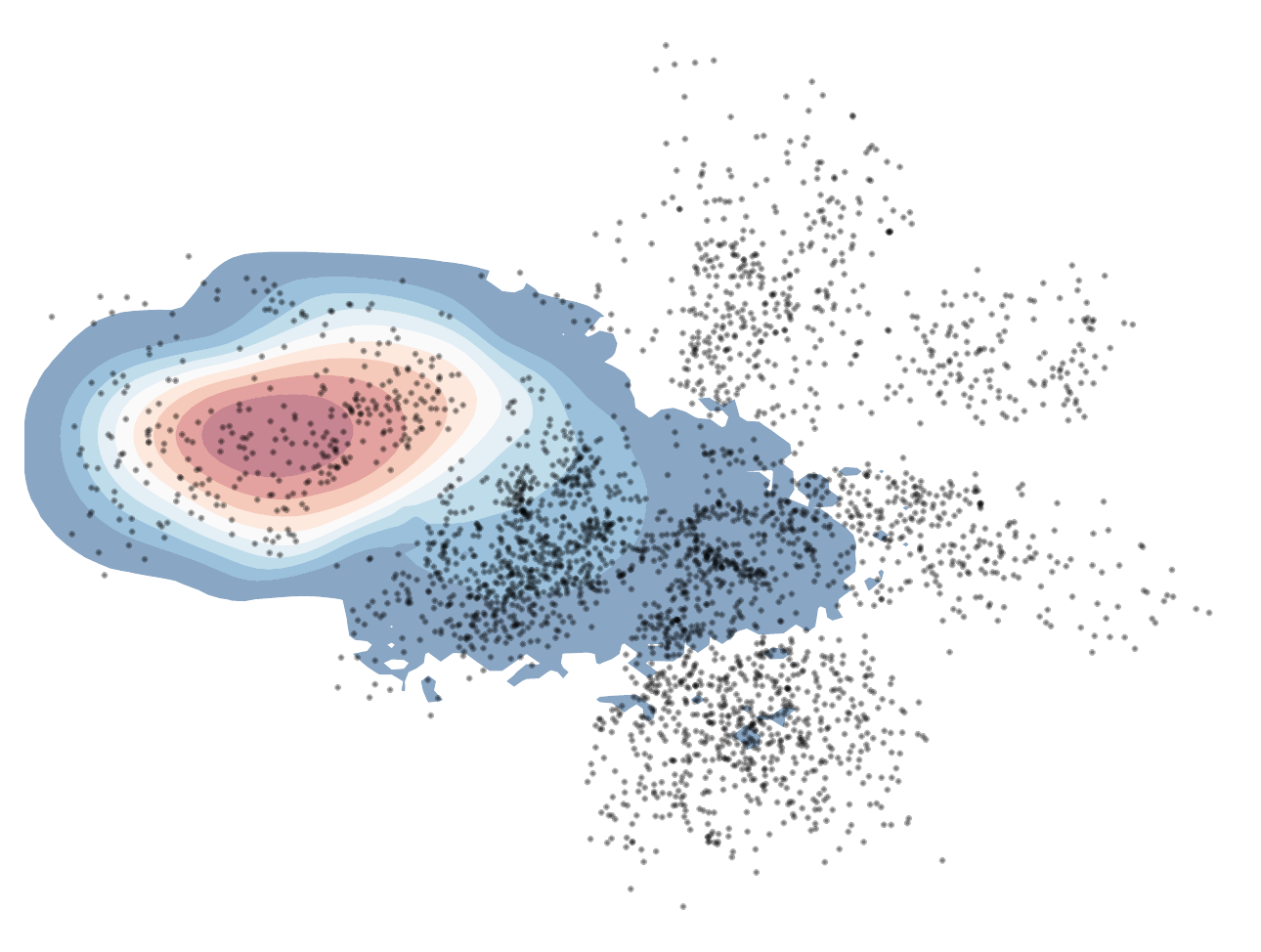}
	\end{subfigure}
	\begin{subfigure}{0.24\linewidth}
	    \includegraphics[width=1\linewidth]{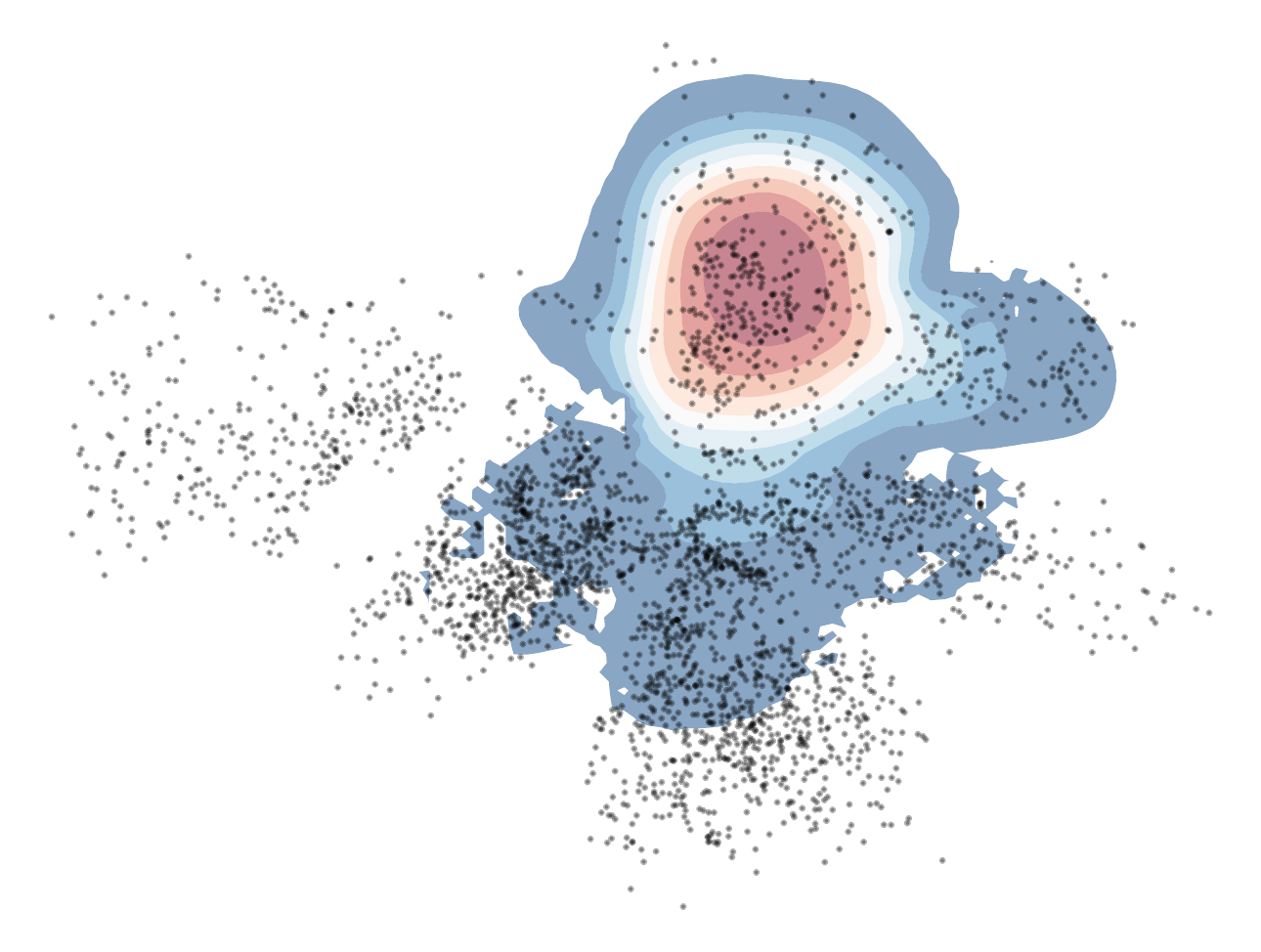}
	\end{subfigure}
	\begin{subfigure}{0.24\linewidth}
	    \includegraphics[width=1\linewidth]{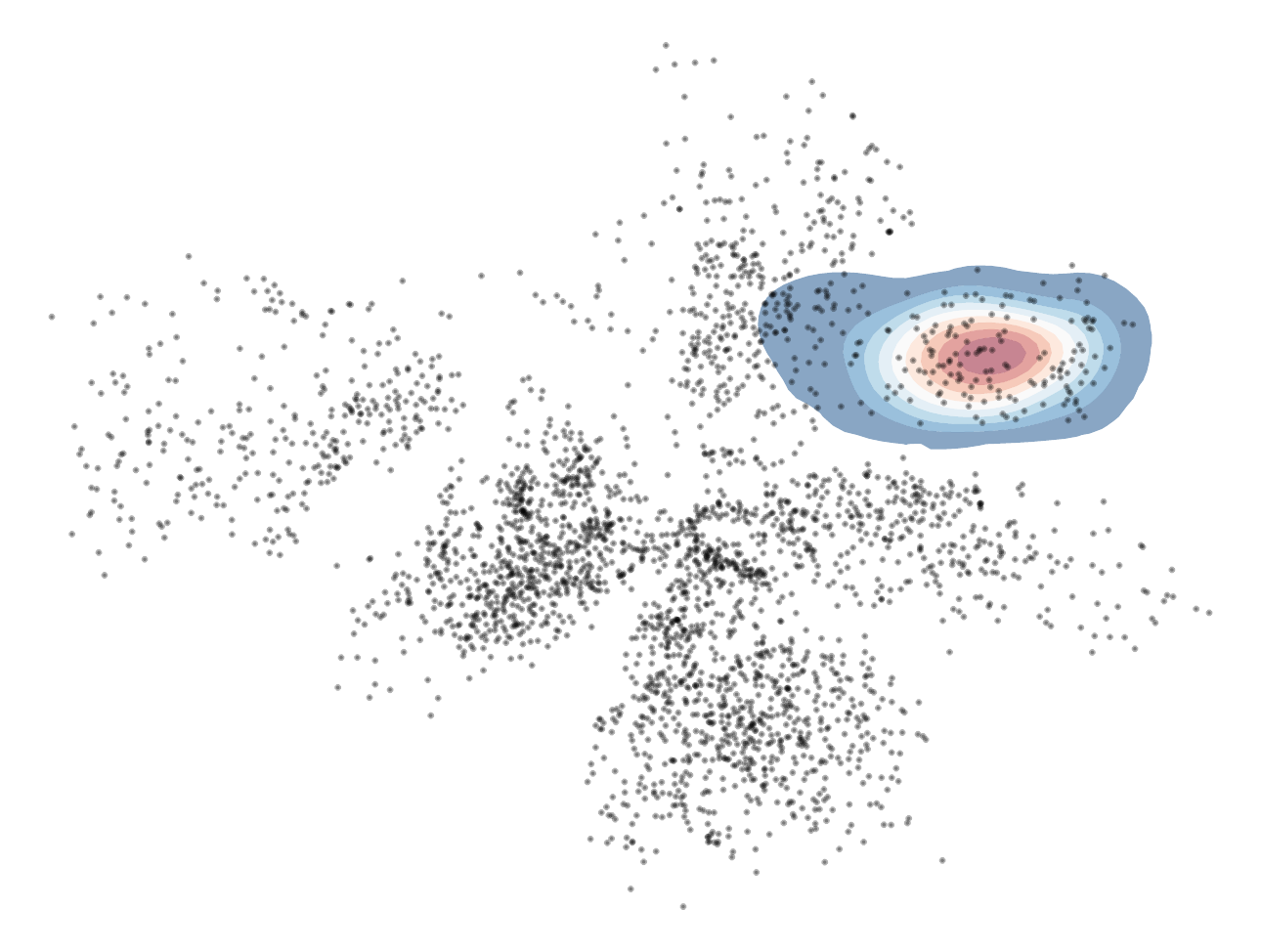}
	\end{subfigure}
	\begin{subfigure}{0.24\linewidth}
	    \includegraphics[width=1\linewidth]{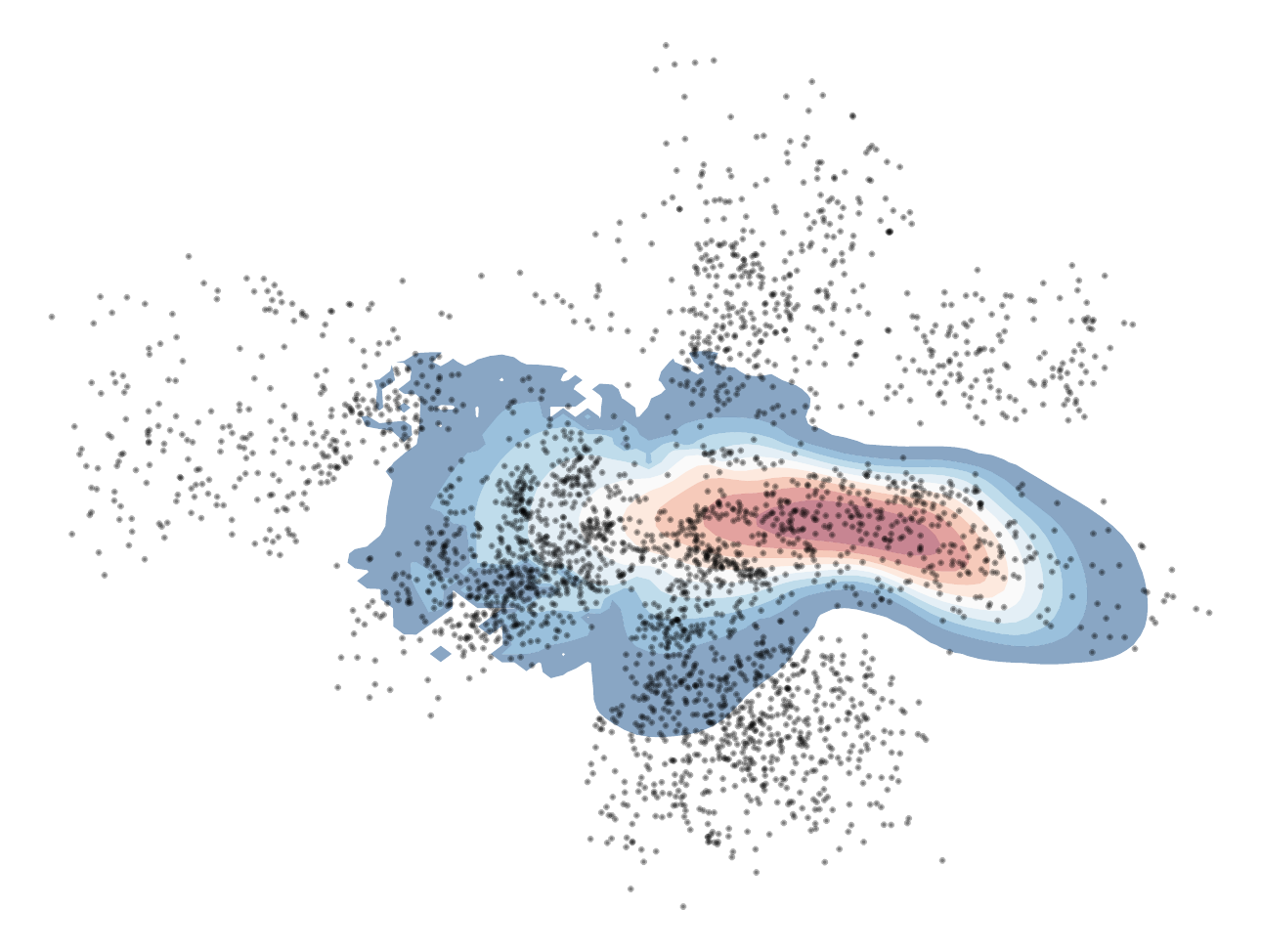}
	\end{subfigure}

	\begin{subfigure}{0.24\linewidth}
	    \includegraphics[width=1\linewidth]{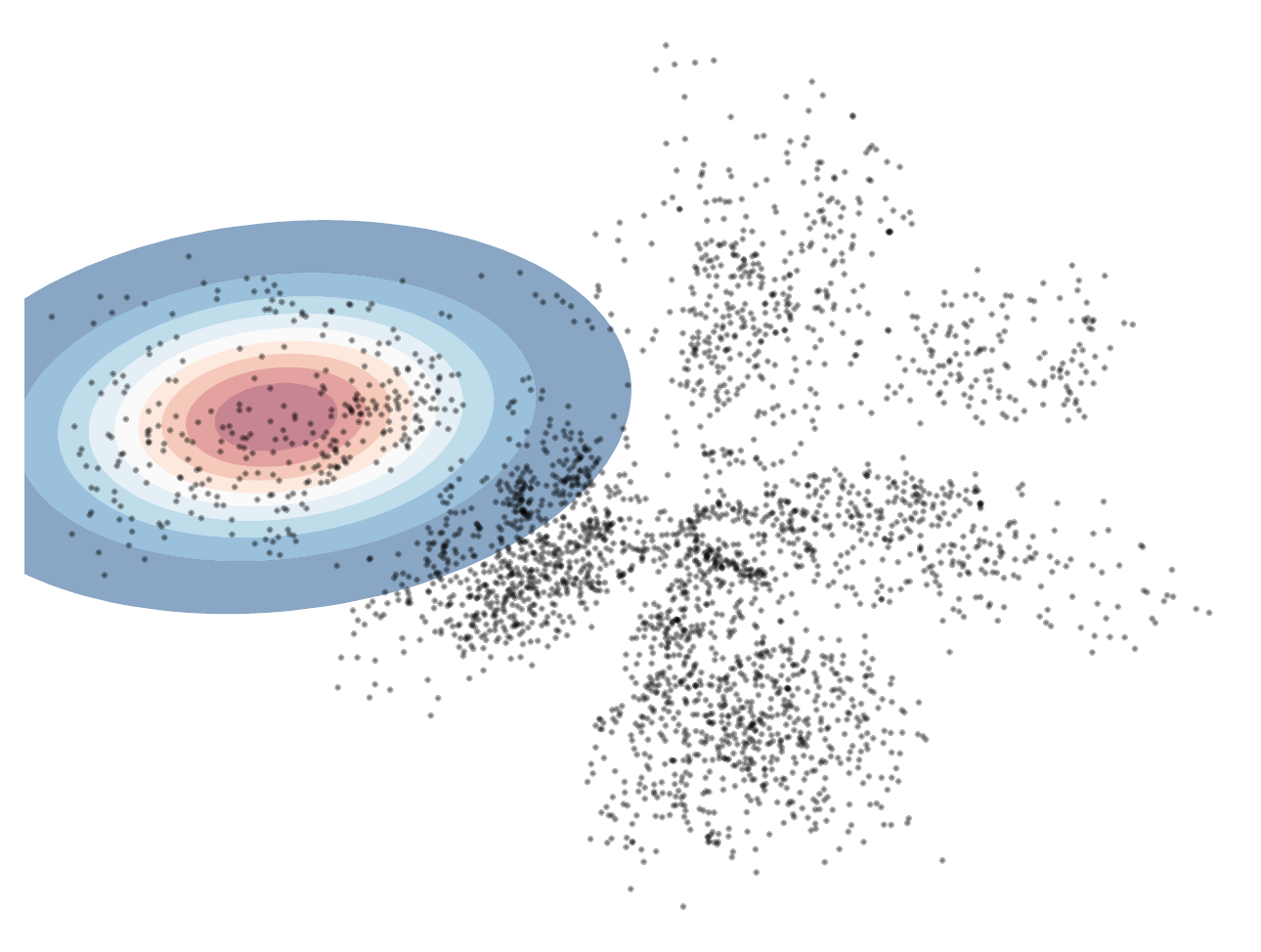}
	    \caption{Cluster 1}
	\end{subfigure}
	\begin{subfigure}{0.24\linewidth}
	    \includegraphics[width=1\linewidth]{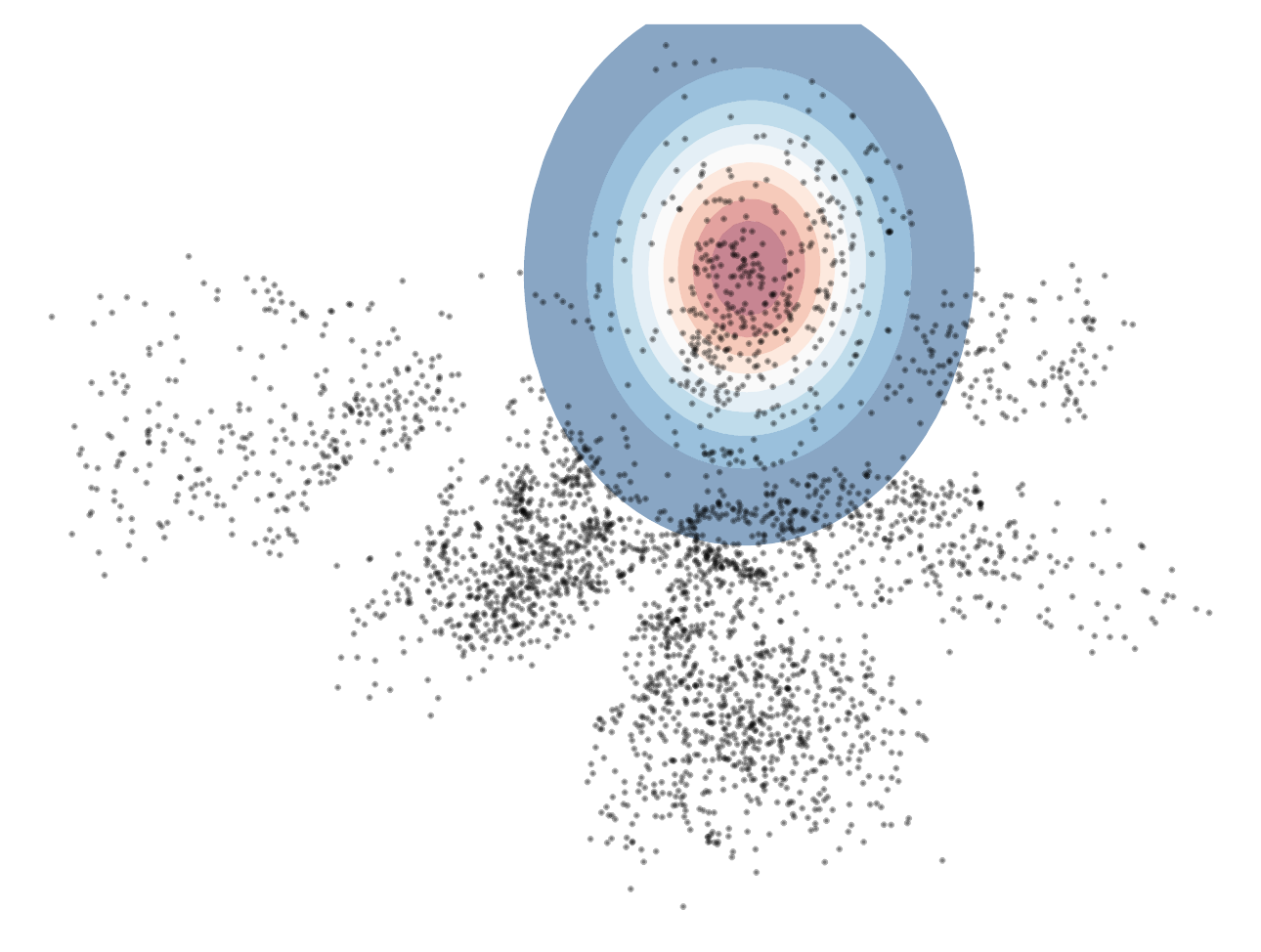}
	    \caption{Cluster 2}
	\end{subfigure}
	\begin{subfigure}{0.24\linewidth}
	    \includegraphics[width=1\linewidth]{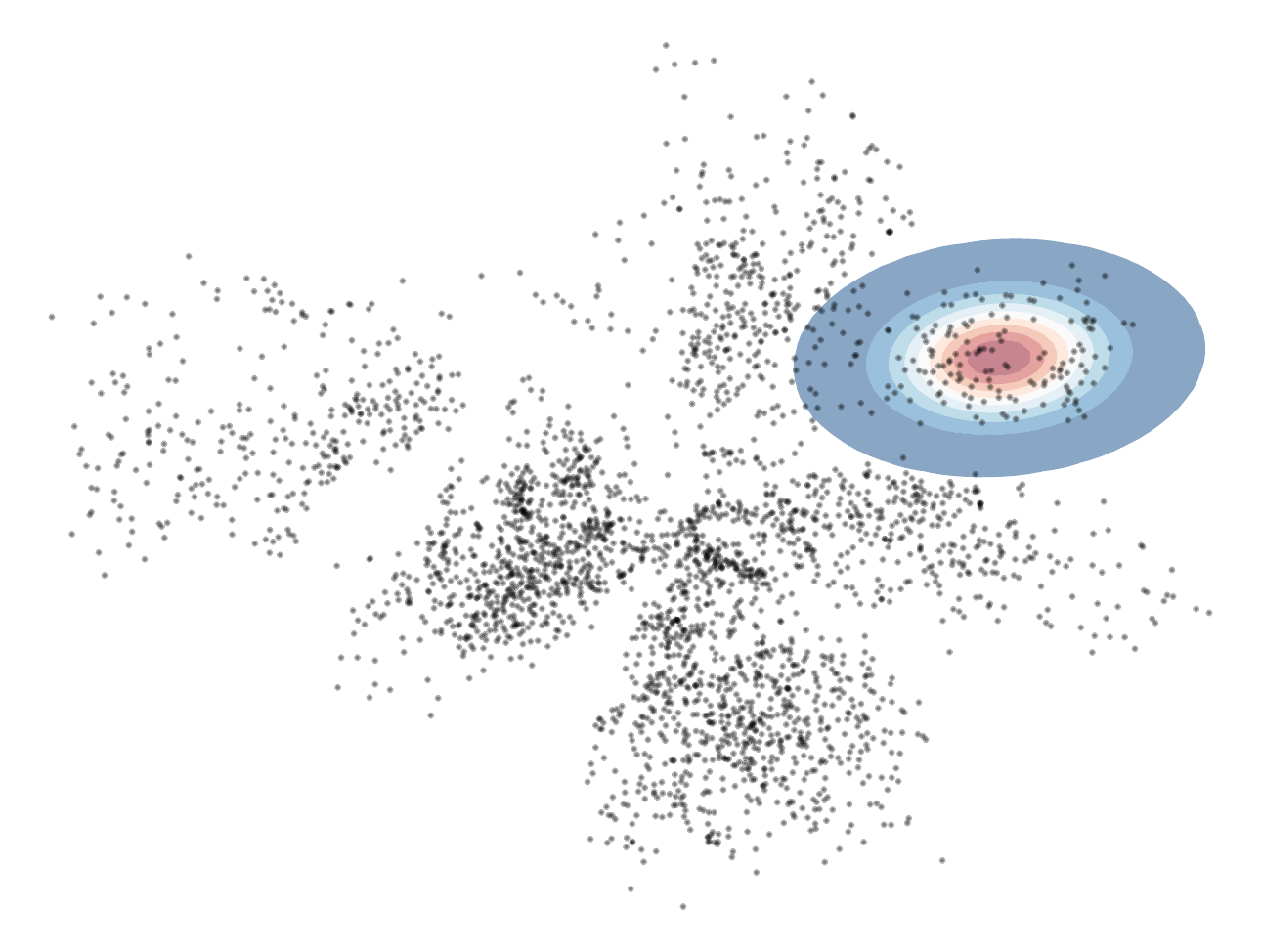}
	    \caption{Cluster 3}
	\end{subfigure}
	\begin{subfigure}{0.24\linewidth}
	    \includegraphics[width=1\linewidth]{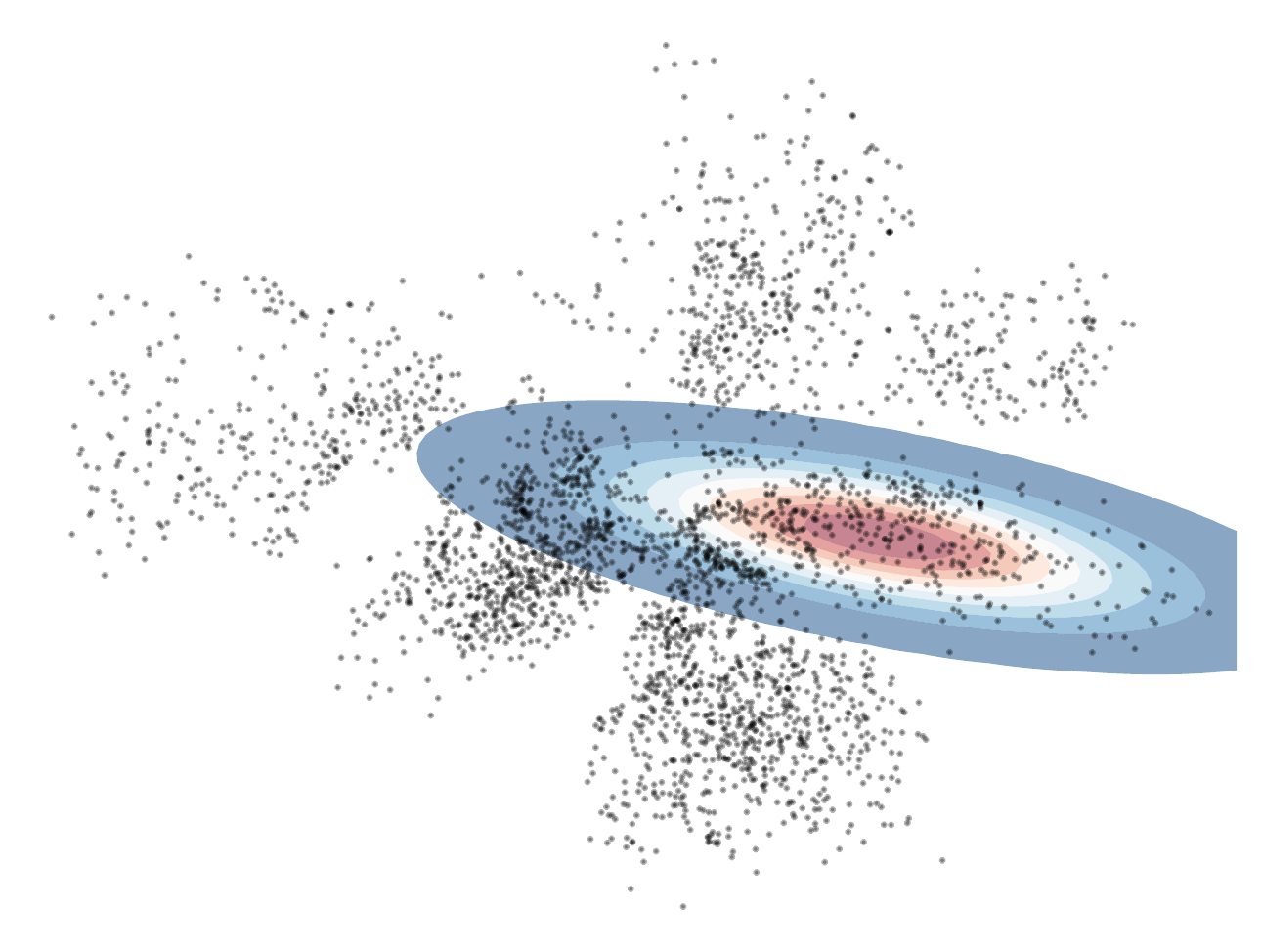}
	    \caption{Cluster 4}
	\end{subfigure}
	
	\raggedright
    \begin{subfigure}{0.24\linewidth}
	    \includegraphics[width=1\linewidth]{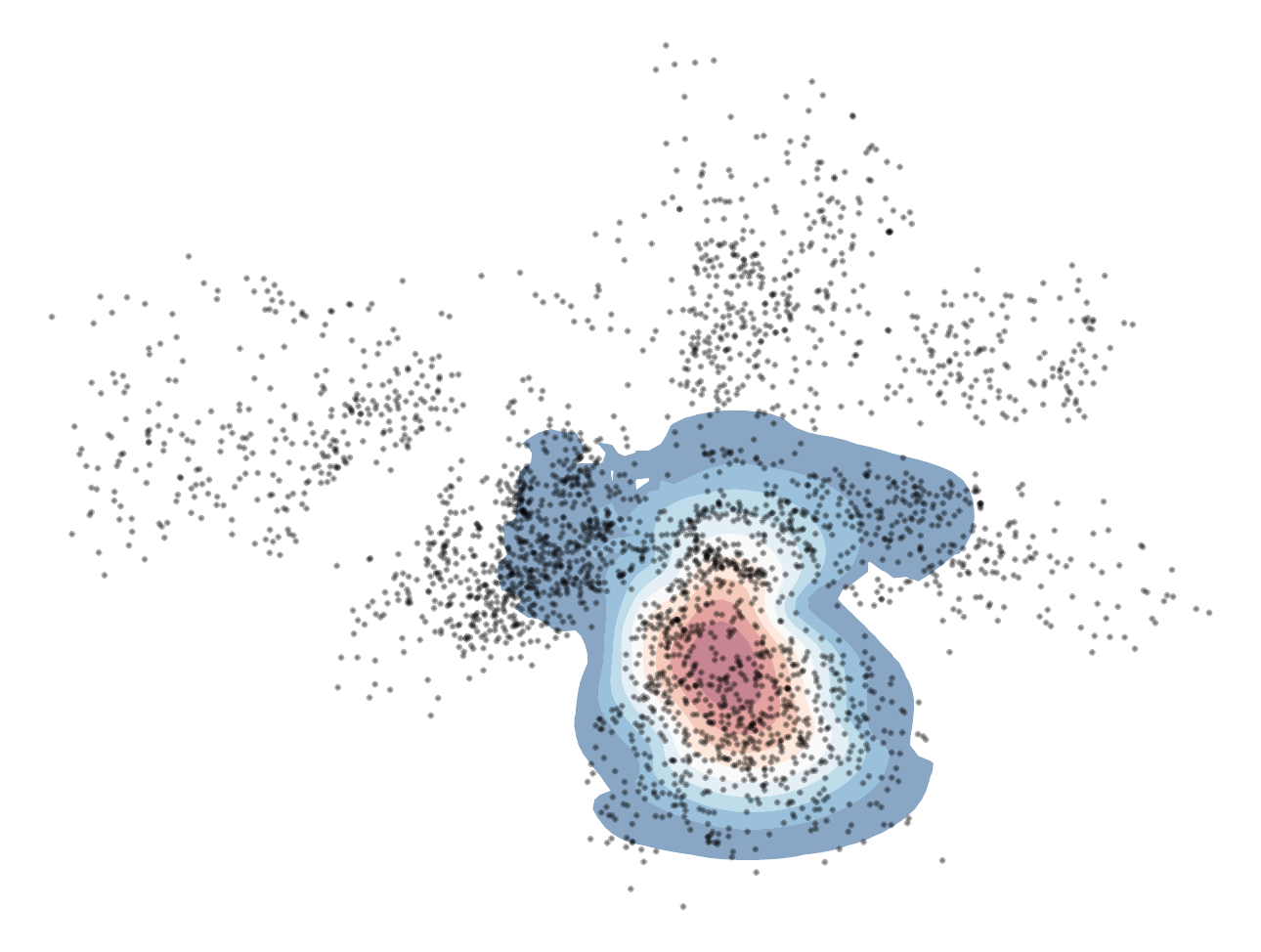}
	\end{subfigure}
	\begin{subfigure}{0.24\linewidth}
	    \includegraphics[width=1\linewidth]{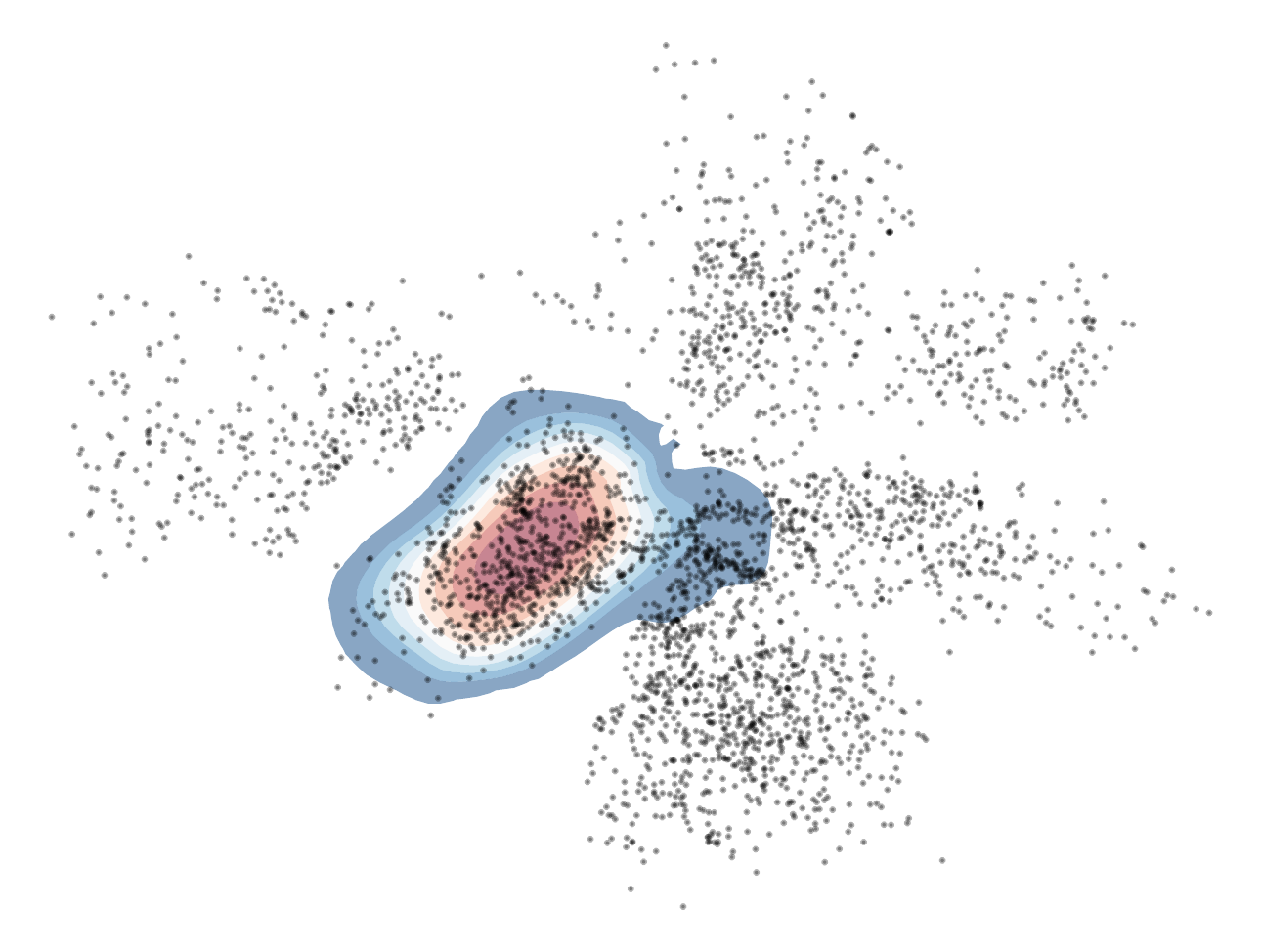}
	\end{subfigure}
	
	\raggedright
	\begin{subfigure}{0.24\linewidth}
	    \includegraphics[width=1\linewidth]{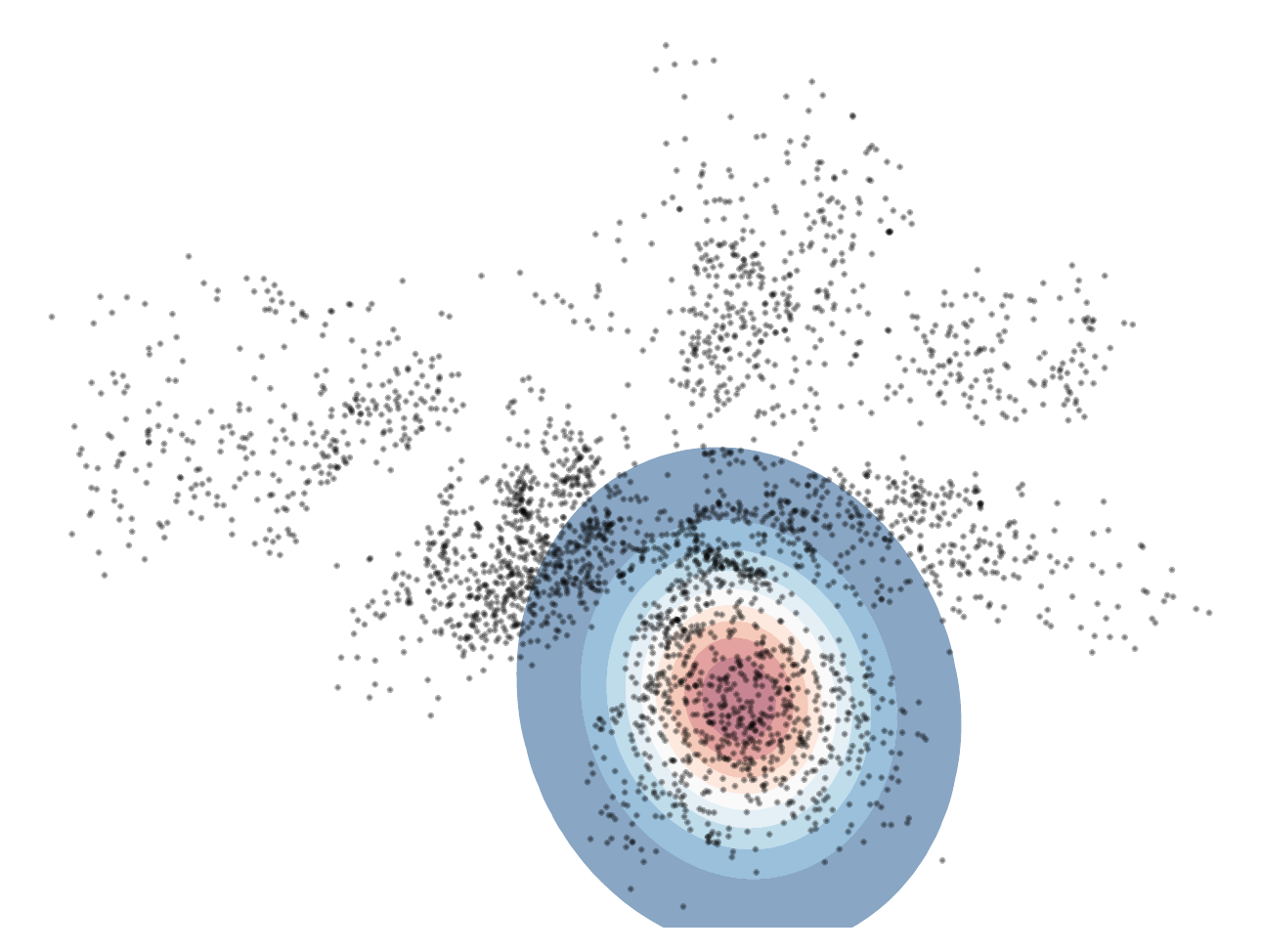}
	    \caption{Cluster 5}
	\end{subfigure}
	\begin{subfigure}{0.24\linewidth}
	    \includegraphics[width=1\linewidth]{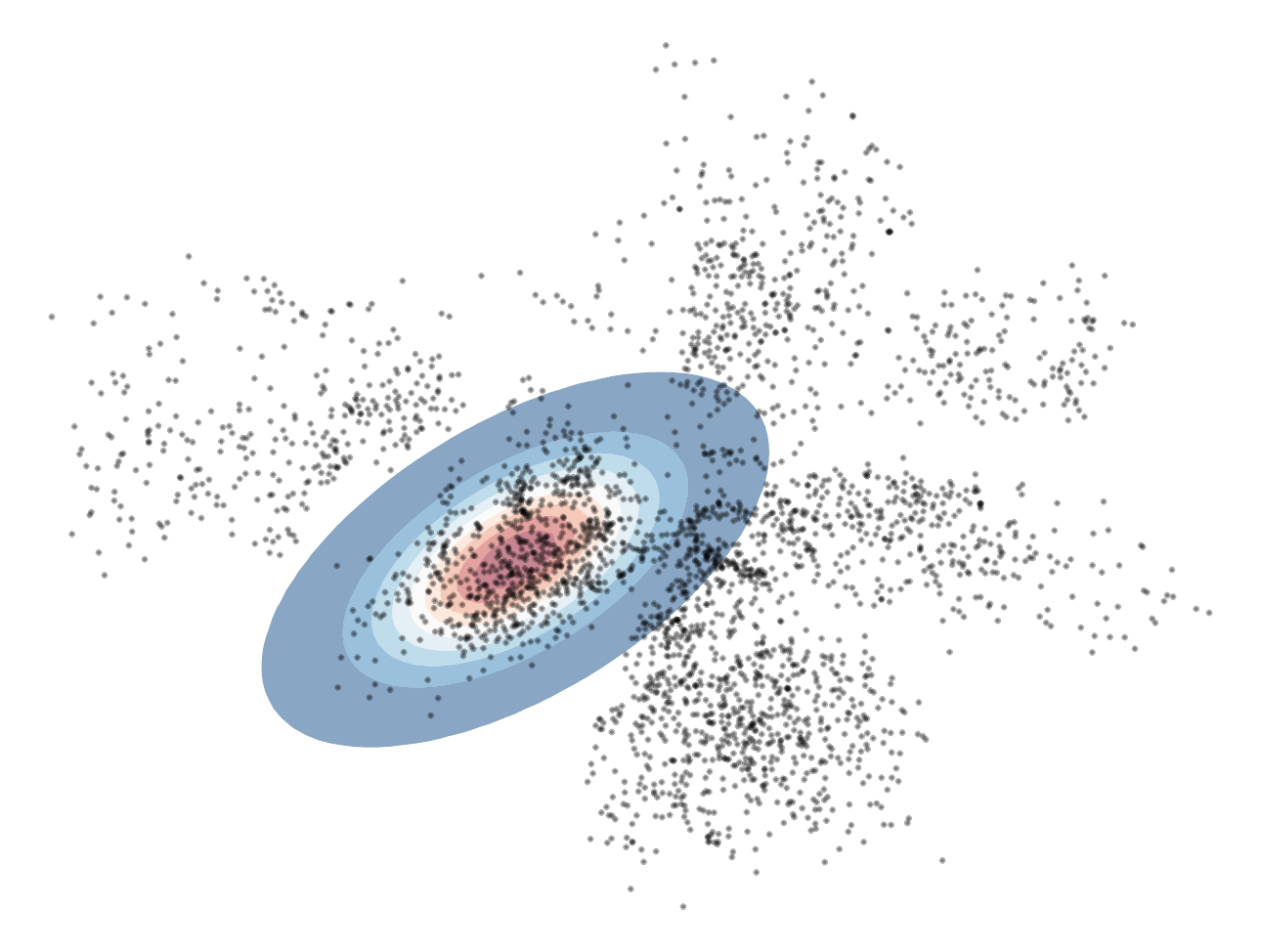}
	    \caption{Cluster 6}
	\end{subfigure}
    \caption{Cortex dataset. The individual components for the mixture of LANDs and the corresponding Gaussian mixture model. \emph{Top row}: The LAND component. \emph{Bottom row}: The corresponding GMM component. We see that the LAND components adapt to the training latent codes, uncovering the structure for each cluster.}
    \label{fig:app:cortex}
\end{figure*}

\begin{figure*}
\centering
    \begin{subfigure}{0.45\linewidth}
	    \includegraphics[width=1\linewidth]{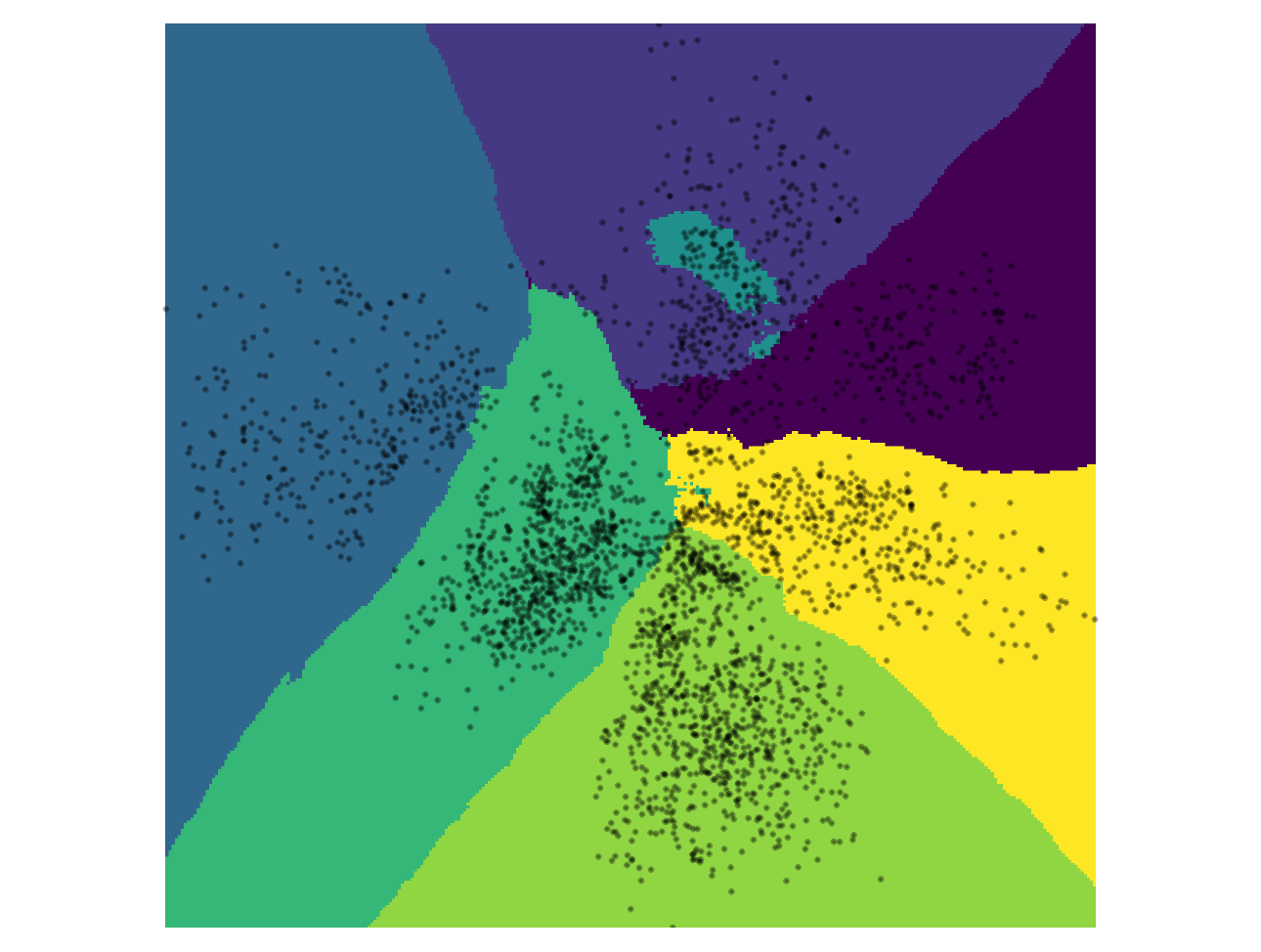}
	\end{subfigure}
	\begin{subfigure}{0.45\linewidth}
	    \includegraphics[width=1\linewidth]{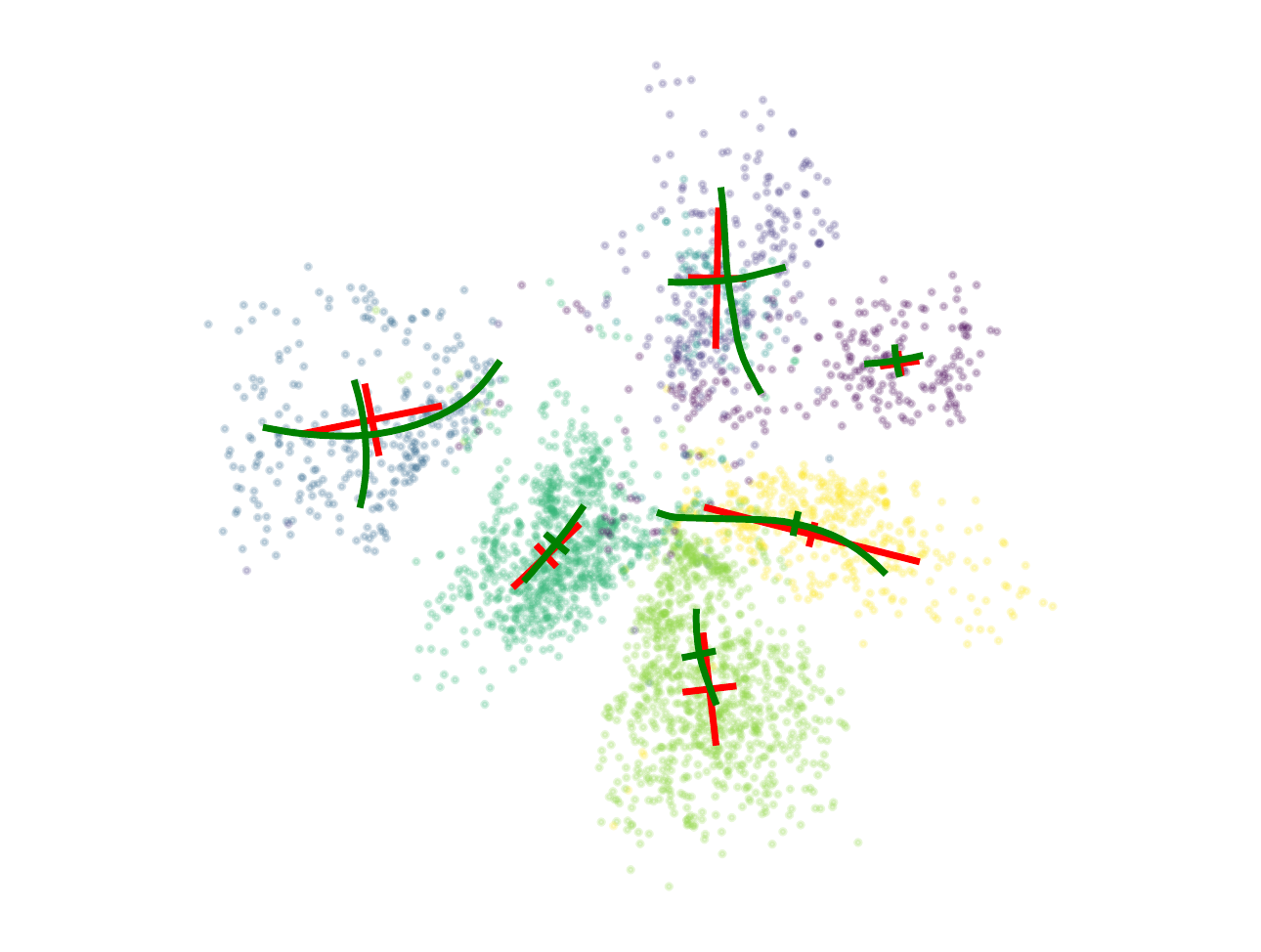}
	\end{subfigure}
	\caption{Cortex dataset. \emph{Left}: The true clusters of the latent codes in $\Z$. We used a $k$-NN classifier to approximate the true clusters with $k=21$. \emph{Right}: Comparing the principal geodesics computed using the mixture of LANDs (\emph{green}) with the linear eigenvectors computed using the GMM (\emph{red}) for each component. The principal geodesic can be seen as a form of \emph{local geometric disentaglement}, since these paths correspond to the highest variance on the data manifold.}
	\label{app:fig:kmeans_principal_geodesics}
\end{figure*}

\begin{figure*}
    \centering
    \begin{subfigure}{0.32\linewidth}
        \includegraphics[width=1\linewidth]{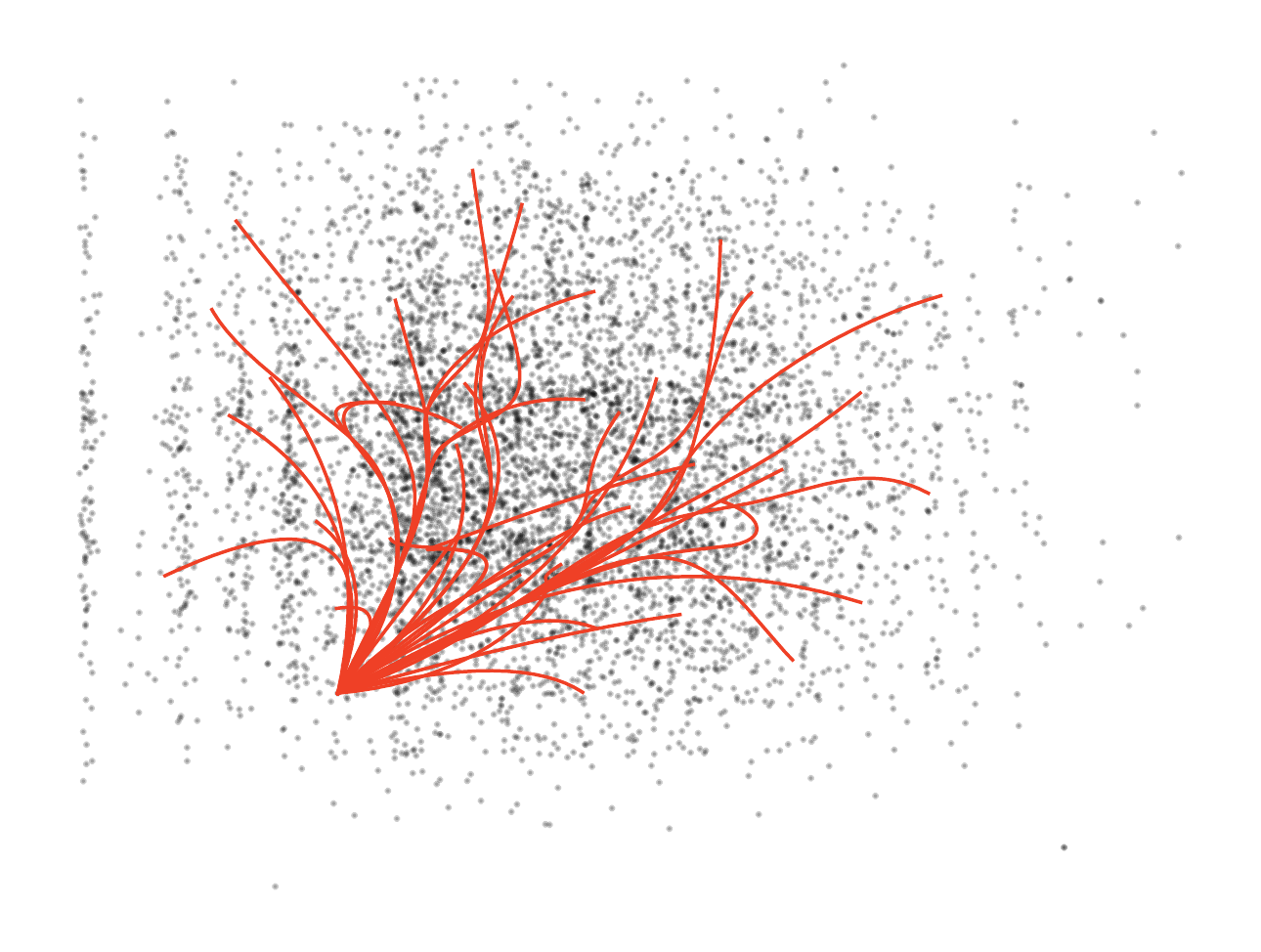}
        \caption{View $xy$-axis}
    \end{subfigure}
    ~
    \begin{subfigure}{0.32\linewidth}
        \includegraphics[width=1\linewidth]{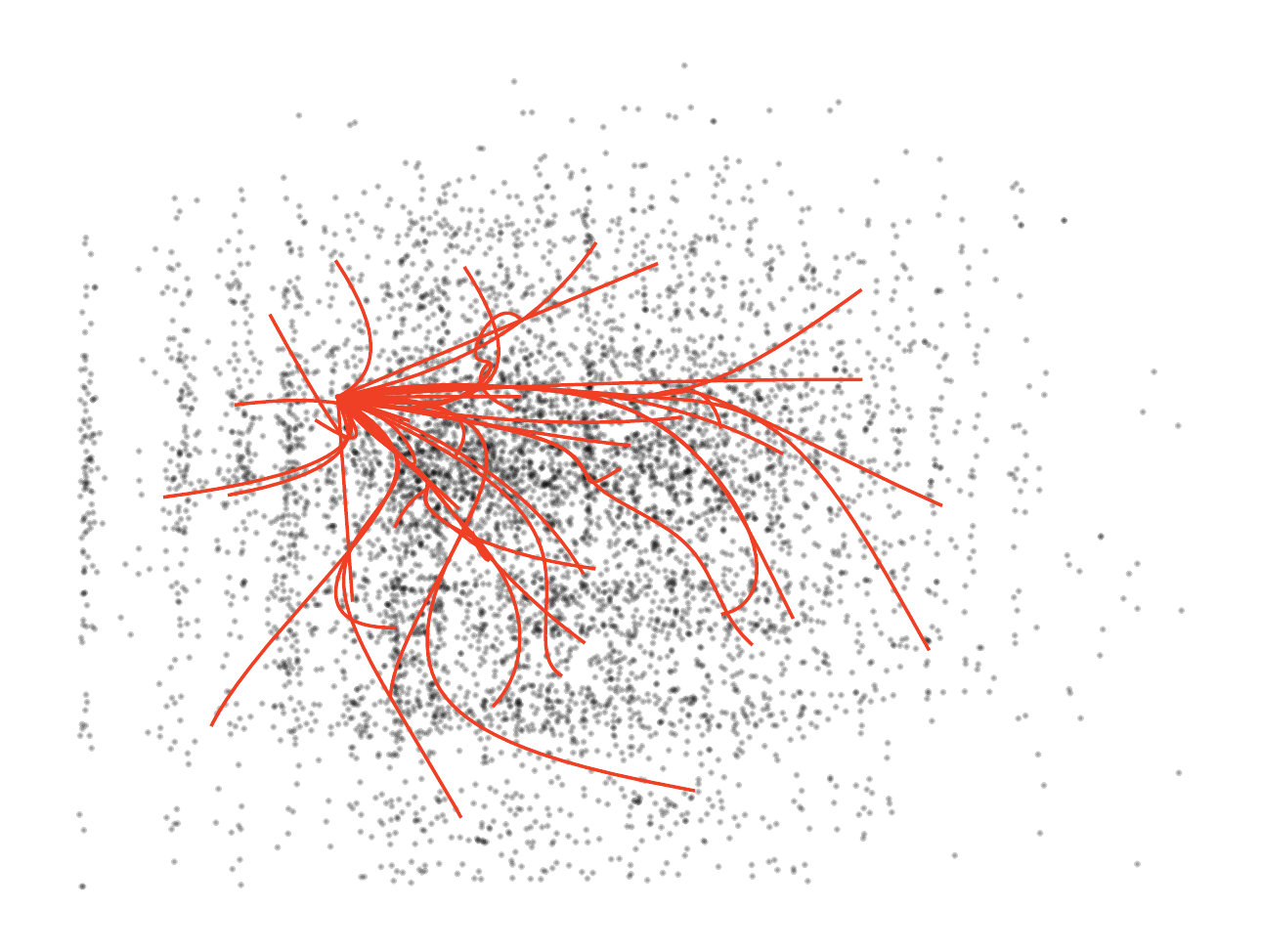}
        \caption{View $xz$-axis}
    \end{subfigure}
    ~
    \begin{subfigure}{0.32\linewidth}
        \includegraphics[width=1\linewidth]{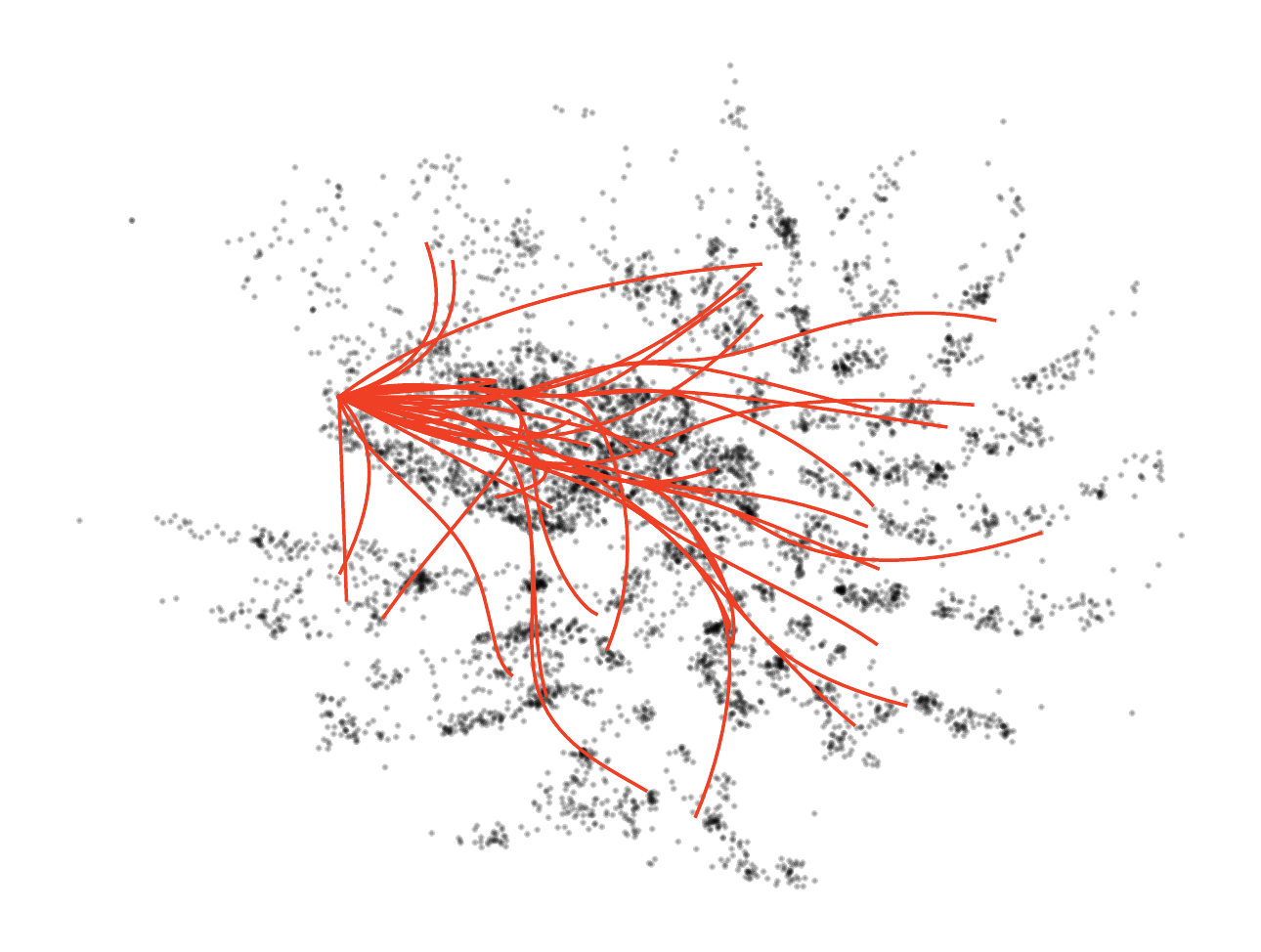}
        \caption{View $yz$-axis}
    \end{subfigure}
    \caption{Here we show the latent space of the molecule experiment from different views together with the corresponding geodesics.}
    \label{app:fig:chem_paths_planes}
\end{figure*}


\end{document}